\newcites{appendix}{Appendix References}
\newtheorem{theorem}{Theorem}
\newtheorem{definition}{Def.}
\newtheorem{lemma}{Lemma}
\newtheorem*{rep@theorem}{\rep@title}
\newcommand{\newreptheorem}[2]{%
\newenvironment{rep#1}[1]{%
 \def\rep@title{#2 \ref{##1}}%
 \begin{rep@theorem}}%
 {\end{rep@theorem}}}
\DeclareMathOperator*{\argmax}{arg\,max}
\def\mynicefrac#1#2{%
    \raise.5ex\hbox{$#1$}%
    \kern-.1em/\kern-.15em%
    \lower.25ex\hbox{$#2$}}
\let\originalleft\left
\let\originalright\right
\renewcommand{\left}{\mathopen{}\mathclose\bgroup\originalleft}
\renewcommand{\right}{\aftergroup\egroup\originalright}
\newif\if@gather@prefix 
\preto\place@tag@gather{% 
  \if@gather@prefix\iftagsleft@ 
    \kern-\gdisplaywidth@ 
    \rlap{\gather@prefix}% 
    \kern\gdisplaywidth@ 
  \fi\fi 
} 
\appto\place@tag@gather{% 
  \if@gather@prefix\iftagsleft@\else 
    \kern-\displaywidth 
    \rlap{\gather@prefix}% 
    \kern\displaywidth 
  \fi\fi 
  \global\@gather@prefixfalse 
} 
\preto\place@tag{% 
  \if@gather@prefix\iftagsleft@ 
    \kern-\gdisplaywidth@ 
    \rlap{\gather@prefix}% 
    \kern\displaywidth@ 
  \fi\fi 
} 
\appto\place@tag{% 
  \if@gather@prefix\iftagsleft@\else 
    \kern-\displaywidth 
    \rlap{\gather@prefix}% 
    \kern\displaywidth 
  \fi\fi 
  \global\@gather@prefixfalse 
} 
\newcommand*{\beforetext}[1]{% 
  \ifmeasuring@\else
  \gdef\gather@prefix{#1}% 
  \global\@gather@prefixtrue 
  \fi
} 
\newcommand{\eqdef}{{\triangleq}}
\newcommand{\transpose}{{\top}} % Alternatives: \intercal or \top.
\newcommand{\frob}{{\textnormal{\tiny{F}}}}
\newcommand{\trace}{{\textsc{Tr}}}
\newcommand{\timet}{\mathrm{T}}
\newcommand{\algo}{\textnormal{\textrm{SpanCCA}}\xspace}
\icmltitlerunning{Sparse CCA and Beyond}
\begin{document} 
\twocolumn[
\icmltitle{A Simple and Provable Algorithm for Sparse Diagonal CCA
%\\{with Applications in Neuroscience}
}

% It is OKAY to include author information, even for blind submissions: the style file will automatically remove it for you unless you've provided the [accepted] option to the icml2016 package.
\vspace{-3pt}
\icmlauthor{Megasthenis Asteris}{megas@utexas.edu}
\icmlauthor{Anastasios Kyrillidis}{anastasios@utexas.edu}
\icmladdress{Department of Electrical and Computer Engineering, The University of Texas at Austin}
%\icmlauthor{Oluwasanmi Koyejo}{sanmi@stanford.edu}
%\vspace{-1pt}
\icmlauthor{Oluwasanmi Koyejo}{sanmi@illinois.edu}
\icmladdress{Stanford University \& University of Illinois at Urbana-Champaign}
%\vspace{-1pt}
\icmlauthor{Russell Poldrack}{poldrack@stanford.edu}
\icmladdress{Department of Psychology,
Stanford University}
%\vspace{1in}

% You may provide any keywords that you find helpful for describing your paper; these are used to populate the "keywords" metadata in the PDF but will not be shown in the document
\icmlkeywords{Canonical Correlation Analysis, CCA, Sparse, Structure, Graph}

\vskip 0.3in
]

\begin{abstract}
Given two sets of variables, derived from a common set of samples,
sparse Canonical Correlation Analysis (CCA)
seeks linear combinations of a small number of variables in each set,
such that the induced \emph{canonical} variables are maximally correlated.
Sparse CCA is NP-hard.

We propose a novel combinatorial algorithm for sparse diagonal CCA, 
\textit{i.e.}, sparse CCA under the additional assumption that variables within each set are standardized and uncorrelated.
Our algorithm operates on a low rank approximation of the input data and its computational complexity scales linearly with the number of input variables.
It is simple to implement, and parallelizable.
In contrast to most existing approaches, 
our algorithm administers precise control on the sparsity of the extracted canonical vectors,
and comes with theoretical data-dependent global approximation guarantees, that hinge on the spectrum of the input data.
Finally, it can be straightforwardly adapted to other constrained variants of CCA enforcing structure beyond sparsity.

We empirically evaluate the proposed scheme
and apply it on a real neuroimaging dataset to investigate associations between brain activity and behavior measurements.
\end{abstract} 

% =============================================
% =============================================
% =============================================
\section{Introduction}
%!TEX root = cca.tex

One of the key objectives in cognitive neuroscience is to localize cognitive processes in the brain, and understand their role in human behavior, as measured by psychological scores and physiological measurements~\cite{posner1988localization}.
This mapping may be investigated by using functional neuroimaging techniques to measure brain activation during carefully designed experimental tasks~\cite{poldrack2006can}.
Following the experimental manipulation, a joint analysis of brain activation and behavioral measurements across subjects can reveal associations that exist between the two~\cite{berman2006studying}.

%Investigating associations between two sets of features on a common set of samples is a frequently encountered task.
Similarly, in genetics and molecular biology, several studies involve the joint analysis of multiple assays performed on a single group of patients~\cite{pollack2002microarray,morley2004genetic,stranger2007relative}.
If DNA variants and gene expression measurements are simultaneously available for a set of tissue samples,
a natural objective is to identify correlations between the expression levels of gene subsets and variation in the related genes.

Canonical Correlation Analysis (CCA)~\cite{hotelling1936relations} is a classic method for discovering such linear relationships across two sets of variables
and has been extensively used to investigate associations between multiple views of the same set of observations;
\textit{e.g.}, see~\cite{deleus2011functional,li2012group,smith2015positive} in neuroscience.
Given two datasets $\mathbf{X}$ and $\mathbf{Y}$ 
of dimensions $k \times m$ and $k \times n$, respectively, on $k$ common samples,
CCA seeks linear combinations of the original variables of each type that are maximally correlated.
More formally, the objective is to compute a pair of \emph{canonical vectors (or weights)}~$\mathbf{u}$
%\textcolor{magenta}{$\mathbf{w}_\mathsmaller{\mathrm{X}}$
and~$\mathbf{v}$
%$\mathbf{w}_\mathsmaller{\mathrm{Y}}$}
such that the \emph{canonical variables} $\mathbf{X}\mathbf{u}$ and $\mathbf{Y}\mathbf{v}$ achieve the maximum possible correlation:%
\footnote{
  We assume that the variables in $\mathbf{X}$ and $\mathbf{Y}$ are standardized,
  \textit{i.e.}, each column has zero mean and has been scaled to have unit standard deviation.
}%\textit{i.e.}, the ratio
%\begin{align}
%  \mynicefrac{
%    \text{cov}\left( \mathbf{X}\mathbf{u}, \mathbf{Y}\mathbf{v}\right)
%  }{
%    \text{var}\left( \mathbf{X}\mathbf{u} \right)^{\sfrac{1}{2}}
%    \cdot
%    \text{var}\left( \mathbf{Y}\mathbf{v} \right)^{\sfrac{1}{2}}
%  }.
%  \label{cca:true}
%\end{align}
\begin{align}
  \max_{\mathbf{u},~\mathbf{v} \neq 0}
  \frac{
    \mathbf{u}^{\transpose}\mathbf{\Sigma}_{\mathsmaller{\mathrm{XY}}}\mathbf{v}
  }{
    \left( \mathbf{u}^{\transpose}\mathbf{\Sigma}_{\mathsmaller{\mathrm{XX}}}\mathbf{u} \right)^{\sfrac{1}{2}}
    %\cdot
    \left( \mathbf{v}^{\transpose}\mathbf{\Sigma}_{\mathsmaller{\mathrm{YY}}}\mathbf{v} \right)^{\sfrac{1}{2}}
  }.
  \label{cca:true}
\end{align}
%\begin{align}
%    \text{cov}\left( \mathbf{X}\mathbf{u}, \mathbf{Y}\mathbf{v}\right)
%    \cdot
%    \text{var}\left( \mathbf{X}\mathbf{u} \right)^{-\sfrac{1}{2}}
%    \cdot
%    \text{var}\left( \mathbf{Y}\mathbf{v} \right)^{-\sfrac{1}{2}},
%\end{align}
The optimal canonical pair can be computed via a generalized eigenvalue decomposition involving the empirical estimates of the (cross-) covariance matrices in~\eqref{cca:true}. 

Imaging and behavioral measurements in cognitive neuroscience, similar to genomic data in bioinformatics, 
typically involve hundreds of thousands of variables with only a limited number of samples.
In that case, the CCA objective in~\eqref{cca:true} is ill-posed;
it is always possible to design canonical variables for which the factors in the denominator vanish, irrespective of the data.
Model regularization via constraints such as sparsity,
not only improves the interpretability of the extracted canonical vectors,
but is critical for enabling the recovery of meaningful results.

Sparse CCA seeks to maximize the correlation between subsets of variables of each type, while performing variable selection.
We consider the following sparse diagonal CCA (dCCA) problem, similar to~\cite{witten2009penalized}\footnote{\cite{witten2009penalized} consider a relaxation of~\eqref{cca:def} where the $\ell_{0}$ cardinality constraint on~$\mathbf{u}$ and~$\mathbf{v}$ is replaced by a threshold on the sparsity inducing $\ell_{1}$-norm.
%\sout{The important aspect here is the objective function which is derived from~\eqref{cca:true} treating the covariance matrices of each individual dataset as an identity matrix.}
}:
{\setlength\belowdisplayskip{5pt plus 2pt minus 4pt}%
\begin{align}
    \beforetext{\textcolor{gray}{(Sparse dCCA)}}
    \max_{
        \mathbf{u} \in \mathcal{U},
        \mathbf{v} \in \mathcal{V}
        }
    \mathbf{u}^{\transpose}
    \mathbf{\Sigma_{\mathsmaller{\mathrm{X}\mathrm{Y}}}}
    \mathbf{v},
    \label{cca:def}
\end{align}}
{where
\setlength\abovedisplayskip{0pt plus 2pt minus 0pt}%
\begin{equation}
\begin{aligned}
    &\mathcal{U}
    =
    \left\lbrace
        \mathbf{u} \in \mathbb{R}^{m}: \|\mathbf{u}\|_2 = 1, \|
        \mathbf{u}\|_0 \le s_{x}
    \right\rbrace,\\
    &\mathcal{V}
    =
    \left\lbrace
        \mathbf{v} \in \mathbb{R}^{n}: \|\mathbf{v}\|_2 = 1, 
        \|\mathbf{v}\|_0 \le s_{y}
    \right\rbrace,
\end{aligned}
\label{cca:sparse_constraints}
\end{equation}}%
for given parameters $s_{x}$ and $s_{y}$.
The $m \times n$ argument $\mathbf{\Sigma_\mathsmaller{\mathrm{X}\mathrm{Y}}} = \mathbf{X}^{\transpose}\mathbf{Y}$ is the empirical estimation of the cross-covariance matrix between the variables in the two views $\mathbf{X}$ and $\mathbf{Y}$, and it is the input to the optimization problem.
Note that besides the introduced sparsity requirement,~\eqref{cca:def} is obtained from~\eqref{cca:true} treating the covariance matrices~$\mathbf{\Sigma}_{\mathsmaller{\mathrm{XX}}}$ and $\mathbf{\Sigma}_{\mathsmaller{\mathrm{YY}}}$ as identity matrices,
which is common in high dimensions 
~\cite{dudoit2002comparison,tibshirani2003class}.
Equivalently, we implicitly assume that the original variables within each view are standardized and uncorrelated.

The maximization in~\eqref{cca:def} is a sparse Singular Value Decomposition (SVD) problem.
Disregarding the~$\ell_{0}$ cardinality constraint,
although the objective is non-convex, 
the optimal solution of~\eqref{cca:def} can be easily computed, as it coincides with the leading singular vectors of the input matrix $\mathbf{\Sigma_{\mathsmaller{\mathrm{X}\mathrm{Y}}}}$.
The constraint on the number of nonzero entries of $\mathbf{u}$ and $\mathbf{v}$, however, renders the problem NP-hard,
as it can be shown by a reduction to the closely related sparse PCA problem; see Appendix~\ref{sec:hardness}.
Several heuristics have been developed to obtain an approximate solution (see Section \ref{sec:related_work}).

Finally, we note that sparsity alone may be insufficient to obtain interpretable results;
genes participate in groups in biological pathways,
and brain activity tends to be localized forming connected components over an underlying network. 
If higher order structural information is available on a physical system, it is meaningful incorporate that in the optimization~\eqref{cca:def}.
We can then consider \emph{Structured} variants of diagonal CCA~\eqref{cca:def},
by appropriately modifying the feasible regions $\mathcal{U}, \mathcal{V}$ in~\eqref{cca:sparse_constraints} to reflect the desired structure.

\subsection{Our contributions}

We present a novel and efficient combinatorial algorithm for sparse diagonal CCA in \eqref{cca:def}. 
The main idea is to reduce the exponentially large search space of candidate supports of the canonical vectors, by exploring a low-dimensional principal subspace of the input data.
Our algorithm runs in polynomial time --in fact linear-- in the dimension of the input.
It administers precise control over the sparsity of the extracted canonical vectors and can extract components for multiple sparsity values on a single run.
It is simple and trivially parallelizable;
we empirically demonstrate that it achieves an almost linear speedup factor in the number of available processing units.

The algorithm is accompanied with theoretical data-dependent global approximation guarantees with respect to the CCA objective~\eqref{cca:def}; 
this is the first approach with this kind of global guarantees.
The latter depend on the rank $r$ of the low-dimensional space and the spectral decay of the input matrix $\mathbf{\Sigma_{\mathsmaller{\mathrm{X}\mathrm{Y}}}}$.
The main weakness is an exponential dependence of the computational complexity on the accuracy parameter $r$.
In practice, however, disregarding the theoretical approximation guarantees, our algorithm can be executed for any allowable time window.

%Beyond sparsity, our algorithm can be straightforwardly extended to other constrained variants of CCA (\textit{e.g.}, enforcing non-negativity or group sparsity)
% assuming the existence of suitable ``projection'' subroutines.
%we only require the existence of suitable subroutines for ``projecting'' a vector onto the non-convex feasible sets~$\mathcal{U}$ and~$\mathcal{V}$.
%Similar to the vanilla sparsity case, such exact or approximate projections exist for several constraints of interest such as non-negativity, smooth sparsity, group sparsity, or sparsity with patterns induced by underlying graph models.
%Our theoretical approximation guarantees are oblivious to the type of constraints and immediately extend to these cases.

Finally, we note that our approach is similar to that of~\cite{asteris2014nonnegative} for sparse PCA.
The latter has a similar formulation with~\eqref{cca:def} but is restricted to a positive semidefinite argument~$\mathbf{\Sigma_{\mathsmaller{\mathrm{X}\mathrm{Y}}}}$.
Our main technical contribution is extending those algorithmic ideas and developing theoretical approximation guarantees for the bilinear maximization~\eqref{cca:def}, where the input matrix can be arbitrary.

% ===========================================
% ===========================================
% ===========================================
\subsection{Related Work}
\label{sec:related_work}
%!TEX root = cca.tex

Sparse CCA is closely related to sparse PCA;
the latter can be formulated as in~\eqref{cca:def} but the argument $\mathbf{\Sigma_{\mathsmaller{\mathrm{X}\mathrm{Y}}}}$ is replaced by a positive semidefinite matrix.
There is a large volume of work on sparse PCA --see~\cite{zou2006sparse, amini2008high} and references therein--
but these methods cannot be generalized to the CCA problem.
One exception is the work of~\cite{d2007direct} where the authors discuss extensions to the ``non-square case''.
Their approach relies on a semidefinite relaxation.

References to sparsity in CCA date back to~\cite{thorndike1976correlational} and~\cite{thompson1984canonical} who identified the importance of sparsity regularization to obtain meaningful results
%~\footnote{
%\sout{\cite{thorndike1976correlational}:
%``[\dots], the fewer variables there are in a canonical analysis which yields a correlation of a given magnitude, the greater is the likelihood that that correlation is due to real, population-wide sources of co-variation, rather than to sample-specific sources.''.}}. 
However, no specific algorithm was proposed.
Several subsequent works considered a penalized version of the CCA problem in~\eqref{cca:true}, typically under a Langrangian formulation involving a convex relaxation of the $\ell_{0}$ cardinality constraint~\cite{torres2007identifying, hardoon2007sparse, hardoon2011sparse}.
%There, the authors propose solving \eqref{orig_cca:def} via Lagrangian relaxations -- using sparsity regularizers -- as well as mapping one of the multivariates on a "dual" (kernelized) space. 
\cite{chu2013sparse} characterize the solutions of the unconstrained problem and formulate convex $\ell_{1}$ minimization problems to seek sparse solutions in that set.
\cite{sriperumbudur2009dc} consider a constrained generalized eigenvalue problem, which partially captures sparse CCA, and frame it as a difference-of-convex functions program.
\cite{wiesel2008greedy} proposed an efficient greedy procedure that gradually expands the supports of the canonical vectors. % by greedily selecting variables.
Unlike other methods, this greedy approach allows precise control of the sparsity of the extracted components.

%can be solved as a sequence of convex problems via the majorization-minimization method.
%An important feature of the algorithm is that exhibits \emph{global convergence guarantees} to a stationary point; 
%however, no guarantees regarding the quality of the solution are provided.

\cite{witten2009penalized,parkhomenko2009sparse} formulate sparse CCA as the optimization~\eqref{cca:def} and in particular considered an $\ell_{1}$ relaxation of the $\ell_{0}$ cardinality constraint.
They suggest an alternating minization approach exploiting the bi-convex nature of the relaxed problem, solving a lasso regression in each step.
The same approach is followed in~\cite{waaijenborg2008quantifying} combining $\ell_2$ and $\ell_1$ regularizers similarly to the elastic net approach for sparse PCA~\cite{zou2005regularization}.
Similar approaches have appeared in the literature for \emph{sparse SVD} \cite{yang2011sparse, lee2010biclustering}.
A common weakness in these approaches is the lack of precise control over sparsity: the mapping between the regularization parameters and the number of nonzero entries in the extracted components is highly nonlinear. 
Further, such methods usually lack provable non-asymptotic approximation guarantees.
% \textcolor{magenta}{only \cite{yang2011sparse} provides \emph{asymptotic} sampling complexity guarantees.}
Beyond sparsity, \cite{witten2009penalized,witten2009extensions} discuss alternative penalizations such as fused lasso to impose additional structure,
while~\cite{chen2012structured} introduce a group-lasso to promote sparsity with structure.

Finally, although a review of CCA applications 
%--even restricted in neuroscience-- 
is beyond the scope of this manuscript,
%we note 
we simply note that CCA is considered a promising approach for scientific research as evidenced by several recent works in the literature,
\textit{e.g}, in neuroscience
\cite{rustandi2009integrating, deleus2011functional,li2012group,lin2014correspondence,smith2015positive}.

% =============================================
% =============================================
% =============================================
\section{\algo:~An~Algorithm~for~Sparse Diagonal CCA}
%!TEX root = cca.tex

%\sout{We present \algo, a novel and simple algorithm for sparse CCA~\eqref{cca:def} with global approximation guarantees.}
We begin this section with a brief discussion of the problem and the key ideas behind our approach. 
Next, we provide an overview of \algo and the accompanying approximation guarantees and conclude with a short analysis.

\subsection{Intuition}
The hardness of the sparse CCA problem \eqref{cca:def} lies in the detection of the optimal supports for the canonical vectors.
In the \emph{unconstrained} problem,
where only a unit $\ell_{2}$-norm constraint is imposed on $\mathbf{u}$ and $\mathbf{v}$, the optimal CCA pair coincides with the top singular vectors of the input argument~$\mathbf{\Sigma_{\mathsmaller{\mathrm{X}\mathrm{Y}}}}$.
In the sparse variant, 
%we further restrict the feasible region to vectors $\mathbf{u}$ and $\mathbf{v}$ with at most $s_{x}$ and $s_{y}$ nonzero entries, respectively.
%for some given parameters $s_{x} \le m$ and $s_{y} \le n$.
if the optimal supports for $\mathbf{u}$ and $\mathbf{v}$ were known, computing the optimal solution would be straightforward:
the nonzero subvectors of $\mathbf{u}$ and $\mathbf{v}$ would coincide with the leading singular vectors of the $s_{x} \times s_{y}$ submatrix of $\mathbf{\Sigma_{\mathsmaller{\mathrm{X}\mathrm{Y}}}}$, indexed by the two support sets. 
Hence, the bottleneck lies in determining the optimal supports for $\mathbf{u}$ and $\mathbf{v}$.

\textbf{Exhaustive search}\,
A straightforward, brute-force approach is to exhaustively consider all possible supports for $\mathbf{u}$ and $\mathbf{v}$;
for each candidate pair solve the unconstrained CCA problem on the restricted input, and determine the supports for which the objective~\eqref{cca:def} is maximized.
Albeit optimal, this procedure is intractable as the number of candidate supports 
$\binom{m}{s_{x}}\binom{n}{s_{y}}$ is overwhelming even for small values of $s_{x}$ and $s_{y}$. % which will in general can be proportional to the ambient dimension $m,n$.

\textbf{Thresholding}\, 
On the other hand, a feasible pair of sparse canonical vectors $\mathbf{u}$, $\mathbf{v}$ can be extracted by {hard-thresholding} the solution to the unconstrained problem,
\textit{i.e.},
computing the leading singular vectors $\mathbf{u}$, $\mathbf{v}$ of~$\mathbf{\Sigma_{\mathsmaller{\mathrm{X}\mathrm{Y}}}}$, suppressing to zero all but the $s_{x}$ and $s_{y}$ largest in magnitude entries, respectively, and rescaling to obtain a unit $\ell_{2}$-norm solution.
Essentially, this heuristic resorts to unconstrained CCA for a \emph{guided} selection of the sparse support.

\textbf{Proposed method}\,
Our sparse CCA algorithm covers the ground between these two approaches.
Instead of relying on the solution to the unconstrained problem for the choice of the sparse supports, 
it explores a principal subspace of the input matrix $\mathbf{\Sigma_{\mathsmaller{\mathrm{X}\mathrm{Y}}}}$, spanned by its leading~${r \ge 1}$ singular vector pairs.
For~${r=1}$, its output coincides with that of the thresholding approach,
while for~${r = \min\lbrace{m, n\rbrace}}$ it approximates that of exhaustive search.

Effectively, we solve \eqref{cca:def} on a rank-$r$ approximation of the input~$\mathbf{\Sigma_{\mathsmaller{\mathrm{X}\mathrm{Y}}}}$.
The key observation is that the low inner dimension of the argument matrix can be exploited to substantially reduce the search space: 
our algorithm identifies an (approximately) optimal pair of supports for the low rank sparse CCA problem, without considering the entire collection of possible supports of cardinalities~$s_{x}$ and~$s_{y}$.
%The same observations extend to other constrained variants of CCA beyond sparsity (\textit{e.g.}, group or otherwise structured sparsity).

\subsection{Overview and Guarantees}
\label{sec:overview_guarantees}
\algo is outlined in Algorithm~\ref{algo:cca}.
The first step is to compute a rank-$r$ approximation $\mathbf{B}$ of the input $\mathbf{\Sigma_{\mathsmaller{\mathrm{X}\mathrm{Y}}}}$, 
where $r$ is an accuracy input parameter,
%\sout{The low rank surrogate matrix can be easily obtained} 
via the truncated singular value decomposition (SVD) of $\mathbf{\Sigma_{\mathsmaller{\mathrm{X}\mathrm{Y}}}}$.\footnote{
%In the case of very high dimensionality, where standard Lanczos-based methods may be computationally prohibitive,
The low-rank approximation can be computed
using faster randomized approaches; see~\cite{halko2011finding}.
Here, for simplicity, we consider the exact case.}
From that point on, the algorithm operates exclusively on~$\mathbf{B}$ effectively solving a low-rank sparse diagonal CCA problem:
\begin{align}
   \max_{
        \mathbf{u} \in \mathcal{U},
        \mathbf{v} \in \mathcal{V}} \quad
   &   
   \mathbf{u}^\transpose \mathbf{B} \mathbf{v},
   \label{cca:on-low-rank}
\end{align}
where~${\mathcal{U} \subseteq \mathbb{S}_{2}^{m-1}}$ and~${\mathcal{V} \subseteq \mathbb{S}_{2}^{n-1}}$ are defined in~\eqref{cca:sparse_constraints}.
As we discuss in the next section, 
we can consider other constrained variants of CCA on potentially arbitrary, non-convex sets.
We do require, however, that there exist procedures to (at least approximately) solve the maximizations
\begin{align}
  & \mathsf{P}_{\mathcal{U}}(\mathbf{a})
   \eqdef
   \argmax_{\mathbf{u} \in \mathcal{U}}\;
   \mathbf{a}^{\transpose}\mathbf{u},
   \label{lowrank:solve-for-x}
   \\
%\end{align}
%and
%\begin{align}
  & \mathsf{P}_{\mathcal{V}}(\mathbf{b})
   \eqdef
   \argmax_{\mathbf{v} \in \mathcal{V}}\;
   \mathbf{b}^{\transpose}\mathbf{v},
   \label{lowrank:solve-for-y}
\end{align}
for any given vectors ${\mathbf{a} \in \mathbb{R}^{m \times 1}}$ and ${\mathbf{b} \in \mathbb{R}^{n \times 1}}$.
Fortunately, this is the case for the sets of sparse unit $\ell_{2}$-norm vectors.
Algorithm~\ref{algo:sparse} outlines an efficient $O(m)$ procedure 
that given $\mathbf{a} \in \mathbb{R}^{m\times 1}$ computes an exact solution to~\eqref{lowrank:solve-for-x} with at most $s \le m$ nonzero entries:
first it determines the~$s$ largest (in magnitude) entries of $\mathbf{a}$ (breaking ties arbitrarily), it zeroes out the remaining entries and re-scales the output to meet the $\ell_{2}$-norm requirement.
%As a side note, observe that due to the unit $\ell_{2}$-norm constraint in $\mathcal{U}$, the operation $\mathsf{P}_{\mathcal{U}}(\cdot)$ effectively computes a projection\footnote{%
%This connection relies on the fact that $\mathcal{U}$ and $\mathcal{V}$ are restricted to unit $\ell_{2}$-norm vectors.
%Our algorithm can accommodate other types of constraints but the theoretical guarantees will have to be adopted accordingly.
%} %end footnote
%onto the non-convex set $\mathcal{U}$ under the $\ell_{2}$-norm.
%%In other words, our algorithm requires efficient and perhaps approximate procedures for projecting on the feasible sets $\mathcal{U}$ and $\mathcal{V}$.
%As we discuss in subsequent sections, such exact or approximate projection algorithms exist for several interesting constraints beyond sparsity such as smooth or group sparsity \textcolor{red}{cite? Taso do you have a paper projecting group sparsity and $\ell_{2}$ norm?}, sparsity patterns guided by underlying graphs~\cite{hegde2015nearly,asteris2015pathpca}. 

The main body of Algorithm~\ref{algo:cca} consists of a single iteration.
In the $i$th round, it independently samples a point, or equivalently direction, $\mathbf{c}_{i}$ from the $r$-dimensional unit $\ell_{2}$ sphere
and uses it to appropriately sample a point $\mathbf{a}_{i}$ in the range of~$\mathbf{B}$.
The latter is then used to compute a feasible solution pair $\mathbf{u}_{i}$, $\mathbf{v}_{i}$ via a two-step procedure:
first the algorithm computes $\mathbf{u}_{i}$ by ``projecting'' $\mathbf{a}_{i}$ onto $\mathcal{U}$ invoking Alg.~\ref{algo:sparse} as a subroutine to solve maximization~\eqref{lowrank:solve-for-x},
and then computes $\mathbf{v}_{i}$ by projecting $\mathbf{b}_{i}=\mathbf{B}^{\transpose}\mathbf{u}_{i}$ onto $\mathcal{V}$ in a similar fashion.
The algorithm repeats this procedure for $\mathrm{T}$ rounds and outputs the pair that achieves the maximum objective value in~\eqref{cca:on-low-rank}.
{We emphasize that consecutive rounds are completely independent and can be executed in parallel.}

\begin{algorithm}[t!]
   \caption{\algo}
   \label{algo:cca}
   {
   \begin{algorithmic}[1]
   \INPUT:
    ${\mathbf{\Sigma_{\mathsmaller{\mathrm{X}\mathrm{Y}}}}}$, a real $m \times n$ matrix.\\
    \quad\,\,\,\,\,$r \in \mathbb{N}_{+}$, the rank of the approximation to be used.\\
    \quad\,\,\,\,\,$\mathrm{T} \in \mathbb{N}_{+}$, the number of samples/iterations.
   \OUTPUT $\mathbf{u}_{\sharp} \in \mathcal{U}$, $\mathbf{v}_{\sharp} \in \mathcal{V}$
   %\hfill \COMMENT{See Lemma~\ref{low-rank-solver-guarantees}.}
   %\STATE $\mathcal{C} \gets \lbrace \rbrace$ 
   %\hfill \COMMENT{Candidate solutions}
   \STATE 
   ${
      \mathbf{U}, \mathbf{\Sigma}, \mathbf{V}
      \leftarrow
      \texttt{SVD}({\mathbf{\Sigma_{\mathsmaller{\mathrm{X}\mathrm{Y}}}}}, r)
   }$
   \hfill \COMMENT{
      ${
         \mathbf{B} \gets \mathbf{U}\mathbf{\Sigma}\mathbf{V}^{\transpose}
      }$
   }
   \FOR{$i=1, \hdots, T$}
      \STATE $\mathbf{c}_{i} \gets \texttt{randn(r)}$
      \hfill \COMMENT{$\sim{}\mathcal{N}(\mathbf{0}, \mathbf{I}_{r \times r})$}
      \STATE $\mathbf{c}_{i} \gets \mathbf{c}_{i} / \|\mathbf{c}_{i}\|_{2}$
      \STATE  $\mathbf{a}_{i} \gets {\mathbf{U}}{\mathbf{\Sigma}}\mathbf{c}_{i}$
      \hfill \COMMENT{$\mathbf{a}_{i} \in \mathbb{R}^{m}$}
      \STATE $\mathbf{u}_{i} \gets \argmax_{\mathbf{u} \in \mathcal{U}} \mathbf{a}_{i}^{\transpose}\mathbf{u}$
      \hfill \COMMENT{$\mathsf{P}_{\mathcal{U}}(\cdot)$}
      \STATE $\mathbf{b}_{i} \gets \mathbf{V}\mathbf{\Sigma}\mathbf{U}^{\transpose}\mathbf{u}_{i}$ 
      \hfill \COMMENT{$\mathbf{b}_{i} \in \mathbb{R}^{n}$}
      \STATE $\mathbf{v}_{i} \gets \argmax_{\mathbf{v} \in \mathcal{V}} \mathbf{b}_{i}^{\transpose}\mathbf{v}$
      \hfill \COMMENT{$\mathsf{P}_{\mathcal{V}}(\cdot)$}
      %\STATE $\mathcal{C} \gets \mathcal{C} \cup \bigl\lbrace{(\mathbf{u}_{i}, \mathbf{v}_{i})}\bigr\rbrace$
      \STATE $\text{obj}_{i} \gets \mathbf{b}_{i}^{\transpose}\mathbf{v}_{i}$
   \ENDFOR
   \STATE $
        i_{0}
        \gets
        \arg\max_{\substack{  i \in [T] } }
        \text{obj}_{i}$
    \STATE
     $(\mathbf{u}_{\sharp}, \mathbf{v}_{\sharp})
      \gets
      (\mathbf{u}_{i_{0}}, \mathbf{v}_{i_{0}} )$
   \end{algorithmic}
   }
\end{algorithm}
%--------------------------------------------------------------------
%--------------------------------------------------------------------
\begin{algorithm}[t!]
    \caption{
        \small{
        $\mathsf{P}_{\mathcal{U}}(\cdot)$ for 
        $\mathcal{U}
        \eqdef
        \bigl\lbrace \mathbf{u} \in \mathbb{S}_{2}^{m-1}: \|\mathbf{u}\|_{0} \le s\bigr\rbrace
        $}
    }
    \label{algo:sparse}
    \begin{algorithmic}[1]
    \INPUT:
        $\mathbf{a} \in \mathbb{R}^{d \times 1}$.
    \OUTPUT
        $
            \mathbf{u}_{0} = \argmax_{\mathbf{u} \in \mathcal{U}} \mathbf{a}^{\transpose}\mathbf{u}$
   \STATE $\mathbf{u}_{0} \gets \mathbf{0}_{d \times 1}$
   \STATE $t \gets $ index of $s$th order element of $\texttt{abs}(\mathbf{a})$
   \STATE $\mathcal{I} \gets \lbrace i: \lvert a_{i} \rvert \ge \lvert a_{t} \rvert\rbrace$
   \STATE $\mathbf{u}_{0}[i] \gets \mathbf{a}[i], \forall i \in \mathcal{I}$
   \STATE $\mathbf{u}_{0} \gets \mathbf{u}_{0} / \|\mathbf{u}_{0}\|_{2}$
   \end{algorithmic}
\end{algorithm}
%----------------------------------------------
%----------------------------------------------

For a sufficiently large number~$\mathrm{T}$ of rounds (or samples)
the procedure guarantees that the output pair will be approximately optimal in terms of the objective for the low-rank problem~\eqref{cca:on-low-rank}.
That, it turn, translates to approximation guarantees for the full-rank sparse CCA problem~\eqref{cca:def}:
\begin{theorem}
\label{thm:main-algo-guarantees}
For any real ${m \times n}$ matrix~$\mathbf{\Sigma_{\mathsmaller{\mathrm{X}\mathrm{Y}}}}$,
$\epsilon \in (0, 1)$, and $r \le \max\lbrace m, n\rbrace$,
Algorithm~\ref{algo:cca} with input $\mathbf{\Sigma_{\mathsmaller{\mathrm{X}\mathrm{Y}}}}$, $r$,
and $\mathrm{T}=\widetilde{O}\bigl(2^{r \cdot \log_{2}(\sfrac{2}{\epsilon})} \bigr)$
outputs $\mathbf{u}_{\sharp} \in \mathcal{U}$
and $\mathbf{v}_{\sharp} \in \mathcal{V}$ such that
   \begin{align}
     \mathbf{u}_{\sharp}^{\transpose}
     \mathbf{\Sigma_{\mathsmaller{\mathrm{X}\mathrm{Y}}}}
     \mathbf{v}_{\sharp}
     \ge
     \mathbf{u}_{\star}^{\transpose} \mathbf{\Sigma_{\mathsmaller{\mathrm{X}\mathrm{Y}}}} \mathbf{v}_{\star}
     - 
     \epsilon \cdot \sigma_{1}(\mathbf{\Sigma_{\mathsmaller{\mathrm{X}\mathrm{Y}}}})
     -
     2 \sigma_{r+1}(\mathbf{\Sigma_{\mathsmaller{\mathrm{X}\mathrm{Y}}}}),
     \nonumber
   \end{align}
%   where
%   \begin{align}
%   \left(
%      \mathbf{u}_{\star}, \mathbf{v}_{\star}
%   \right)
%   \eqdef
%   \argmax_{\mathbf{u} \in \mathcal{U}, \mathbf{v} \in \mathcal{V}} 
%      \mathbf{u}^{\transpose}\mathbf{\Sigma_{\mathsmaller{\mathrm{X}\mathrm{Y}}}}\mathbf{v},
%   \nonumber
%   \end{align}
   in time 
   $
   \timet_{\mathsmaller{\mathsf{SVD}}}(r) + O\mathopen{}\bigl(\mathrm{T} \cdot \bigl(\timet_{\mathcal{U}}+\timet_{\mathcal{V}}+r \cdot \max\lbrace m, n\rbrace \bigr)\bigr)
   $.
\end{theorem}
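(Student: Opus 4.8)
The plan is to split the error into two independent contributions: the bias from replacing $\mathbf{\Sigma_{\mathsmaller{\mathrm{X}\mathrm{Y}}}}$ by its rank-$r$ truncation $\mathbf{B}$, which will produce the $2\sigma_{r+1}$ term, and the suboptimality of the random sampling loop on the low-rank problem~\eqref{cca:on-low-rank}, which will produce the $\epsilon\cdot\sigma_1$ term; throughout I write $\sigma_i\eqdef\sigma_i(\mathbf{\Sigma_{\mathsmaller{\mathrm{X}\mathrm{Y}}}})$ and note that $\mathbf{B}$ retains the top singular values, so $\|\mathbf{U}\mathbf{\Sigma}\|_2=\sigma_1$ and $\|\mathbf{\Sigma_{\mathsmaller{\mathrm{X}\mathrm{Y}}}}-\mathbf{B}\|_2=\sigma_{r+1}$. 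For the bias, I would observe that for any feasible $\mathbf{u}\in\mathcal{U}$, $\mathbf{v}\in\mathcal{V}$ we have $\lvert\mathbf{u}^{\transpose}(\mathbf{\Sigma_{\mathsmaller{\mathrm{X}\mathrm{Y}}}}-\mathbf{B})\mathbf{v}\rvert\le\|\mathbf{\Sigma_{\mathsmaller{\mathrm{X}\mathrm{Y}}}}-\mathbf{B}\|_2=\sigma_{r+1}$, since $\|\mathbf{u}\|_2=\|\mathbf{v}\|_2=1$. Applying this once to the full-rank optimizer $(\mathbf{u}_\star,\mathbf{v}_\star)$ and once to the algorithm's output shows that it suffices to solve~\eqref{cca:on-low-rank} to within an additive $\epsilon\sigma_1$, at the cost of a further $2\sigma_{r+1}$ in the translation back to~\eqref{cca:def}.

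The core of the argument is that the sampling loop solves~\eqref{cca:on-low-rank} near-optimally. First I would reduce~\eqref{cca:on-low-rank} to a maximization over the sphere $\mathbb{S}_2^{r-1}$: for $\mathbf{c}\in\mathbb{S}_2^{r-1}$ set $\mathbf{a}(\mathbf{c})=\mathbf{U}\mathbf{\Sigma}\mathbf{c}$, $\mathbf{u}(\mathbf{c})=\mathsf{P}_{\mathcal{U}}(\mathbf{a}(\mathbf{c}))$, and $\mathbf{v}(\mathbf{c})=\mathsf{P}_{\mathcal{V}}(\mathbf{B}^{\transpose}\mathbf{u}(\mathbf{c}))$ — exactly the pair produced in round $i$ when $\mathbf{c}_i=\mathbf{c}$ — with objective $f(\mathbf{c})=\mathbf{u}(\mathbf{c})^{\transpose}\mathbf{B}\mathbf{v}(\mathbf{c})$. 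Letting $(\mathbf{u}_\star^{\mathbf{B}},\mathbf{v}_\star^{\mathbf{B}})$ be an optimizer of~\eqref{cca:on-low-rank}, I would set $\mathbf{c}_\star=\mathbf{V}^{\transpose}\mathbf{v}_\star^{\mathbf{B}}/\|\mathbf{V}^{\transpose}\mathbf{v}_\star^{\mathbf{B}}\|_2$; the degenerate case $\mathbf{B}\mathbf{v}_\star^{\mathbf{B}}=\mathbf{0}$ forces the optimum to be $0$ and is trivial. Because $\mathsf{P}_{\mathcal{U}}$ is scale-invariant, $\mathbf{a}(\mathbf{c}_\star)$ is parallel to $\mathbf{B}\mathbf{v}_\star^{\mathbf{B}}$, so $\mathbf{u}(\mathbf{c}_\star)$ is the exact best response to $\mathbf{v}_\star^{\mathbf{B}}$ and $f(\mathbf{c}_\star)$ equals the low-rank optimum.

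The crux is a stability estimate: any $\mathbf{c}$ with $\|\mathbf{c}-\mathbf{c}_\star\|_2\le\epsilon/2$ already yields $f(\mathbf{c})\ge f(\mathbf{c}_\star)-\epsilon\sigma_1$, even though the support selected by $\mathsf{P}_{\mathcal{U}}$ may change discontinuously with $\mathbf{c}$. I would drive this from the fact that the support function $\phi_{\mathcal{U}}(\mathbf{a})=\max_{\mathbf{u}\in\mathcal{U}}\mathbf{a}^{\transpose}\mathbf{u}$ is convex and $1$-Lipschitz in $\mathbf{a}$, so $\mathbf{c}\mapsto\phi_{\mathcal{U}}(\mathbf{a}(\mathbf{c}))$ is $\sigma_1$-Lipschitz since $\|\mathbf{U}\mathbf{\Sigma}\|_2=\sigma_1$. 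Chaining the best-response inequality $f(\mathbf{c})\ge\mathbf{u}(\mathbf{c})^{\transpose}\mathbf{B}\mathbf{v}_\star^{\mathbf{B}}$ with this Lipschitz bound and $\|\mathbf{a}(\mathbf{c})-\mathbf{a}(\mathbf{c}_\star)\|_2\le\sigma_1\|\mathbf{c}-\mathbf{c}_\star\|_2$ (via Cauchy--Schwarz), and using $\|\mathbf{V}^{\transpose}\mathbf{v}_\star^{\mathbf{B}}\|_2\le1$ to absorb the rescaling, gives the estimate. The point is that the loss is controlled by the $\sigma_1$-Lipschitz \emph{objective}, not by the (possibly jumping) support.

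It then remains to bound the probability that the loop hits the cap $\{\mathbf{c}\in\mathbb{S}_2^{r-1}:\|\mathbf{c}-\mathbf{c}_\star\|_2\le\epsilon/2\}$. Its normalized measure on $\mathbb{S}_2^{r-1}$ is bounded below by a quantity of order $(\epsilon/2)^{r}=2^{-r\log_2(2/\epsilon)}$, so a Chernoff/union bound over the $\mathrm{T}$ i.i.d.\ samples shows that $\mathrm{T}=\widetilde{O}(2^{r\log_2(2/\epsilon)})$ rounds land in the cap except with negligible probability, whereupon $\mathbf{u}_\sharp^{\transpose}\mathbf{B}\mathbf{v}_\sharp\ge f(\mathbf{c}_\star)-\epsilon\sigma_1$. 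Composing with the low-rank translation of the first paragraph yields the stated guarantee, and the runtime follows by tallying one rank-$r$ SVD plus, per round, the two oracle calls $\mathsf{P}_{\mathcal{U}},\mathsf{P}_{\mathcal{V}}$ (costs $\timet_{\mathcal{U}},\timet_{\mathcal{V}}$) and the $O(r\cdot\max\{m,n\})$ matrix-vector products. I expect the main obstacle to be the stability estimate together with pinning down the cap-measure constant that fixes the exponent $r\log_2(2/\epsilon)$ in $\mathrm{T}$; everything else is bookkeeping.
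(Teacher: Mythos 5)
Your proposal is correct and follows essentially the same route as the paper: the $2\sigma_{r+1}(\mathbf{\Sigma_{\mathsmaller{\mathrm{X}\mathrm{Y}}}})$ term comes from passing to the rank-$r$ truncation exactly as in the paper's proof, and your stability estimate on the sphere parametrization is the two-step chain of best-response inequalities in Lemma~\ref{core-rank-r-guarantees} (each step losing $\tfrac{\epsilon}{2}\sigma_{1}$, with $\rho=\|\mathbf{V}^{\transpose}\mathbf{v}_{\star}^{\mathbf{B}}\|_{2}\le 1$ absorbed at the end) repackaged via the Lipschitzness of the support function. The only cosmetic difference is that you lower-bound the probability of hitting the single cap around $\mathbf{c}_{\star}$ directly, whereas the paper argues that the $\mathrm{T}$ samples form an $\sfrac{\epsilon}{2}$-net of all of $\mathbb{S}_{2}^{r-1}$; both yield the same $\mathrm{T}$.
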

Here, 
$\mathbf{u}_{\star}$ and $\mathbf{v}_{\star}$ denote the unknown optimal pair of canonical vectors satisfying the desired constraints. 
$\timet_{\mathsmaller{\mathsf{SVD}}}(r)$ denotes the time to compute the rank-$r$ truncated SVD of the input~$\mathbf{\Sigma_{\mathsmaller{\mathrm{X}\mathrm{Y}}}}$, while $\timet_{\mathcal{U}}$ and $\timet_{\mathcal{V}}$ denote the time required to compute the maximizations~\eqref{lowrank:solve-for-x} and~\eqref{lowrank:solve-for-y}, respectively,
which in the case of Alg.~\ref{algo:sparse} are linear in the dimensions $m$ and $n$.

%\textcolor{magenta}{We note that, in order to achieve the guarantees of Theorem \ref{thm:main-algo-guarantees}, one needs to repeat the main recursion in Algorithm \ref{algo:cca} for $\mathrm{T}=\widetilde{O}\bigl(2^{r \cdot \log_{2}(\sfrac{2}{\epsilon})} \bigr)$ rounds. However, it is obvious that, for small values of $\epsilon$ and large parameter $r$, it might be computationally expensive to complete $\mathrm{T}$ rounds. Nevertheless, as we show in Section \ref{sec:experiments}, moderate values for $\mathrm{T}$ usually lead to better solutions in practice, as compared to state of the art, and within less execution time.}

The first term in the additive error is due to the sampling approach of Alg.~\ref{algo:cca}.
The second term is due to the fact that the algorithm operates on the rank-$r$ surrogate matrix~$\mathbf{B}$.
Theorem~\ref{thm:main-algo-guarantees} 
establishes a trade-off between the computational complexity of Alg.~\ref{algo:cca} and the quality of the approximation guarantees: the latter improves as $r$ increases, but the former depends exponentially in~$r$.

Finally,
in the special case where we impose sparsity constraints on only one of the two variables, say $\mathbf{u}$, while allowing the second variable, here $\mathbf{v}$, to be any vector with unit $\ell_{2}$ norm, 
we obtain stronger guarantees.
\begin{theorem}
    \label{thm:main-algo-guarantees-special-case}
    If $\mathcal{V}=\lbrace \mathbf{v}: \|\mathbf{v}\|_{2}=1\rbrace$,
    \textit{i.e.}, if no constraint is imposed on variable $\mathbf{v}$ besides unit length,
    then Algorithm~\ref{algo:cca} under the same configuration as that in Theorem~\ref{thm:main-algo-guarantees} 
    outputs $\mathbf{u}_{\sharp} \in \mathcal{U}$
and $\mathbf{v}_{\sharp} \in \mathcal{V}$ such that
   \begin{align}
     \mathbf{u}_{\sharp}^{\transpose}
     \mathbf{\Sigma_{\mathsmaller{\mathrm{X}\mathrm{Y}}}}
     \mathbf{v}_{\sharp}
     \ge
     (1-\epsilon) \cdot
     \mathbf{u}_{\star}^{\transpose} \mathbf{\Sigma_{\mathsmaller{\mathrm{X}\mathrm{Y}}}} \mathbf{v}_{\star}
     -
     2 \cdot \sigma_{r+1}(\mathbf{\Sigma_{\mathsmaller{\mathrm{X}\mathrm{Y}}}}).
     \nonumber
   \end{align}
\end{theorem}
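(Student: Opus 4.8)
The plan is to exploit the fact that, with $\mathcal{V}$ equal to the entire unit sphere, the inner maximization over $\mathbf{v}$ can be solved in closed form, which collapses the bilinear low-rank problem~\eqref{cca:on-low-rank} into a single maximization of a norm and, in turn, upgrades the additive guarantee of Theorem~\ref{thm:main-algo-guarantees} into a multiplicative one. Write $\mathbf{A}\eqdef\mathbf{\Sigma_{\mathsmaller{\mathrm{X}\mathrm{Y}}}}$ and let $\mathbf{B}=\mathbf{U}\mathbf{\Sigma}\mathbf{V}^{\transpose}$ be its rank-$r$ truncation. For any fixed $\mathbf{u}$, $\max_{\|\mathbf{v}\|_2=1}\mathbf{u}^{\transpose}\mathbf{B}\mathbf{v}=\|\mathbf{B}^{\transpose}\mathbf{u}\|_2$, attained at $\mathbf{v}=\mathbf{B}^{\transpose}\mathbf{u}/\|\mathbf{B}^{\transpose}\mathbf{u}\|_2$; since $\mathbf{b}_i=\mathbf{B}^{\transpose}\mathbf{u}_i$ and $\mathsf{P}_{\mathcal{V}}$ performs exactly this normalization, each round's recorded value is $\text{obj}_i=\|\mathbf{B}^{\transpose}\mathbf{u}_i\|_2$. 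Hence the low-rank problem is equivalent to $\max_{\mathbf{u}\in\mathcal{U}}\|\mathbf{B}^{\transpose}\mathbf{u}\|_2$, and it suffices to show the best sampled $\mathbf{u}_i$ nearly maximizes this norm.

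First I would reparametrize in the $r$-dimensional sampling space. Setting $\psi(\mathbf{u})\eqdef\mathbf{\Sigma}\mathbf{U}^{\transpose}\mathbf{u}\in\mathbb{R}^{r}$, orthonormality of $\mathbf{V}$ gives $\|\mathbf{B}^{\transpose}\mathbf{u}\|_2=\|\psi(\mathbf{u})\|_2$, while $\mathbf{a}_i^{\transpose}\mathbf{u}=\mathbf{c}_i^{\transpose}\mathbf{\Sigma}\mathbf{U}^{\transpose}\mathbf{u}=\langle\mathbf{c}_i,\psi(\mathbf{u})\rangle$. Let $\mathbf{u}_\star^{(B)}$ denote a maximizer of $\|\mathbf{B}^{\transpose}\mathbf{u}\|_2$ over $\mathcal{U}$ and set the witness direction $\mathbf{c}_\star\eqdef\psi(\mathbf{u}_\star^{(B)})/\|\psi(\mathbf{u}_\star^{(B)})\|_2$. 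The key step is: if a sample satisfies $\langle\mathbf{c}_i,\mathbf{c}_\star\rangle\ge 1-\epsilon$, then because $\mathbf{u}_i$ maximizes $\langle\mathbf{c}_i,\psi(\cdot)\rangle$ over $\mathcal{U}$ (this is precisely $\mathsf{P}_{\mathcal{U}}(\mathbf{a}_i)$) we get $\langle\mathbf{c}_i,\psi(\mathbf{u}_i)\rangle\ge\langle\mathbf{c}_i,\psi(\mathbf{u}_\star^{(B)})\rangle=\|\psi(\mathbf{u}_\star^{(B)})\|_2\langle\mathbf{c}_i,\mathbf{c}_\star\rangle\ge(1-\epsilon)\|\psi(\mathbf{u}_\star^{(B)})\|_2$, and Cauchy--Schwarz $\langle\mathbf{c}_i,\psi(\mathbf{u}_i)\rangle\le\|\psi(\mathbf{u}_i)\|_2$ then yields $\|\mathbf{B}^{\transpose}\mathbf{u}_i\|_2\ge(1-\epsilon)\max_{\mathbf{u}\in\mathcal{U}}\|\mathbf{B}^{\transpose}\mathbf{u}\|_2$. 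This is exactly where unconstrained $\mathbf{v}$ buys the clean multiplicative factor: the optimum is itself the norm $\|\psi(\cdot)\|_2$, so a relative loss in the inner product transfers directly into a relative loss in the objective, whereas the two sparse projections in Theorem~\ref{thm:main-algo-guarantees} force an additive $\epsilon\,\sigma_1$ term.

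Next I would supply the sampling guarantee, reusing the covering argument behind Theorem~\ref{thm:main-algo-guarantees}: a point drawn uniformly from $\mathbb{S}_2^{r-1}$ lands within Euclidean distance $\epsilon$ of the fixed $\mathbf{c}_\star$ with probability at least of order $(\epsilon/2)^{r}$, so with $\mathrm{T}=\widetilde{O}\bigl(2^{r\log_2(2/\epsilon)}\bigr)$ independent samples at least one does so with high probability; such a sample obeys $\langle\mathbf{c}_i,\mathbf{c}_\star\rangle\ge 1-\tfrac12\epsilon^2\ge 1-\epsilon$, triggering the bound above. Since Algorithm~\ref{algo:cca} returns the pair of largest recorded value, this gives $\mathbf{u}_\sharp^{\transpose}\mathbf{B}\mathbf{v}_\sharp=\max_i\text{obj}_i\ge(1-\epsilon)\max_{\mathbf{u}\in\mathcal{U},\,\mathbf{v}\in\mathcal{V}}\mathbf{u}^{\transpose}\mathbf{B}\mathbf{v}$.

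It remains to transfer the guarantee from $\mathbf{B}$ back to $\mathbf{A}$. By Eckart--Young, $\|\mathbf{A}-\mathbf{B}\|_2=\sigma_{r+1}(\mathbf{A})$, so $\lvert\mathbf{u}^{\transpose}(\mathbf{A}-\mathbf{B})\mathbf{v}\rvert\le\sigma_{r+1}(\mathbf{A})$ for any unit $\mathbf{u},\mathbf{v}$. Applying this to the output pair gives $\mathbf{u}_\sharp^{\transpose}\mathbf{A}\mathbf{v}_\sharp\ge\mathbf{u}_\sharp^{\transpose}\mathbf{B}\mathbf{v}_\sharp-\sigma_{r+1}(\mathbf{A})$, and applying it to the true optimizer $(\mathbf{u}_\star,\mathbf{v}_\star)$ gives $\max_{\mathbf{u}\in\mathcal{U},\,\mathbf{v}\in\mathcal{V}}\mathbf{u}^{\transpose}\mathbf{B}\mathbf{v}\ge\mathbf{u}_\star^{\transpose}\mathbf{B}\mathbf{v}_\star\ge\mathbf{u}_\star^{\transpose}\mathbf{A}\mathbf{v}_\star-\sigma_{r+1}(\mathbf{A})$. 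Chaining these with the previous paragraph and using $(1-\epsilon)\sigma_{r+1}+\sigma_{r+1}\le 2\sigma_{r+1}$ produces $\mathbf{u}_\sharp^{\transpose}\mathbf{A}\mathbf{v}_\sharp\ge(1-\epsilon)\mathbf{u}_\star^{\transpose}\mathbf{A}\mathbf{v}_\star-2\sigma_{r+1}(\mathbf{A})$, as claimed. I expect the main obstacle to be the sampling step inherited from Theorem~\ref{thm:main-algo-guarantees}: pinning down the spherical-cap measure so that the $\widetilde{O}\bigl(2^{r\log_2(2/\epsilon)}\bigr)$ sample count genuinely drives the failure probability down, and verifying the Euclidean-distance-to-inner-product conversion is sharp enough to keep the factor exactly $(1-\epsilon)$; the perturbation bookkeeping, by contrast, is routine.
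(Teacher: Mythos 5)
Your proposal is correct and follows essentially the same route as the paper: both exploit that for unconstrained $\mathbf{v}$ the inner maximization collapses to $\|\mathbf{B}^{\transpose}\mathbf{u}\|_{2}=\|\mathbf{\Sigma}\mathbf{U}^{\transpose}\mathbf{u}\|_{2}$, take the witness direction on the sphere to be the normalized $\mathbf{\Sigma}\mathbf{U}^{\transpose}\mathbf{u}_{\star}$ so that the net error becomes a \emph{relative} loss against the optimal value itself, and then transfer from $\mathbf{B}$ to $\mathbf{\Sigma_{\mathsmaller{\mathrm{X}\mathrm{Y}}}}$ via the same two $\sigma_{r+1}$ perturbation bounds. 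Your inner-product phrasing $\langle\mathbf{c}_{i},\mathbf{c}_{\star}\rangle\ge 1-\epsilon$ is just an equivalent (marginally tighter) rewriting of the paper's decomposition into $\widetilde{\mathbf{c}}$ plus an $\epsilon$-small remainder.
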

Theorem~\ref{thm:main-algo-guarantees-special-case} implies that due to the flexibility in the choice of the canonical vector $\mathbf{v}$,
Alg.~\ref{algo:cca} solves the low-rank problem~\eqref{cca:on-low-rank} within a multiplicative $(1-\epsilon)$-factor from the optimal; 
the extra additive error term is once again due to the fact that the algorithm operates on the rank-$r$ approximation $\mathbf{B}$ instead of the original input~$\mathbf{\Sigma_{\mathsmaller{\mathrm{X}\mathrm{Y}}}}$.
In this case, the optimal choice of $\mathbf{v}$ in~\eqref{lowrank:solve-for-y} is just a scaled version of the argument $\mathbf{b}$.
Finally, we note that if constraints need to be applied on $\mathbf{v}$ instead of $\mathbf{u}$, then the same guarantees can be obtained by applying Algorithm~\ref{algo:cca} on $\mathbf{\Sigma_{\mathsmaller{\mathrm{X}\mathrm{Y}}}}^{\transpose}$.
A formal proof for Theorem~\ref{thm:main-algo-guarantees-special-case} is provided in the Appendix, Sec.~\ref{sec:approx-proof}.

Overall, \algo is simple to implement and is trivially parallelizable: the main iteration can be split across and arbitrary number of processing units achieving a potentially linear speedup.
It is the first algorithm for sparse diagonal CCA with data-dependent global approximation guarantees.
As discussed in Theorem~\ref{thm:main-algo-guarantees}, the input accuracy parameter $r$ establishes a trade-off between the running time and the tightness of the theoretical guarantees.
Its complexity scales linearly in the dimensions of the input for any constant $r$,
but admittedly becomes prohibitive even for moderate values of $r$.
In practice, if the spectrum of the data exhibits sharp decay, we may be able to obtain useful approximation guarantees even for small values of $r$ such as $2$ or $3$.
Moreover, disregarding the theoretical guarantees, the algorithm can always be executed for any rank $r$ and an arbitrary number of iterations $\mathrm{T}$.
In Section \ref{sec:experiments}, we empirically show that moderate values for $r$ and $\mathrm{T}$ can potentially achieve better solutions compared to state of the art.
We note, however, that the tuning of those  parameters needs to be investigated.

%An approximately optimal solution to the low-rank problem translates to global theoretical approximation guarantees for constrained CCA on the original input.
%The tightness of those guarantees hinge on choice of the parameter $r$ that controls the rank of the approximation, the decay of the spectrum of $\mathbf{\Sigma_{\mathsmaller{\mathrm{X}\mathrm{Y}}}}$; a sharp decay in the singular values of $\mathbf{\Sigma_{\mathsmaller{\mathrm{X}\mathrm{Y}}}}$ implies better guarantees.
%The main bottleneck or our algorithm is that its complexity depends exponentially in $r$. 
%In practice, however, we can achieve substantial improvement compared to existing approaches for small values of the parameter $r$ such as $2, 3$ or $4$.

% =============================================
% =============================================
% =============================================
\subsection{Analysis}
Let $\mathbf{B} = \mathbf{U}\mathbf{\Sigma}\mathbf{V}^{\transpose}$,
and $\mathbf{u}_{\mathsmaller{\mathtt{(B)}}}, \mathbf{v}_{\mathsmaller{\mathtt{(B)}}}$ be a pair that  maximizes --not necessarily uniquely-- the objective $\mathbf{u}^{\transpose}\mathbf{B}\mathbf{v}$ in~\eqref{cca:on-low-rank} over all feasible solutions.
We assume that $\mathbf{u}_{\mathsmaller{\mathtt{(B)}}}^{\transpose}\mathbf{B}\mathbf{v}_{\mathsmaller{\mathtt{(B)}}} > 0$.\footnote{Observe that this is always true for any nonzero argument $\mathbf{B}$ as long as at least one of the two variables $\mathbf{u}$ and $\mathbf{v}$ can take arbitrary signs.
It is hence true under vanilla sparsity constraints.}
Define the $r \times 1$ vector $\mathbf{c}_{\mathsmaller{\mathtt{(B)}}}\eqdef\mathbf{V}^{\transpose}\mathbf{v}_{\mathsmaller{\mathtt{(B)}}}$ and let $\rho$ denote its $\ell_{2}$ norm. 
Then, $0<\rho\le 1$;
the upper bound follows from the fact that the $r$ columns of $\mathbf{V}$ are orthonormal and ${\|\mathbf{v}_{\mathsmaller{\mathtt{(B)}}}\|_{2} = 1}$, while the lower follows by the aforementioned assumption. 
Finally, define $\overline{\mathbf{c}}_{\mathsmaller{\mathtt{(B)}}} = \mathbf{c}_{\mathsmaller{\mathtt{(B)}}} / \rho$, the projection of $\mathbf{c}_{\mathsmaller{\mathtt{(B)}}}$ on the unit $\ell_{2}$-sphere~$\mathbb{S}_{2}^{r-1}$.

%The key idea in Algorithm~\ref{algo:cca} is that since $r$ is relatively small, we can find a direction $\mathbf{c}$ that is close enough to $\mathbf{c}_{\star}$ by randomly sampling ``directions" from the $r$-dimensional Euclidean unit sphere.
\begin{definition}
For any $\epsilon \in (0,1)$, an $\epsilon$-net of $\mathbb{S}_{2}^{r-1}$ is a finite collection $N$ of points in $\mathbb{R}^{}$ such that for any $\mathbf{c} \in \mathbb{S}_{2}^{r-1}$, $N$ contains a point $\mathbf{c}\prime$ such that $\|\mathbf{c}\prime-\mathbf{c}\| \le \epsilon$.
\end{definition}
\begin{lemma}[\cite{vershynin2010introduction}, Lemma 5.2]
    %\cite{vershynin2010introduction}
    \label{lem:vershynin12}
    For any $\epsilon \in (0, 1)$,
    there exists an $\epsilon$-net of $\mathbb{S}_{2}^{r-1}$ equipped with the Euclidean metric, with at most 
    $\left(1+\sfrac{2}{\epsilon}\right)^r$ points. 
\end{lemma}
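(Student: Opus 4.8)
The plan is to prove existence via the standard volumetric packing argument, taking the net to be a maximal $\epsilon$-separated subset of the sphere. First I would fix a set $N \subseteq \mathbb{S}_{2}^{r-1}$ that is maximal with respect to the property that any two distinct points of $N$ are at Euclidean distance strictly greater than $\epsilon$. Such a maximal set exists: the family of $\epsilon$-separated subsets of the sphere is nonempty and closed under unions of chains, so Zorn's lemma applies; equivalently, one may build $N$ greedily, and the volume estimate derived below guarantees the process terminates, so that $N$ is in fact finite.

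The next step is to observe that maximality forces $N$ to be an $\epsilon$-net. Indeed, if some point $\mathbf{c} \in \mathbb{S}_{2}^{r-1}$ had distance greater than $\epsilon$ from every element of $N$, then $N \cup \{\mathbf{c}\}$ would still be $\epsilon$-separated, contradicting maximality. Hence every point of the sphere lies within distance $\epsilon$ of some point of $N$, which is precisely the defining property in the preceding definition.

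It remains to bound $|N|$ by a volume comparison in $\mathbb{R}^{r}$. Since the points of $N$ are pairwise at distance greater than $\epsilon$, the open balls of radius $\epsilon/2$ centered at them are pairwise disjoint, by the triangle inequality. Moreover, as each center lies on the unit sphere, each such ball is contained in the centered ball of radius $1 + \epsilon/2$. Using that the Lebesgue volume of a Euclidean ball of radius $\rho$ in $\mathbb{R}^{r}$ equals $\rho^{r}$ times the volume of the unit ball, disjointness together with containment gives, after cancelling the common unit-ball factor,
\[
  |N| \cdot \left(\tfrac{\epsilon}{2}\right)^{r}
  \le
  \left(1 + \tfrac{\epsilon}{2}\right)^{r}.
\]
Rearranging yields $|N| \le \bigl((1 + \epsilon/2)/(\epsilon/2)\bigr)^{r} = (1 + 2/\epsilon)^{r}$, which is the claimed bound and in particular confirms that $N$ is finite.

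I do not expect a genuine obstacle: the entire content is this packing inequality. The only points that demand care are (i) justifying existence and maximality of the separated set on the compact sphere, and (ii) correctly pairing the enclosing radius $1 + \epsilon/2$ with the disjointness radius $\epsilon/2$, so that the $r$-th-power scaling of volume produces exactly the factor $1 + 2/\epsilon$ rather than a looser constant.
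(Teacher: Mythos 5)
Your proof is correct. The paper does not prove this lemma itself---it imports it directly from Vershynin's notes (Lemma 5.2)---and your volumetric packing argument (take a maximal $\epsilon$-separated subset of $\mathbb{S}_{2}^{r-1}$, note that maximality makes it an $\epsilon$-net, then compare the disjoint balls of radius $\epsilon/2$ against the enclosing ball of radius $1+\epsilon/2$ to get $|N| \le (1+2/\epsilon)^{r}$) is precisely the standard proof given in that reference, with all the delicate points (separation versus covering, the pairing of radii) handled correctly.
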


Algorithm~\ref{algo:cca} runs in an iteration with $\mathrm{T}$ rounds.
In each round, it independently samples a point $\mathbf{c}_{i}$ from~$\mathbb{S}_{2}^{r-1}$, 
by randomly generating a vector according to a spherical Gaussian distribution and appropriately scaling its length.
Based on Lemma~\ref{lem:vershynin12} and elementary counting arguments, for sufficiently large $\mathrm{T}$ the collection of sampled points forms a $\epsilon$-net of $\mathbb{S}_{2}^{r-1}$ with high probability:
\begin{lemma} 
  \label{lem:num-of-random-points}
  For any $\epsilon, \delta \in (0, 1)$,
  a set of $\mathrm{T}={O\mathopen{}\bigl( r (\sfrac{\epsilon}{4})^{-r} \cdot \ln{\sfrac{4}{{\epsilon}\cdot{\delta}}} \bigr)}$
  randomly and independently drawn points uniformly distributed on~$\mathbb{S}_{2}^{r-1}$
  suffices to construct an $\sfrac{\epsilon}{2}$-net of~$\mathbb{S}_{2}^{r-1}$ with probability at least $1 - \delta$.
\end{lemma}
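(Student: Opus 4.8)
The plan is to bootstrap the desired random $\sfrac{\epsilon}{2}$-net from a fixed, deterministic finer net, and then argue by a union bound that the random samples cover every cell of that fine net. \textbf{First}, I would invoke Lemma~\ref{lem:vershynin12} at resolution $\sfrac{\epsilon}{4}$ to fix a deterministic $\sfrac{\epsilon}{4}$-net $N \subseteq \mathbb{S}_{2}^{r-1}$ with $|N| \le (1 + \sfrac{8}{\epsilon})^{r}$ points. The reduction rests on a one-line triangle-inequality observation: if \emph{every} net point $\mathbf{p} \in N$ has at least one random sample $\mathbf{c}_{i}$ with $\|\mathbf{p} - \mathbf{c}_{i}\| \le \sfrac{\epsilon}{4}$, then the samples already form an $\sfrac{\epsilon}{2}$-net. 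Indeed, any $\mathbf{c} \in \mathbb{S}_{2}^{r-1}$ admits $\mathbf{p} \in N$ within $\sfrac{\epsilon}{4}$, and then a sample within a further $\sfrac{\epsilon}{4}$, so $\|\mathbf{c} - \mathbf{c}_{i}\| \le \sfrac{\epsilon}{2}$. This turns the statement into a covering question: how many uniform samples are needed so that the $\sfrac{\epsilon}{4}$-cap around each of the $|N|$ net points is hit at least once.

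\textbf{Second}, I would estimate the probability of hitting a single cap. By rotational invariance of the uniform measure on $\mathbb{S}_{2}^{r-1}$, the mass $\mu \eqdef \Pr_{\mathbf{c}}\bigl[\|\mathbf{c} - \mathbf{p}\| \le \sfrac{\epsilon}{4}\bigr]$ of a Euclidean cap of radius $\sfrac{\epsilon}{4}$ does not depend on the center $\mathbf{p}$, so it suffices to lower bound this single quantity. Converting the Euclidean radius to an inner-product (equivalently, angular) threshold via $\|\mathbf{c} - \mathbf{p}\|^{2} = 2 - 2\langle \mathbf{c}, \mathbf{p}\rangle$ and comparing the cap's surface integral $\int_{0}^{\theta}\sin^{r-2}\phi\,d\phi$ against the full normalization $\int_{0}^{\pi}\sin^{r-2}\phi\,d\phi$, one obtains a bound of the form $\mu = \Omega\bigl(r^{-\sfrac{1}{2}}(\sfrac{\epsilon}{4})^{r-1}\bigr)$; the essential point is that the base of the exponential is exactly $\sfrac{\epsilon}{4}$, matching the cap radius. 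Given $\mu$, the probability that a fixed cap receives none of the $\mathrm{T}$ independent samples is $(1-\mu)^{\mathrm{T}} \le e^{-\mu\mathrm{T}}$.

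\textbf{Third}, a union bound over the $|N|$ caps bounds the probability that some net point is left uncovered by $|N|\cdot e^{-\mu\mathrm{T}}$. Forcing this to be at most $\delta$ and solving gives the sufficient condition $\mathrm{T} \ge \mu^{-1}\ln\bigl(\sfrac{|N|}{\delta}\bigr)$. Here $\ln\bigl(\sfrac{|N|}{\delta}\bigr) \le r\ln(1 + \sfrac{8}{\epsilon}) + \ln\sfrac{1}{\delta} = O\bigl(r\ln\sfrac{4}{\epsilon\delta}\bigr)$, while $\mu^{-1} = O\bigl((\sfrac{\epsilon}{4})^{-r}\bigr)$ up to a factor polynomial in $r$ (using $\sfrac{\epsilon}{4} < 1$ to pass from the $(\sfrac{\epsilon}{4})^{-(r-1)}$ of the cap bound to $(\sfrac{\epsilon}{4})^{-r}$). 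Multiplying the two factors and retaining the dominant exponential term recovers $\mathrm{T} = O\bigl(r(\sfrac{\epsilon}{4})^{-r}\ln\sfrac{4}{\epsilon\delta}\bigr)$, the lower-order $\mathrm{poly}(r)$ prefactor from the cap estimate being subsumed by the exponential dependence (and, downstream, by the $\widetilde{O}$ of Theorem~\ref{thm:main-algo-guarantees}).

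\textbf{The main obstacle} is the spherical-cap mass estimate, and specifically getting its base to be exactly $\sfrac{\epsilon}{4}$ rather than, say, $\sfrac{\epsilon}{8}$: a spurious constant factor inside the base would contribute a $2^{r}$ blow-up to $\mu^{-1}$ and destroy the stated bound. Everything else---the net reduction, the independence across samples, and the union bound---is routine bookkeeping. I would therefore take care to use a cap lower bound in which the exponential base coincides with the cap radius (the $\int\sin^{r-2}$ comparison above achieves this, at the cost of only a harmless $\mathrm{poly}(r)$ prefactor), and I would sanity-check the small cases (e.g.\ $r=2$, where the cap measure is $\sfrac{2\arcsin(\sfrac{t}{2})}{\pi} \ge \sfrac{t}{\pi}$) to confirm that no hidden $r$-dependent constant creeps into the base.
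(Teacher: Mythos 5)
The paper never writes out a proof of this lemma---it is asserted in the main text as following from Lemma~\ref{lem:vershynin12} ``and elementary counting arguments,'' and no proof appears in the appendix---so your write-up is exactly the argument the authors are gesturing at: fix a finer deterministic net, reduce covering the sphere to hitting every cap around a net point via the triangle inequality, lower-bound the cap mass by rotational invariance, and finish with $(1-\mu)^{\mathrm{T}}\le e^{-\mu \mathrm{T}}$ plus a union bound. That skeleton is correct and complete.

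The one step that does not survive scrutiny as literally stated is the cap estimate $\mu=\Omega\bigl(r^{-1/2}(\epsilon/4)^{r-1}\bigr)$ with base \emph{exactly} equal to the cap radius $t=\epsilon/4$. A Euclidean cap of radius $t$ has angular radius $\theta$ with $2\sin(\theta/2)=t$, so the quantity your $\int_0^\theta\sin^{r-2}\phi\,d\phi$ comparison actually produces as the base is $\sin\theta=t\sqrt{1-t^2/4}$, strictly less than $t$; and since $\int_0^\theta\sin^{r-2}\phi\,d\phi\le\theta\sin^{r-2}\theta$, this loss is intrinsic to the cap, not to the bound. The resulting correction $(1-t^2/4)^{(r-2)/2}=e^{-\Theta(r\epsilon^2)}$ is exponentially small in $r$ for fixed $\epsilon$ and cannot be absorbed by the single spare factor of $4/\epsilon$ separating $(\epsilon/4)^{-(r-1)}$ from $(\epsilon/4)^{-r}$, so your stated $\mu$ does not yield the lemma's $\mathrm{T}$ verbatim. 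The repair is cheap and stays within your framework: decouple the two radii by taking the deterministic net at resolution $\epsilon/8$ (cardinality $(1+16/\epsilon)^r$, which only perturbs the logarithmic factor) and the caps at radius $3\epsilon/8$; then the effective base is $\tfrac{3\epsilon}{8}\sqrt{1-9\epsilon^2/256}\ge 0.36\,\epsilon>\epsilon/4$ for all $\epsilon\in(0,1)$, and $\mu^{-1}=O\bigl(\mathrm{poly}(r)\,(\epsilon/4)^{-r}\bigr)$ follows with room to spare, recovering $\mathrm{T}=O\bigl(r(\epsilon/4)^{-r}\ln\tfrac{4}{\epsilon\delta}\bigr)$. (For perspective, Theorem~\ref{thm:main-algo-guarantees} quotes $\mathrm{T}=\widetilde{O}\bigl((2/\epsilon)^r\bigr)$, a further factor $2^r$ below the lemma, so the paper is not tracking these bases precisely either; but if you want the lemma's constant $4$ to be honest, use the decoupled radii.)
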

It follows that there exists $i_{\star} \in [\mathrm{T}]$ such that
\begin{align}
  \label{eq:net_property_main}
  \|\mathbf{c}_{i_{\star}} - \overline{\mathbf{c}}_{\mathsmaller{\mathtt{(B)}}}\|_{2}
  \le
  \epsilon/2.
\end{align}

In the $i_{\star}$th round, the algorithm samples the point $\mathbf{c}_{i_{\star}}$ and computes a feasible pair $(\mathbf{u}_{i_{\star}}, \mathbf{v}_{i_{\star}})$ via the two step maximization procedure, 
that is, 
%first, it computes $\mathbf{u}_{i_{\star}} \in \mathcal{U}$ that solves maximization~\eqref{lowrank:solve-for-x} for $\mathbf{a} = \mathbf{U}\mathbf{\Sigma}\mathbf{c}_{i_{\star}}$,
%and subsequently $\mathbf{v}_{i_{\star}} \in \mathcal{V}$ that solves maximization~\eqref{lowrank:solve-for-y} for $\mathbf{b}=\mathbf{B}^{\transpose}\mathbf{u}_{i_{\star}}$.
 \begin{align}
    \mathbf{u}_{i_{\star}}
    \eqdef
    \argmax_{\mathbf{u}\in \mathcal{U}}
       \mathbf{u}^{\transpose}
       \mathbf{U}\mathbf{\Sigma}
       \mathbf{c}_{i_{\star}}
       \; \text{and} \;
             \mathbf{v}_{i_{\star}}
      \eqdef
      \argmax_{\mathbf{v}\in \mathcal{V}}
         \mathbf{u}_{i_{\star}}^{\transpose}
         \mathbf{\Sigma_{\mathsmaller{\mathrm{X}\mathrm{Y}}}}
         \mathbf{v}.
         \nonumber
%      \label{def-of-nearXandY}
 %   \nonumber
\end{align}
We have:
   \begin{align}
      {\mathbf{u}}_{\mathsmaller{\mathtt{(B)}}}^{\transpose}
      \mathbf{B}
      {\mathbf{v}}_{\mathsmaller{\mathtt{(B)}}}
      &=
         \rho \cdot 
         \mathbf{u}_{\mathsmaller{\mathtt{(B)}}}^{\transpose}
         \mathbf{U}\mathbf{\Sigma}
         \overline{\mathbf{c}}_{\mathsmaller{\mathtt{(B)}}}
      \nonumber\\&=
      \rho \cdot 
         \mathbf{u}_{\mathsmaller{\mathtt{(B)}}}^{\transpose}
         \mathbf{U}\mathbf{\Sigma}
         \mathbf{c}_{i_{\star}}
      +
         \rho \cdot 
         \mathbf{u}_{\mathsmaller{\mathtt{(B)}}}^{\transpose}
         \mathbf{U}\mathbf{\Sigma}
         \bigl(\overline{\mathbf{c}}_{\mathsmaller{\mathtt{(B)}}}-\mathbf{c}_{i_{\star}}\bigr)
      \nonumber\\&\le
         \rho \cdot 
         \mathbf{u}_{i_{\star}}^{\transpose}
         \mathbf{U}\mathbf{\Sigma}
         \mathbf{c}_{i_{\star}}
      +
         \rho \cdot 
         \mathbf{u}_{\mathsmaller{\mathtt{(B)}}}^{\transpose}
         \mathbf{U}\mathbf{\Sigma}
         \bigl(\overline{\mathbf{c}}_{\mathsmaller{\mathtt{(B)}}}-\mathbf{c}_{i_{\star}}\bigr)
      \nonumber\\&\le
         \rho \cdot 
         \mathbf{u}_{i_{\star}}^{\transpose}
         \mathbf{U}\mathbf{\Sigma}
         \mathbf{c}_{i_{\star}}
      +
         \tfrac{\epsilon}{2} \cdot \sigma_{1}(\mathbf{\Sigma_{\mathsmaller{\mathrm{X}\mathrm{Y}}}})
      .
      \label{base-inequality_in_main}
   \end{align}
The first step follows by the definition of $\overline{\mathbf{c}}_{\mathsmaller{\mathtt{(B)}}}$
and the second by linearity. 
The first inequality follows from the fact that $\mathbf{u}_{i_{\star}}$  maximizes the first term over all $\mathbf{u}\in \mathcal{U}$.
   The last inequality follows straightforwardly from the fact
   that $\|\mathbf{u}_{\mathsmaller{\mathtt{(B)}}}\|_{2}=1$ and $\rho \le 1$ (see Lemma~\ref{lemma:abs-trace-XAY-ub}).
   Using similar arguments, 
   \begin{align}
        \rho \cdot 
         \mathbf{u}_{i_{\star}}^{\transpose}
         \mathbf{U}\mathbf{\Sigma}
         \mathbf{c}_{i_{\star}}
%      \nonumber\\
&=
%         \rho \cdot 
%         \mathbf{u}_{i_{\star}}^{\transpose}
%         \mathbf{U}\mathbf{\Sigma}
%         \overline{\mathbf{c}}_{\star}
%         +
%         \rho \cdot 
%         \mathbf{u}_{i_{\star}}^{\transpose}
%         \mathbf{U}\mathbf{\Sigma}
%         \left(
%          \mathbf{c}_{i_{\star}} - \overline{\mathbf{c}}_{\star}
%          \right)
%      \nonumber\\&=
%         \mathbf{u}_{i_{\star}}^{\transpose}
%         \mathbf{U}\mathbf{\Sigma}
%         \mathbf{c}_{\star}
%         +
%         \rho \cdot 
%         \mathbf{u}_{i_{\star}}^{\transpose}
%         \mathbf{U}\mathbf{\Sigma}
%         \left(
%            \mathbf{c}_{i_{\star}} - \overline{\mathbf{c}}_{\star}
%         \right)
%      \nonumber\\&=
         \mathbf{u}_{i_{\star}}^{\transpose}
         \mathbf{U}\mathbf{\Sigma}
         \mathbf{V}^{\transpose}
         \mathbf{v}_{\mathsmaller{\mathtt{(B)}}}
         +
         \rho \cdot 
         \mathbf{u}_{i_{\star}}^{\transpose}
         \mathbf{U}\mathbf{\Sigma}
         \left(
            \mathbf{c}_{i_{\star}} - \overline{\mathbf{c}}_{\mathsmaller{\mathtt{(B)}}}
         \right)
      \nonumber\\&\le
%         \mathbf{u}_{i_{\star}}^{\transpose}
%         \mathbf{U}\mathbf{\Sigma}
%         \mathbf{V}^{\transpose}
%         \mathbf{v}_{i_{\star}}
%         +
%         \rho \cdot 
%         \mathbf{u}_{i_{\star}}^{\transpose}
%         \mathbf{U}\mathbf{\Sigma}
%         \left(
%            \mathbf{c}_{i_{\star}} - \overline{\mathbf{c}}_{\star}
%         \right)
%      \label{optimal_y_sharp_in_main}
%      \nonumber\\&\le
         \mathbf{u}_{i_{\star}}^{\transpose}
         \mathbf{B}
         \mathbf{v}_{i_{\star}}
         +
         \tfrac{\epsilon}{2} \cdot \sigma_{1}(\mathbf{\Sigma_{\mathsmaller{\mathrm{X}\mathrm{Y}}}}).
         \label{bound_on_sharp_x_in_main}
   \end{align}
   The inequality follows by the fact that $\mathbf{v}_{i_{\star}}$ 
   maximizes the inner product with $\mathbf{B}^{\transpose}\mathbf{u}_{i_{\star}}$ over all ${\mathbf{v} \in \mathcal{V}}$,
   as well as that $\|\mathbf{u}_{\mathsmaller{\mathtt{(B)}}}\|_{2}=1$ and $\rho \le 1$.
   Combining~\eqref{base-inequality_in_main} and~\eqref{bound_on_sharp_x_in_main}, we obtain
   \begin{align}
         \mathbf{u}_{i_{\star}}^{\transpose}
         \mathbf{B}
         \mathbf{v}_{i_{\star}}
      \ge
         \mathbf{u}_{\mathsmaller{\mathtt{(B)}}}^{\transpose}
         \mathbf{B}
         \mathbf{v}_{\mathsmaller{\mathtt{(B)}}}
      -
      \epsilon
      \cdot \sigma_{1}(\mathbf{\Sigma_{\mathsmaller{\mathrm{X}\mathrm{Y}}}}).
      \label{eq:guarantees_for_low_rank}
   \end{align}
   Algorithm~\ref{algo:cca} computes multiple candidate solution pairs and outputs the pair $(\mathbf{u}_{\sharp}, \mathbf{v}_{\sharp})$ that achieves the maximum objective value.
   The latter is at least as high as that achieved by 
   $(\mathbf{u}_{i_{\star}}, \mathbf{v}_{i_{\star}})$.

Inequality~\eqref{eq:guarantees_for_low_rank} establishes an approximation guarantee for the low-rank problem~\eqref{cca:on-low-rank}.
Those can be translated to guarantees on the original problem with input argument~$\mathbf{\Sigma_{\mathsmaller{\mathrm{X}\mathrm{Y}}}}$.
Let $\mathbf{u}_{\star}$ and $\mathbf{v}_{\star}$ denote the (unknown) optimal solution of the sparse CCA problem \eqref{cca:def}.
By the definition of $\mathbf{u}_{\mathsmaller{\mathtt{(B)}}}$ and $\mathbf{v}_{\mathsmaller{\mathtt{(B)}}}$, it follows that
\begin{align}
   \mathbf{u}_{\mathsmaller{\mathtt{(B)}}}^{\transpose}\mathbf{B}\mathbf{v}_{\mathsmaller{\mathtt{(B)}}}
   \ge
   \mathbf{u}_{\star}^{\transpose}
   \mathbf{B}
   \mathbf{v}_{\star}
   &=
   \mathbf{u}_{\star}^{\transpose}
   \mathbf{\Sigma_{\mathsmaller{\mathrm{X}\mathrm{Y}}}}
   \mathbf{v}_{\star}
   -
   \mathbf{u}_{\star}^{\transpose}
   (\mathbf{\Sigma_{\mathsmaller{\mathrm{X}\mathrm{Y}}}}-\mathbf{B})
   \mathbf{v}_{\star}
%   \nonumber\\& \ge
%   \mathbf{u}_{\star}^{\transpose}
%   \mathbf{\Sigma_{\mathsmaller{\mathrm{X}\mathrm{Y}}}}
%   \mathbf{v}_{\star}
%   -
%   \bigl\lvert
%   \mathbf{u}_{\star}^{\transpose}
%   (\mathbf{\Sigma_{\mathsmaller{\mathrm{X}\mathrm{Y}}}}-\mathbf{B})
%   \mathbf{v}_{\star}
%   \bigr\rvert
   \nonumber\\& \ge
   \mathbf{u}_{\star}^{\transpose}
   \mathbf{\Sigma_{\mathsmaller{\mathrm{X}\mathrm{Y}}}}
   \mathbf{v}_{\star}
   -
   \sigma_{r+1}(\mathbf{\Sigma_{\mathsmaller{\mathrm{X}\mathrm{Y}}}}).
   \label{eq:bp03_main}
\end{align}
Similarly, 
\begin{align}
   \mathbf{u}_{\sharp}^{\transpose}
   \mathbf{\Sigma_{\mathsmaller{\mathrm{X}\mathrm{Y}}}}
   \mathbf{v}_{\sharp}
&=
   \mathbf{u}_{\sharp}^{\transpose}
   \mathbf{B}
   \mathbf{v}_{\sharp}
   -
   \mathbf{u}_{\sharp}^{\transpose}
   (\mathbf{B}-\mathbf{\Sigma_{\mathsmaller{\mathrm{X}\mathrm{Y}}}})
   \mathbf{v}_{\sharp}
   \nonumber\\ 
%&\ge
%   \mathbf{u}_{\sharp}^{\transpose}
%   \mathbf{B}
%   \mathbf{v}_{\sharp}
%   -
%   \bigl\lvert
%      \mathbf{u}_{\sharp}^{\transpose}
%      (\mathbf{B}-\mathbf{\Sigma_{\mathsmaller{\mathrm{X}\mathrm{Y}}}})
%      \mathbf{v}_{\sharp}
%   \bigr\rvert.
%   \nonumber\\ 
&\ge
   \mathbf{u}_{\sharp}^{\transpose}
   \mathbf{B}
   \mathbf{v}_{\sharp}
   -
   \sigma_{r+1}(\mathbf{\Sigma_{\mathsmaller{\mathrm{X}\mathrm{Y}}}}).
   \label{eq:bp02_main}
\end{align}
Combining~\eqref{eq:bp03_main} and~\eqref{eq:bp02_main}
with~\eqref{eq:guarantees_for_low_rank}, 
we obtain the approximation guarantees of Theorem~\ref{thm:main-algo-guarantees}.

The running time of Algorithm~\ref{algo:cca} follows straightforwardly by inspection.
The algorithm first computes the truncated singular value decomposition of inner dimension $r$ in time denoted by $\timet_{\texttt{SVD}}(r)$.
Subsequently, it performs $\mathrm{T}$ iterations.
The cost of each iteration is determined by the cost of the matrix-vector multiplications and the running times $\timet_{\mathcal{U}}$ and $\timet_{\mathcal{V}}$ of the operators $\mathsf{P}_{\mathcal{U}}(\cdot)$ and $\mathsf{P}_{\mathcal{V}}(\cdot)$. 
Note that matrix multiplications can exploit the available matrix decomposition and are performed in time ${r \cdot \max\lbrace m, n\rbrace}$.
Substituting the value of $\mathrm{T}$ with that specified in Lemma~\ref{lem:num-of-random-points} completes the proof of Thm.~\ref{thm:main-algo-guarantees}.
The proof of Theorem~\ref{thm:main-algo-guarantees-special-case} follows a similar path; see Appendix Sec.~\ref{sec:approx-proof}.

% =============================================
% =============================================
% =============================================
\section{Beyond Sparsity: Structured CCA}
%!TEX root = cca.tex

While enforcing sparsity results in succinct models,
the latter may fall short in capturing the true interactions in a physical system, especially when the number of samples is limited.
Incorporating additional prior structural information can improve interpretability\footnote{This is a shared insight in the broader area of sparse approximations~\cite{baraniuk2010model, huang2011learning, bach2012structured, kyrillidis2012combinatorial}.};
\textit{e.g.},
\cite{du2014novel} argue that a structure-aware sparse CCA  incorporating group-like structure obtains biologically more meaningful results,
while \cite{lin2014correspondence} demonstrated that group prior knowledge improved performance compared to standard sparse CCA in a task of identifying brain regions susceptible to schizophrenia.
Several works suggest using structure-inducing regularizers to promote smoothness 
\cite{witten2009penalized,chen2013structure, kobayashi2014s3cca} 
or group sparse structure~\cite{chu2013sparse} in CCA.

%\paragraph{Structured \algo}
Our sparse CCA algorithm and its theoretical approximation guarantees in Theorems~\ref{thm:main-algo-guarantees} and~\ref{thm:main-algo-guarantees-special-case}
extend straightforwardly to constraints beyond sparsity.
%\sout{In the overview of Algorithm~\ref{algo:cca} and its guarantees,
The only assumption on the feasible sets $\mathcal{U}$ and $\mathcal{V}$
%:
%\begin{inparaenum}[\itshape i)\upshape]
%\sout{\item{} we seek canonical vectors with unit $\ell_{2}$-norm,
%that is $\mathcal{U}$ and $\mathcal{V}$ are subsets of the $\ell_{2}$ unit sphere in the corresponding dimension,
% and
%\item{} 
is that there exist tractable procedures $\mathsf{P}_{\mathcal{U}}$ and $\mathsf{P}_{\mathcal{V}}$ that solve the constrained maximizations~\eqref{lowrank:solve-for-x} and ~\eqref{lowrank:solve-for-y}, respectively.
The specific structure of the feasible sets only manifests itself through these subroutines, 
\textit{e.g.}, Alg.~\ref{algo:sparse} for the case of sparsity constraints.
Therefore, Alg.~\ref{algo:cca} can be straightforwardly adapted to any structural constraint for which the aforementioned conditions are satisfied.

In fact, observe that under the unit $\ell_{2}$ restriction on the feasible vectors, 
the maximizations in~\eqref{lowrank:solve-for-x} and~\eqref{lowrank:solve-for-y} are equivalent to computing the Euclidean projection of a given real vector on the (nonconvex) sets $\mathcal{U}$ and $\mathcal{V}$.
Such exact or approximate projection procedures exist for several interesting constraints beyond sparsity such as smooth or group sparsity \cite{huang2011learning, baldassarre2013group, kyrillidis2015structured}, sparsity constraints onto norm balls \cite{kyrillidis2012hard}, or even sparsity patterns guided by underlying graphs~\cite{hegde2015nearly,asteris2015pathpca}. 
\section{Experiments}
\label{sec:experiments}
%!TEX root = cca.tex

We empirically evaluate our algorithm on two real datasets:
\begin{inparaenum}[\itshape i)\upshape]
\item a publicly available breast cancer dataset~\cite{chin2006genomic}, also used in the evaluation of~\cite{witten2009penalized}, and
\item a neuroimaging dataset obtained from the Human Connectome Project~\cite{van2013wu} on which we investigate associations between brain activation and behavior measurements.
\end{inparaenum}

%We implemented our SpanCCA algorithm (Alg.~\ref{algo:cca}).
%Disregarding 
%Finally, recall that our algorithm is easily parallelizable.
%Our  prototypical Python implementation utilizes utilizes simple multiprocessing feature to exploit such resources.
%We experimentally demostrate that our algorithm achieves near-linear speedup in the number of available processing units. 

\subsection{Breast Cancer Dataset}

The breast cancer dataset~\cite{chin2006genomic} consists of gene expression and DNA copy number measurements on a set of $89$ tissue samples.
Among others, it contains 
a $89 \times 2149$ matrix (DNA) with CGH spots for each sample
and a $89 \times 19672$ matrix (RNA) of genes,
along with information for the chromosomal locations of each CGH spot and each gene.
As described in~\cite{witten2009penalized},
this dataset can be used to perform integrative analysis of gene expression and DNA copy number data,
and in particular to identify sets of genes that have expression that is correlated with a set of chromosomal gains or losses.

\begin{figure}[t!]
\centering
\includegraphics[width=0.99\linewidth]{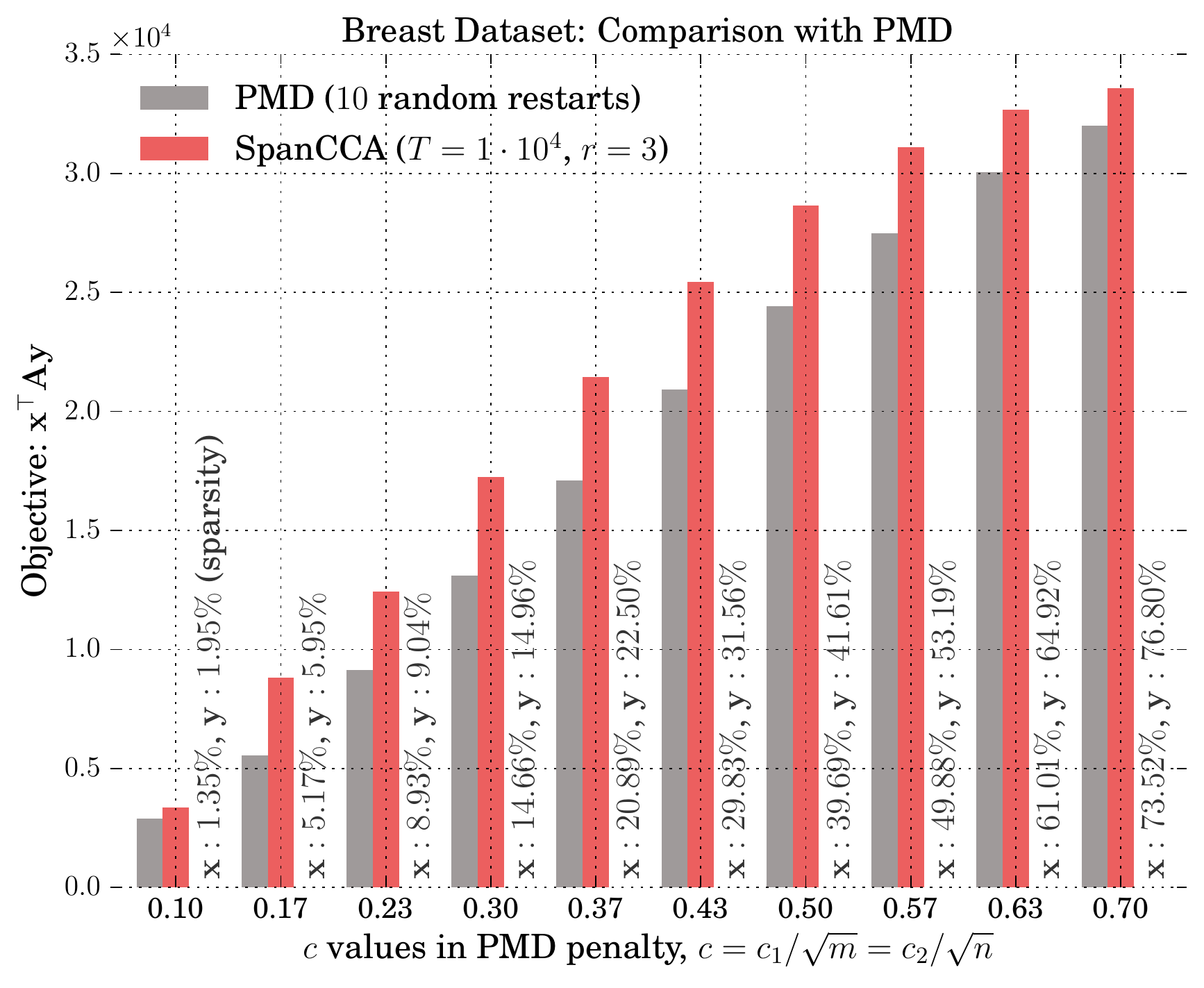}
\vspace{1em}

{\small
\rowcolors{2}{white}{black!05!white}
\begin{tabular}{lccc}
  \toprule
  & Avg Exec. Time & Configuration \\ 
  \cmidrule{1-1} \cmidrule{2-2} \cmidrule{3-3} \cmidrule{4-4}
   PMD & $\sim 44$ seconds  & $10$ rand. restarts\\
   \algo & $\sim 24$ seconds & $T=10^4$, $r=3$.\\
  \bottomrule
\end{tabular}
}
\caption{%
Comparison of \algo and the PMD algorithm~\cite{witten2009penalized}.
We configure PMD with $\ell_{1}$-norm thresholds $c_{1}=c \cdot \sqrt{m}$ and $c_{2}=c \cdot \sqrt{n}$,
and consider various values of the constant $c \in (0,1)$.
For each~$c$, we run PMD $10$ times, select the canonical vectors $\mathbf{x}$, $\mathbf{y}$ that achieve the highest objective value and count their nonzero entries (depicted as percentage of the corresponding dimension).
Finally, we run our \algo algorithm with ${T=10^4}$ and ${r=3}$, using the latter as target sparsities,
and compare the objective values achieved by the two methods.
%We also list the execution time for each algorithm for the aforementioned configurations;
Execution times remain approximately the same for all target sparsity values (equiv. all~$c$).
}
\label{fig:breast_comparison}
\end{figure}
We run our algorithm on the breast cancer dataset  and compare the output with the PMD algorithm of~\cite{witten2009penalized};
PMD is regarded as state of the art by practitioners and has been used --in its original form or slightly modified-- in several neuroscience and biomedical applications; see also Section \ref{sec:related_work}.
The input to both algorithms is the $m \times n$ matrix $\mathbf{\Sigma_{\mathsmaller{\mathrm{X}\mathrm{Y}}}}=\mathbf{X}^{\transpose}\mathbf{Y}$ (${m=2149}$, ${n=19672}$),
where $\mathbf{X}$ and $\mathbf{Y}$ are obtained from the aforementioned DNA and RNA matrices upon feature standardization.
Recall that PMD is an iterative, alternating optimization scheme,
%starting from a random point $\mathbf{y}$,
%it updates $\mathbf{x}$ by computing an approximate euclidean projection on the intersection of an $\ell_{1}$ and an $\ell_{2}$ balls,
%and alternates to update $\mathbf{y}$.
where the sparsity of the extracted components $\mathbf{x}$ and $\mathbf{y}$ is implicitly controlled by enforcing upper bounds $c_{1}$ and $c_{2}$ on their $\ell_{1}$ norm, respectively, with $1\le c_{1} \le \sqrt{m}$ and $ 1\le c_{2} \le \sqrt{n}$.
Here, for simplicity, we set ${c_{1} = c \sqrt{m}}$ and ${c_{2} = c \sqrt{n}}$ and consider multiple values of the constant $c$ in $(0, 1)$.
Note that under this configuration, for any given value of~$c$, we expect that the extracted components will be approximately equally sparse, relatively to their dimension.

For each~$c$, we first run the PMD algorithm $10$ times with random initializations,
determine the pair of components $\mathbf{x}$ and $\mathbf{y}$ that achieves the highest objective value, and count the number of nonzero entries of both components as a percentage of their corresponding dimension.
Subsequently, we run \algo (Alg.~\ref{algo:cca}) with parameters ${T=10^4}$, ${r=3}$, and target sparsity equal to that of the former PMD output.
Recall that our algorithm administers precise control on the number of nonzero entries of the extracted components.

Figure~\ref{fig:breast_comparison} depicts the objective value achieved by the two algorithms, 
as well as the corresponding sparsity level of the extracted components.
SpanCCA achieves a higher objective value in all cases.
Finally, note that under the above configuration, 
both algorithms run for a few seconds per target sparsity, 
with \algo running approximately half the time of PMD.

\subsection{Brain Imaging Dataset}
\label{sec:brain_imaging_dataset}
We analyzed functional statistical maps and behavioral variables from $497$ subjects available from the Human Connectome Project (HCP)~\cite{van2013wu}. 
The HCP consists of high-quality imaging and behavioral data, collected from a large sample of healthy adult subjects, motivated by the goal of advancing knowledge between human brain function and its association to behavior. 
%Our analysis aims 
We apply our algorithm to investigate the shared co-variation between patterns of brain activity as measured by the experimental tasks, and behavioral variables. 
We selected the same subset of behavioral variables examined by~\cite{smith2015positive}, which include scores from psychological tests, physiological measurements, and self reported behavior questionnaires
($\mathbf{Y}$ dataset with dimensions $497 \times 38$).

For each subject, we collected statistical maps corresponding to ``$n$-back'' task.
These statistical maps summarize the activation of each voxel in response to the experimental manipulation. 
In the ``$n$-back'' task, designed to measure working memory, items are presented one at a time and subjects identify each item that repeats relative to the item that occurred $n$ items before.
%In the relational processing fMRI task paradigm, participants are shown relational or matching object stimuli, and are told to respond to indicate differences between objects.\footnote{\url{http://www.cognitiveatlas.org/term/id/trm_550b5a47aa23e}.}
Further details on all tasks and variables are available in the HCP documentation~\cite{van2013wu}.

We used the pre-computed 2back - 0back statistical contrast maps provided by the HCP. 
Standard preprocessing included motion correction, image registration to the MNI template (for comparison across subjects), and general linear model analysis, resulting in $91 \times 109 \times 91$ voxels.
% The resulting maps were further down-sampled to (**) voxels using the nilearn python package (http://nilearn.github.io/). We then applied the standard brain mask, removing voxels outsize of the grey matter, resulting in ** variables.
Voxels are then resampled to $61 \times 73 \times 61$ using the \texttt{nilearn} python package\footnote{\url{http://nilearn.github.io/}} and applying standard brain masks, resulting in $65598$ voxels after masking non-grey matter regions.
($\mathbf{X}$ dataset with dimensions $497 \times 65598$).

We apply our \algo algorithm on the HCP data with arbitrarily selected parameters ${\mathrm{T}}=10^6$ and $r=5$. 
We set the target sparsity at $15\%$ for each canonical vector.
Figure~\ref{fig:brain_behavior} depicts the brain regions and the behavioral factors corresponding to the nonzero weights of the extracted canonical pair.
The map identifies a set of fronto-parietal regions known to be involved in executive function and working memory, which are the major functions isolated by the 2 back - 0-back contrast.  In addition, it identifies deactivation in the default mode areas (medial prefrontal and parietal), which is also associated with engagement of difficult cognitive functions. The behavioral variables associated with activation of this network are all related to various aspects of intelligence; the Penn Matrix Reasoning Test (\texttt{PMAT24}, a measure of fluid intelligence), picture vocabulary (\texttt{PicVocab}, a measure of language comprehension), and reading ability (\texttt{ReadEng}).

\begin{figure*}[t!]
\centerline{%
\setlength{\fboxsep}{0pt}%
\setlength{\fboxrule}{0pt}%
\begin{minipage}[td]{0.72\linewidth}
	\raisebox{-0.5\height}{%
		\fbox{%
			\includegraphics[width=1\textwidth]{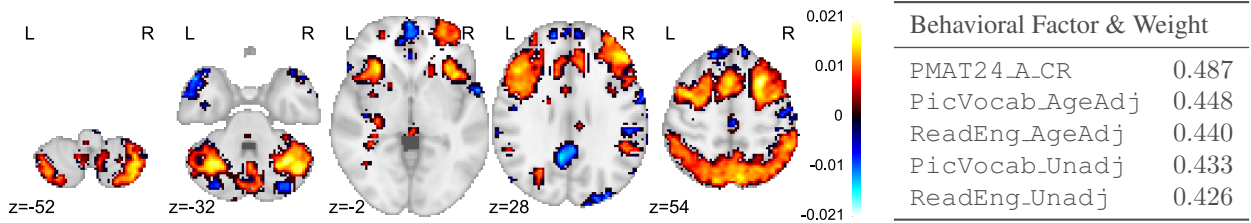}%
		}%
	}%
\end{minipage}%
%\hspace*{0.01\linewidth}
\definecolor{dark-gray}{gray}{0.30}
\begin{minipage}[t]{0.28\linewidth}%
	\centering
	\raisebox{+0.1\height}{%
	\color{dark-gray}
	\fbox{\begin{tabular}{ll}%
		\toprule
		\multicolumn{2}{l}{Behavioral Factor \& Weight}\\
		\midrule
			\texttt{PMAT24\_A\_CR} 		&$0.487$\\
			\texttt{PicVocab\_AgeAdj}	&$0.448$\\
			\texttt{ReadEng\_AgeAdj} 	&$0.440$\\
			\texttt{PicVocab\_Unadj} 	&$0.433$\\
			\texttt{ReadEng\_Unadj} 	&$0.426$\\
		\bottomrule
		\end{tabular}}}
\end{minipage}%
}% end centerline
\caption{%
	Brain regions and behavioral factors selected by the sparse left and right canonical vectors extracted by our \algo algorithm.
	Target sparsity is set at $15\%$ for each canonical vector
	and \algo is configured to run for ${\mathrm{T}}=10^6$ samples operating on a rank $r=5$ approximation of the input data. 
	The map identifies a set of fronto-parietal regions known to be involved in executive function and working memory
	and deactivation in the default mode areas (medial prefrontal and parietal), which is also associated with engagement of difficult cognitive functions. The behavioral variables identified to be positively correlated with the activation of this network are all related to various aspects of intelligence.
}
\label{fig:brain_behavior}
\end{figure*}

\paragraph{Parallelization}
To speed up execution,
our prototypical \texttt{Python} implementation of \algo exploits the \texttt{multiprocessing} module:
$\mathrm{N}$ independent worker processes are spawned, and each one independently performs $\mathrm{T}/\mathrm{N}$ rounds of the main iteration of Alg.~\ref{algo:cca} returning a single canonical vector pair.
The main process collects and compares the candidate pairs to determine the final output.

To demonstrate the parallelizability of our algorith, we run \algo for the aforementioned task on the brain imaging data for various values of the number $\mathrm{N}$ of workers on a single server with $36$ physical processing cores\footnote{Intel(R) Xeon(R) CPU E5-2699 v3 @ 2.30GHz} and approximately $250\mathrm{Gb}$ of main memory.
In Figure~\ref{fig:multicore_scaling} (top panel), we plot the run time with respect to the number of workers used. 
The bottom panel depicts the achieved speedup factor:
using the execution time on~$5$ worker processes as a reference value,
the speedup factor is the ratio of the execution time on $5$ processes over that on $\mathrm{N}$.
As expected, the algorithm achieved a speedup factor that grows almost linearly in the number of available processors.

\begin{figure}[t!]
\centering
\includegraphics[width=0.97\linewidth]
{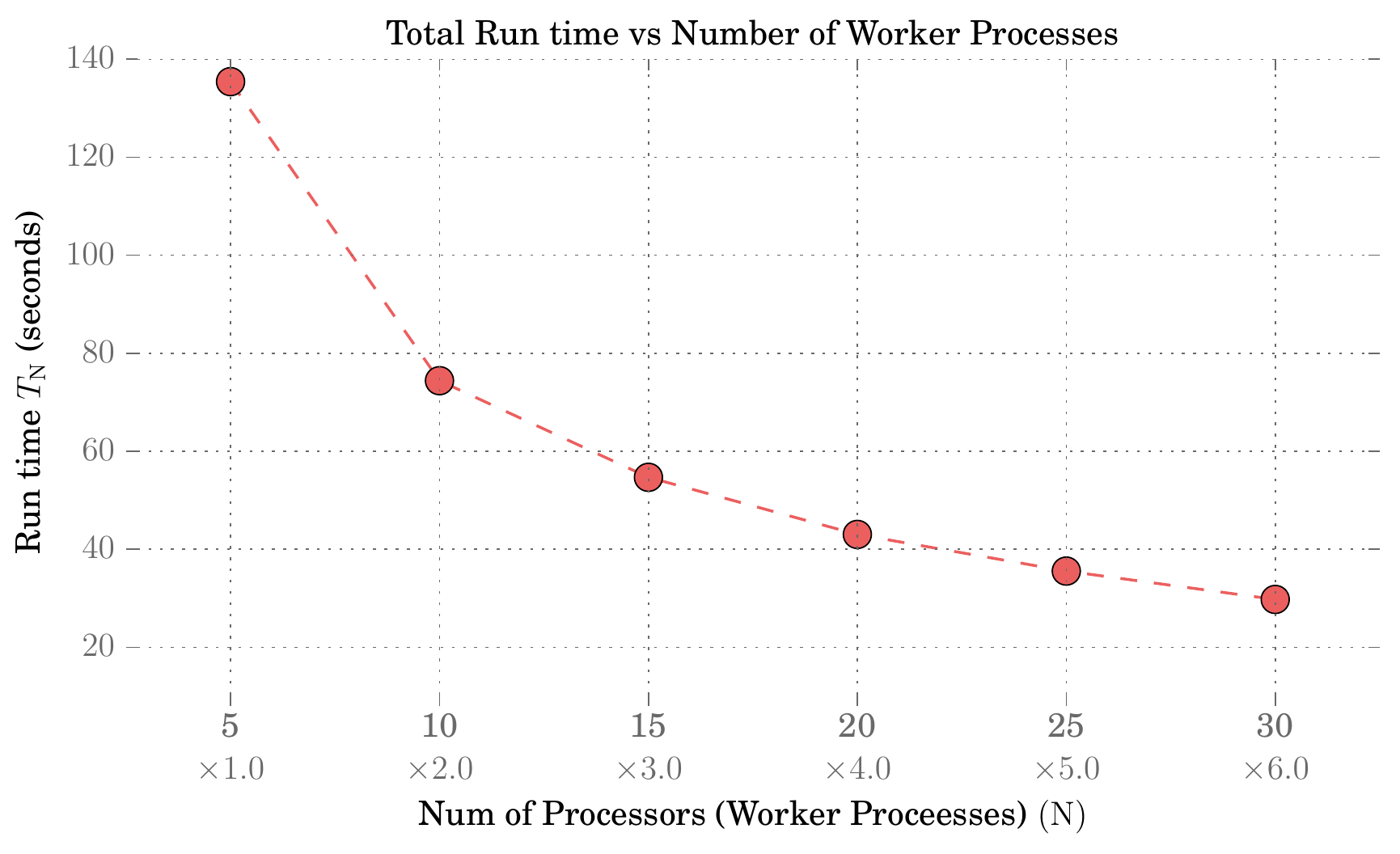}
\includegraphics[width=0.97\linewidth]{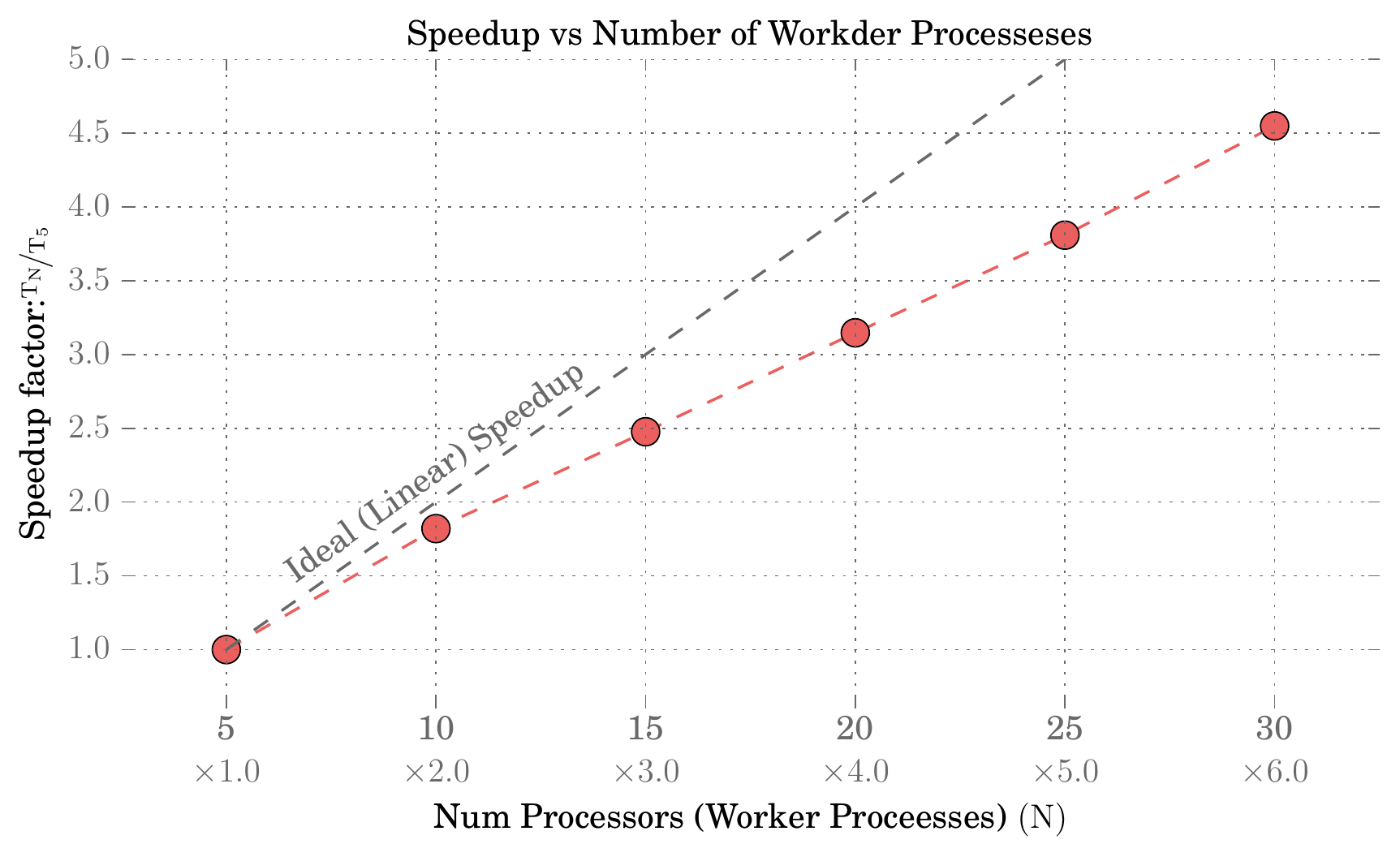}
\caption{%
Speedup factors and corresponding total execution time, achieved by the prototypical parallel implementation of SpannCCA (Alg.~\ref{algo:cca}) as a function of the number of worker processes or equivalently the number of processors used. 
Depicted values are medians over $20$ executions, each with ${T=10^5}$ and $r=5$, on the $65598 \times 38$ example discussed in section~\ref{sec:brain_imaging_dataset}.
A speedup factory approximately linear in the number of workers is achieved. 
}
\label{fig:multicore_scaling}
\end{figure}

\section{Discussion}
We presented a novel combinatorial algorithm for the sparse diagonal CCA problem and other constrained variants, with provable data-dependent global approximation guarantees and several attractive properties:
the algorithm is simple, embarrassingly parallelizable, with complexity that scales linearly in the dimension of the input data,
while it administers precise control on the sparsity of the extracted canonical vectors.
Further it can accommodate additional structural constraints by plugging in a suitable ``projection'' subroutine.

Several directions remain open. 
We addressed the question of computing a single pair of sparse canonical vectors.
Numerically, multiple pairs can be computed successively employing an appropriate deflation step.
However, determining the kind of deflation most suitable for the application at hand, 
as well as the sparsity level of each component can be a challenging task, leaving a lot of room for research.
%\sout{As evidenced by the several works in scientific research (\textit{e.g.}, in neuroscience or bioinformatics) that resort to CCA for investigating associations across datasets, exploring algorithmic solutions that yield meaningful results becomes increasingly important.}

% ==================================================
% ==================================================
% ==================================================
%\section{Acknowledgments}
%M.A. would like to acknowledge support from grants: NSF CCF 1422549, 1344364, 1344179 and an ARO YIP award.

% ==========================================
% ==========================================
% ==========================================
%\newpage
%\clearpage
\begin{small}
\bibliography{cca}
\bibliographystyle{icml2016}
\end{small}

% ==========================================
% ==========================================
% ==========================================
\clearpage
\newpage
%\appendix
%!TEX root = cca.tex
% -------------------------------------------------
% -------------------------------------------------

\section{Hardness}\label{sec:hardness}
We provide a proof for the NP-hardness of the constrained (and specifically sparse) CCA problem via a reduction from sparse PCA. 
Recall that sparse PCA is the following optimization problem:
\begin{align}
   \max_{
      \substack{
         \mathbf{x} :
         \|\mathbf{x}\|_{0} = k\\
         \|\mathbf{x}\|_{2} = 1
      }
   }
   \mathbf{x}^{\transpose}
   \mathbf{A}
   \mathbf{x},
   \label{pca:def}
\end{align}
where~$k$ is a given parameter and~$\mathbf{A}$ a given $n \times n$ positive semidefinite  (PSD) matrix.

We show that the sparse PCA problem~\eqref{pca:def} reduces to the sparse CCA problem~\eqref{cca:def} and in particular the maximization
\begin{align}
   \max_{
      \substack{
         \mathbf{x} :
         \|\mathbf{x}\|_{0} = k,
         \|\mathbf{x}\|_{2} = 1\\
         \mathbf{y} :
         \|\mathbf{y}\|_{0} = k,
         \|\mathbf{y}\|_{2} = 1
      }
   }
   \mathbf{x}^{\transpose}
   \mathbf{A}
   \mathbf{y}.
   \label{cca:reduction:sparse}
\end{align}
The only difference between~\eqref{pca:def} and~\eqref{cca:reduction:sparse} is that in the latter the optimal values for the two variables $\mathbf{x}$ and $\mathbf{y}$ may be different.
If we add the constraint $\mathbf{x}=\mathbf{y}$ in
~\eqref{cca:reduction:sparse}, then then two maximizations are identical. 
We show that this is not necessary: since $\mathbf{A}$ is PSD, the optimal solution of~\eqref{cca:reduction:sparse} will inherently satisfy $\mathbf{x}=\mathbf{y}$, and in turn the two maximizations are equivalent.

Let $\mathbf{U}, \mathbf{\Lambda}$ be the eigenvalue decomposition of~$\mathbf{A}$: the $n \times n$ matrix $\mathbf{U}$ contains the eigenvectors, while the $n \times n$ diagonal $\mathbf{\Lambda}$ contains the eigenvalues $\lambda_{1}, \hdots, \lambda_{n} \ge 0$ in decreasing order.
Let ($\mathbf{x}_{\star}$, $\mathbf{y}_{\star}$) be the optimal solution of~\eqref{cca:reduction:sparse}.
Further, let 
$\overline{\mathbf{x}} = \mathbf{U}^{\transpose}\mathbf{x}_{\star}$,
and
$\overline{\mathbf{y}} = \mathbf{U}^{\transpose}\mathbf{y}_{\star}$.
Then,
\begin{align}
   \mathbf{x}_{\star}^{\transpose}
   \mathbf{A}
   \mathbf{y}_{\star}
   =
   \mathbf{x}_{\star}^{\transpose}
   \mathbf{U}\mathbf{\Lambda}\mathbf{U}^{\transpose}
   \mathbf{y}_{\star}
   =
   \overline{\mathbf{x}}^{\transpose}
   \mathbf{\Lambda}
   \overline{\mathbf{y}}
   =
   \sum_{i=1}^{n}
      \overline{x}_{i} \overline{y}_{i} \lambda_{i}.
\end{align}

\begin{theorem}[Weighted Cauchy-Schwarz inequality; \citeappendix{cvetkovski:inequalities}, Theorem 10.1]
\label{thm:weighted_cs}
Let $a_{i}, b_{i} \in \mathbb{R}$ be real numbers and let ${m_{i} \in \mathbb{R}^{+}}$,
$i=1,2,\hdots, n$.
Then,
\begin{align}
   \left(
   \sum_{i=1}^{n}
      a_{i} b_{i} m_{i}
   \right)^{2}
   \le
   \left(
      \sum_{i=1}^{n}
         a_{i}^{2} m_{i}
      \right)
   \left(
      \sum_{i=1}^{n}
         b_{i}^{2} m_{i}
   \right).
   \nonumber
\end{align}
Equality occurs if and only if $\tfrac{a_{1}}{b_{2}} = \hdots = \tfrac{a_{n}}{b_{n}}$.
\end{theorem}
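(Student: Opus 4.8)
The plan is to reduce the weighted inequality to the ordinary (unweighted) Cauchy--Schwarz inequality by absorbing the strictly positive weights into the two sequences. Since each $m_{i} > 0$, the square roots $\sqrt{m_{i}}$ are real and positive, so I would set $\alpha_{i} \eqdef a_{i}\sqrt{m_{i}}$ and $\beta_{i} \eqdef b_{i}\sqrt{m_{i}}$ for $i = 1, \dots, n$. Under this substitution the three sums in the statement become $\sum_{i}\alpha_{i}\beta_{i}$, $\sum_{i}\alpha_{i}^{2}$, and $\sum_{i}\beta_{i}^{2}$, respectively, so the asserted bound is exactly $(\sum_{i}\alpha_{i}\beta_{i})^{2} \le (\sum_{i}\alpha_{i}^{2})(\sum_{i}\beta_{i}^{2})$, the classical Cauchy--Schwarz inequality in $\mathbb{R}^{n}$, which I may invoke directly.

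For a self-contained argument that also delivers the equality case, I would instead consider the real quadratic $f(t) \eqdef \sum_{i=1}^{n} m_{i}\,(t\,a_{i} - b_{i})^{2}$. Expanding gives $f(t) = t^{2}\bigl(\sum_{i} m_{i} a_{i}^{2}\bigr) - 2t\bigl(\sum_{i} m_{i} a_{i} b_{i}\bigr) + \bigl(\sum_{i} m_{i} b_{i}^{2}\bigr)$, and since every $m_{i} > 0$ each summand $m_{i}(t a_{i} - b_{i})^{2}$ is nonnegative, so $f(t) \ge 0$ for all $t \in \mathbb{R}$. A nonnegative quadratic in $t$ has nonpositive discriminant; writing this out and dividing by $4$ yields $(\sum_{i} m_{i} a_{i} b_{i})^{2} \le (\sum_{i} m_{i} a_{i}^{2})(\sum_{i} m_{i} b_{i}^{2})$, as claimed.

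For the equality characterization, equality in the inequality forces the discriminant of $f$ to vanish, so $f$ has a real root $t_{0}$ with $f(t_{0}) = 0$. Because every weight $m_{i}$ is strictly positive, $f(t_{0}) = 0$ is possible only when $t_{0} a_{i} = b_{i}$ for every $i$, which is precisely the stated proportionality $a_{1}/b_{1} = \dots = a_{n}/b_{n}$. I do not expect any genuine obstacle, since this is a textbook fact; the only point deserving mild care is the bookkeeping of the equality condition at degenerate indices where $a_{i}$ or $b_{i}$ vanishes, which the ratio notation in the statement tacitly absorbs.
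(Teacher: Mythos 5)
Your proof is correct, but there is nothing in the paper to compare it against: the paper does not prove Theorem~\ref{thm:weighted_cs} at all, it imports it by citation as Theorem~10.1 of the referenced inequalities text and uses it as a black box in the NP-hardness reduction. Both of your routes are the standard ones and both work: the substitution $\alpha_{i}=a_{i}\sqrt{m_{i}}$, $\beta_{i}=b_{i}\sqrt{m_{i}}$ is an immediate reduction to the unweighted Cauchy--Schwarz inequality (valid precisely because $m_{i}>0$), while the discriminant argument for $f(t)=\sum_{i} m_{i}(ta_{i}-b_{i})^{2}\ge 0$ is self-contained and yields the equality case for free. Two small points of hygiene: in the discriminant argument you should dispose separately of the degenerate case $\sum_{i} m_{i}a_{i}^{2}=0$ (all $a_{i}=0$), where $f$ is not a genuine quadratic but the inequality holds trivially with equality; and your remark about indices where $b_{i}$ vanishes is well taken --- the cleaner statement of the equality condition is that the vectors $(a_{i})$ and $(b_{i})$ are linearly dependent, which also quietly corrects the typo $\tfrac{a_{1}}{b_{2}}$ in the theorem as printed. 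Neither issue affects the validity of your argument or of the paper's use of the result, which only needs the equality case under the additional normalization $\|\overline{\mathbf{x}}\|_{2}=\|\overline{\mathbf{y}}\|_{2}=1$ to force $\overline{\mathbf{x}}=\overline{\mathbf{y}}$.
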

By Theorem~\ref{thm:weighted_cs},
\begin{align}
   \left(
      %\sum_{i=1}^{n}
      %   \overline{x}_{i} \overline{y}_{i} \lambda_{i}
      \mathbf{x}_{\star}^{\transpose}
      \mathbf{A}
      \mathbf{y}_{\star}
   \right)^{2}
   \le
   \left(
      \sum_{i=1}^{n}
         \overline{x}_{i}^{2} \lambda_{i}
      \right)
   \left(
      \sum_{i=1}^{n}
         \overline{y}_{i}^{2} \lambda_{i}
   \right),
   \nonumber
\end{align}
with equality if and only if there exists a constant $c \in \mathbb{R}$ such that
$\overline{\mathbf{x}} = c \cdot \overline{\mathbf{y}}$,
and taking into account that both $\overline{\mathbf{x}}$ and $\overline{\mathbf{y}}$ are unit norm vectors, it follows that $\overline{\mathbf{x}} = \overline{\mathbf{y}}$.
Finally, since $\mathbf{U}$ is a full rank matrix (in fact orthonormal basis of $\mathbb{R}^{n}$), it follows that
$\mathbf{x}_{\star}=\mathbf{y}_{\star}$ must hold.

% -------------------------------------------------
% -------------------------------------------------
\section{Proofs}
\label{sec:approx-proof}
% -------------------------------------------------
% -------------------------------------------------
For the remainder of this section, we define
\begin{align}
   \left(
      \mathbf{u}_{\star}, \mathbf{v}_{\star}
   \right)
   \eqdef
   \argmax_{\mathbf{u} \in \mathcal{U}, \mathbf{v} \in \mathcal{V}} 
      \mathbf{u}^{\transpose}\mathbf{\Sigma_{\mathsmaller{\mathrm{X}\mathrm{Y}}}}\mathbf{v},
   \nonumber
\end{align}
\textit{i.e.}, $\mathbf{u}_{\star}, \mathbf{v}_{\star}$ is a feasible pair that maximizes --not necessarily uniquely-- the objective.
Here, $\mathbf{\Sigma_{\mathsmaller{\mathrm{X}\mathrm{Y}}}}$ is a given $m \times n$ real matrix, which plays the role of
Further, we assume that there exists procedures to compute the exact soluton to
$\mathsf{P}_{\mathcal{U}}(\cdot)$ and $\mathsf{P}_{\mathcal{V}}(\cdot)$ in~\eqref{lowrank:solve-for-x} and~\eqref{lowrank:solve-for-y}, running in time 
$\timet_{\mathcal{U}}$ and $\timet_{\mathcal{V}}$, respectively.
The following results can be easily adapted for the case where these procedures yield approximate solutions.

\begin{lemma}
\label{core-rank-r-guarantees}
For any real ${m \times n}$ matrix~$\mathbf{\Sigma_{\mathsmaller{\mathrm{X}\mathrm{Y}}}}$ with $\text{rank}(\mathbf{\Sigma_{\mathsmaller{\mathrm{X}\mathrm{Y}}}})= r \le \max\lbrace m, n\rbrace$
and 
$\epsilon \in (0, 1)$,
Algorithm~\ref{algo:cca} with input $\mathbf{\Sigma_{\mathsmaller{\mathrm{X}\mathrm{Y}}}}$, $r$,
and $T=\widetilde{O}\bigl(2^{r \cdot \log_{2}(\sfrac{2}{\epsilon})} \bigr)$
outputs $\mathbf{u}_{\sharp} \in \mathcal{U}$
and $\mathbf{v}_{\sharp} \in \mathcal{V}$ such that
   \begin{align}
     \mathbf{u}_{\sharp}^{\transpose}
     \mathbf{\Sigma_{\mathsmaller{\mathrm{X}\mathrm{Y}}}}
     \mathbf{v}_{\sharp}
     \ge
     \mathbf{u}_{\star}^{\transpose} \mathbf{\Sigma_{\mathsmaller{\mathrm{X}\mathrm{Y}}}} \mathbf{v}_{\star}
     - 
     \epsilon \cdot \sigma_{1}(\mathbf{\Sigma_{\mathsmaller{\mathrm{X}\mathrm{Y}}}}),
     \nonumber
   \end{align}
   in time 
   $
   \timet_{\mathsmaller{\mathsf{SVD}}}(r) 
   + O\mathopen{}\bigl(
      T \cdot \bigl(\timet_{\mathcal{U}}+\timet_{\mathcal{V}}+r \cdot \max\lbrace m, n\rbrace \bigr)\bigr)
   $.
\end{lemma}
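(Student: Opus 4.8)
The plan is to reduce the claim to the low-rank analysis already sketched in the main text. The crucial observation is that when $\text{rank}(\mathbf{\Sigma_{\mathsmaller{\mathrm{X}\mathrm{Y}}}})=r$, the rank-$r$ truncated SVD computed in the first step of Algorithm~\ref{algo:cca} reconstructs the input exactly, so that $\mathbf{B}=\mathbf{U}\mathbf{\Sigma}\mathbf{V}^{\transpose}=\mathbf{\Sigma_{\mathsmaller{\mathrm{X}\mathrm{Y}}}}$. Consequently the surrogate problem~\eqref{cca:on-low-rank} coincides with the original problem~\eqref{cca:def}, the maximizing pair $(\mathbf{u}_{\mathsmaller{\mathtt{(B)}}},\mathbf{v}_{\mathsmaller{\mathtt{(B)}}})$ of the low-rank objective equals the global optimizer $(\mathbf{u}_{\star},\mathbf{v}_{\star})$, and the additive term $\sigma_{r+1}(\mathbf{\Sigma_{\mathsmaller{\mathrm{X}\mathrm{Y}}}})$ vanishes identically. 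It therefore suffices to establish the low-rank guarantee~\eqref{eq:guarantees_for_low_rank} and substitute $\mathbf{B}=\mathbf{\Sigma_{\mathsmaller{\mathrm{X}\mathrm{Y}}}}$.

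First I would set up the geometry of the optimal direction inside the principal subspace exactly as in the Analysis subsection: define $\mathbf{c}_{\mathsmaller{\mathtt{(B)}}}\eqdef\mathbf{V}^{\transpose}\mathbf{v}_{\star}$, set $\rho\eqdef\|\mathbf{c}_{\mathsmaller{\mathtt{(B)}}}\|_{2}\in(0,1]$ (the upper bound by orthonormality of $\mathbf{V}$, the lower by the positivity assumption on the optimal objective), and normalize to $\overline{\mathbf{c}}_{\mathsmaller{\mathtt{(B)}}}=\mathbf{c}_{\mathsmaller{\mathtt{(B)}}}/\rho\in\mathbb{S}_{2}^{r-1}$. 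Invoking Lemma~\ref{lem:num-of-random-points} with the stated $T=\widetilde{O}\bigl(2^{r\cdot\log_{2}(\sfrac{2}{\epsilon})}\bigr)=\widetilde{O}\bigl((\sfrac{2}{\epsilon})^{r}\bigr)$, with high probability the sampled directions $\lbrace\mathbf{c}_{i}\rbrace$ form an $\sfrac{\epsilon}{2}$-net of $\mathbb{S}_{2}^{r-1}$, so there is an index $i_{\star}$ satisfying the net property~\eqref{eq:net_property_main}. I would then reproduce the two-step inequality chain: using the definition of $\overline{\mathbf{c}}_{\mathsmaller{\mathtt{(B)}}}$, the optimality of $\mathbf{u}_{i_{\star}}$ for the first projection, and the net bound, I recover~\eqref{base-inequality_in_main}; using the optimality of $\mathbf{v}_{i_{\star}}$ for the second projection together with the same net bound, I recover~\eqref{bound_on_sharp_x_in_main}. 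Adding these gives~\eqref{eq:guarantees_for_low_rank}, and since $(\mathbf{u}_{\sharp},\mathbf{v}_{\sharp})$ is selected to maximize the objective over all $T$ rounds we have $\mathbf{u}_{\sharp}^{\transpose}\mathbf{B}\mathbf{v}_{\sharp}\ge\mathbf{u}_{i_{\star}}^{\transpose}\mathbf{B}\mathbf{v}_{i_{\star}}$, which closes the argument after substituting $\mathbf{B}=\mathbf{\Sigma_{\mathsmaller{\mathrm{X}\mathrm{Y}}}}$.

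The step I expect to require the most care is the bound on the residual inner product $\rho\cdot\mathbf{u}_{\star}^{\transpose}\mathbf{U}\mathbf{\Sigma}\bigl(\overline{\mathbf{c}}_{\mathsmaller{\mathtt{(B)}}}-\mathbf{c}_{i_{\star}}\bigr)\le\tfrac{\epsilon}{2}\cdot\sigma_{1}(\mathbf{\Sigma_{\mathsmaller{\mathrm{X}\mathrm{Y}}}})$ appearing in~\eqref{base-inequality_in_main}, and its analogue in~\eqref{bound_on_sharp_x_in_main}. Here I would apply Cauchy--Schwarz together with $\|\mathbf{u}_{\star}\|_{2}=1$, $\rho\le 1$, the identity $\|\mathbf{U}\mathbf{\Sigma}\|_{2}=\sigma_{1}(\mathbf{\Sigma_{\mathsmaller{\mathrm{X}\mathrm{Y}}}})$ (the auxiliary estimate of Lemma~\ref{lemma:abs-trace-XAY-ub}), and the net estimate $\|\overline{\mathbf{c}}_{\mathsmaller{\mathtt{(B)}}}-\mathbf{c}_{i_{\star}}\|_{2}\le\sfrac{\epsilon}{2}$; for the second residual one must additionally rewrite $\mathbf{u}_{i_{\star}}^{\transpose}\mathbf{U}\mathbf{\Sigma}\mathbf{V}^{\transpose}\mathbf{v}_{\star}=\mathbf{u}_{i_{\star}}^{\transpose}\mathbf{B}\mathbf{v}_{\star}$ and bound it by $\mathbf{u}_{i_{\star}}^{\transpose}\mathbf{B}\mathbf{v}_{i_{\star}}$ via the optimality of $\mathbf{v}_{i_{\star}}$ before invoking the residual estimate. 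The running time then follows by inspection: the truncated SVD costs $\timet_{\mathsmaller{\mathsf{SVD}}}(r)$, and each of the $T$ rounds costs $\timet_{\mathcal{U}}+\timet_{\mathcal{V}}$ for the two projections plus $O\bigl(r\cdot\max\lbrace m,n\rbrace\bigr)$ for the matrix--vector products, which exploit the factored form $\mathbf{U}\mathbf{\Sigma}\mathbf{V}^{\transpose}$ rather than forming $\mathbf{B}$ explicitly.
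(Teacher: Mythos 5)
Your proposal is correct and follows essentially the same route as the paper's own proof: both exploit that the rank-$r$ truncated SVD reconstructs $\mathbf{\Sigma_{\mathsmaller{\mathrm{X}\mathrm{Y}}}}$ exactly, set up $\mathbf{c}_{\star}=\mathbf{V}^{\transpose}\mathbf{v}_{\star}$ with its normalization, invoke the $\sfrac{\epsilon}{2}$-net to find a nearby sampled direction, and chain the two optimality inequalities with the Cauchy--Schwarz residual bound $\|\mathbf{U}\mathbf{\Sigma}\|_{2}=\sigma_{1}(\mathbf{\Sigma_{\mathsmaller{\mathrm{X}\mathrm{Y}}}})$ to lose at most $\tfrac{\epsilon}{2}\cdot\sigma_{1}$ per step. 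The only cosmetic quibble is that the two bounds are combined by substitution (chaining) rather than literally ``added,'' but the argument and the runtime accounting are exactly the paper's.
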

% -------------------------------------------------
% -------------------------------------------------
\begin{proof}
   In the sequel, $\mathbf{U}$, $\mathbf{\Sigma}$ and $\mathbf{V}$ are used to denote the $r$-truncated singular value decomposition of $\mathbf{\Sigma_{\mathsmaller{\mathrm{X}\mathrm{Y}}}}$.
   Note that the lemma assumes that the accuracy parameter $r$ is equal to the rank of the input matrix $\mathbf{\Sigma_{\mathsmaller{\mathrm{X}\mathrm{Y}}}}$ and hence ${\mathbf{\Sigma_{\mathsmaller{\mathrm{X}\mathrm{Y}}}} = \mathbf{U}\mathbf{\Sigma}\mathbf{V}^{\transpose}}$.

   Recall that $\mathbf{u}_{\star}, \mathbf{v}_{\star}$ is a pair ---not necessarily unique--- that maximizes the objective $\mathbf{u}^{\transpose}\mathbf{\Sigma_{\mathsmaller{\mathrm{X}\mathrm{Y}}}}\mathbf{v}$ over all feasible solutions.
   Define $\mathbf{c}_{\star} \eqdef \mathbf{V}^{\transpose}\mathbf{v}_{\star}$.
   Note that
   $\mathbf{c}_{\star}$ is a vector in $\mathbf{R}^{r \times 1}$ with $\|\mathbf{c}_{\star}\|_{2} \le 1$
   since the $r$ columns of $\mathbf{V}$ are orthonormal and ${\|\mathbf{v}_{\star}\|_{2} = 1}$.
   Finally, let $\overline{\mathbf{c}}_{\star} \eqdef \mathbf{c}_{\star} /\|\mathbf{c}_{\star}\|_{2}$.
   Note that $\|\mathbf{c}_{\star}\|_{2} > 0$ since by assumption $\mathbf{u}_{\star}^{\transpose}\mathbf{\Sigma_{\mathsmaller{\mathrm{X}\mathrm{Y}}}}\mathbf{v}_{\star} > 0$.

   Algorithm~\ref{algo:cca} operates in an iterative fashion. In each iteration, it independently considers a point~$\mathbf{c}$ selected randomly and uniformly from the $r$-dimensional $\ell_{2}$-unit sphere $\mathbb{S}_{2}^{r-1}$ and generates a candidate solution pair at each point.
   For $\mathrm{T}=\widetilde{O}\bigl(2^{r \cdot \log_{2}(\sfrac{2}{\epsilon})} \bigr)$,
   the collection of randomly sampled poins forms an $\sfrac{\epsilon}{2}$-net for $\mathbb{S}_{2}^{r-1}$.
   By definition, the $\sfrac{\epsilon}{2}$-net contains a point $\widetilde{\mathbf{c}} \in \mathbb{R}^{r \times 1}$, such that
   \begin{align}\label{eq:net_property}
      \|\widetilde{\mathbf{c}} - \overline{\mathbf{c}}_{\star}\|_{2} \le \epsilon/2.
   \end{align}
      Let $(\widetilde{\mathbf{u}}, \widetilde{\mathbf{v}})$ be the candidate solution pair computed at $\widetilde{\mathbf{c}}$ by the two step maximization procedure, \textit{i.e.}, let
   \begin{align}
      \widetilde{\mathbf{u}}
      \;\eqdef\;
      \argmax_{\mathbf{u}\in \mathcal{U}}
         \mathbf{u}^{\transpose}
         \mathbf{U}\mathbf{\Sigma}
         \widetilde{\mathbf{c}}
   %   \nonumber
\end{align}
and
\begin{align}
      \widetilde{\mathbf{v}}
      \;\eqdef\;
      \argmax_{\mathbf{v}\in \mathcal{V}}
         \widetilde{\mathbf{u}}^{\transpose}
         \mathbf{\Sigma_{\mathsmaller{\mathrm{X}\mathrm{Y}}}}
         \mathbf{v}.
      \label{def-of-nearXandY}
   \end{align}
   
%   \textcolor{red}{First, we note that forming an $\sfrac{\epsilon}{2}$-net over the whole ball is unnecessary. 
%   To see this, consider the case where $\rho \eqdef \|\mathbf{c}_{\star}\|_2 \ll 1$. 
%   Then, according to the net definition above, there exists a net vector $\widetilde{\mathbf{c}} $, such that \eqref{eq:net_property} is satisfied. % and for this $\widetilde{\mathbf{c}}$, one computes \eqref{def-of-nearXandY}. 
%   However, it is obvious that the optimizations in \eqref{def-of-nearXandY} are invariant to scaling: \textit{i.e.}, 
%   \begin{align*}
%   	\argmax_{\mathbf{u}\in \mathcal{U}}
%         \mathbf{u}^{\transpose}
%         \mathbf{U}\mathbf{\Sigma}
%         \widetilde{\mathbf{c}}  \equiv 
%      \argmax_{\mathbf{u}\in \mathcal{U}}
%        \rho \cdot \mathbf{u}^{\transpose}
%         \mathbf{U}\mathbf{\Sigma}
%         \bar{\mathbf{c}}_{\sharp} 
%   \end{align*} where $\bar{\mathbf{c}}_{\sharp}$ belongs to the net that ``lives'' only on the surface of the unit ball and, thus, $\|\bar{\mathbf{c}}_{\sharp}\|_2 = 1, ~\forall \bar{\mathbf{c}}_{\sharp}$. 
%   In other words, the scaling of the net vectors do not play key role for this task.}

   By the definition of $\mathbf{c}_{\star}$, 
   and letting $\rho\eqdef \|\mathbf{c}_{\star}\|_{2}$
   \begin{align}
      {\mathbf{u}}_{\star}^{\transpose}
      \mathbf{\Sigma_{\mathsmaller{\mathrm{X}\mathrm{Y}}}}
      {\mathbf{v}}_{\star}
      &=
      \mathbf{u}_{\star}^{\transpose}
      \mathbf{U}\mathbf{\Sigma}
      \mathbf{c}_{\star}
      \nonumber\\&=
         \rho \cdot 
         \mathbf{u}_{\star}^{\transpose}
         \mathbf{U}\mathbf{\Sigma}
         \overline{\mathbf{c}}_{\star}
      \nonumber\\&=
      \rho \cdot 
         \mathbf{u}_{\star}^{\transpose}
         \mathbf{U}\mathbf{\Sigma}
         \widetilde{\mathbf{c}}
      +
         \rho \cdot 
         \mathbf{u}_{\star}^{\transpose}
         \mathbf{U}\mathbf{\Sigma}
         \bigl(\overline{\mathbf{c}}_{\star}-\widetilde{\mathbf{c}}\bigr)
      \nonumber\\&\le
         \rho \cdot 
         \widetilde{\mathbf{u}}^{\transpose}
         \mathbf{U}\mathbf{\Sigma}
         \widetilde{\mathbf{c}}
      +
         \rho \cdot 
         \mathbf{u}_{\star}^{\transpose}
         \mathbf{U}\mathbf{\Sigma}
         \bigl(\overline{\mathbf{c}}_{\star}-\widetilde{\mathbf{c}}\bigr)
      \nonumber\\&\le
         \rho \cdot 
         \widetilde{\mathbf{u}}^{\transpose}
         \mathbf{U}\mathbf{\Sigma}
         \widetilde{\mathbf{c}}
      +
         \tfrac{\epsilon}{2} \cdot \sigma_{1}(\mathbf{\Sigma_{\mathsmaller{\mathrm{X}\mathrm{Y}}}})
      .
      \label{base-inequality}
   \end{align}
   The first inequality follows from the fact that $\widetilde{\mathbf{u}}$ by definition maximizes the first term over all $\mathbf{u}\in \mathcal{U}$.
   The last inequality is due to Lemma~\ref{lemma:abs-trace-XAY-ub} and the fact that $\|\mathbf{u}_{\star}\|_{2}=1$ and $\rho \le 1$.
   We further upper bound the right hand side of~\eqref{base-inequality} as follows:
   \begin{align}
      &  \rho \cdot 
         \widetilde{\mathbf{u}}^{\transpose}
         \mathbf{U}\mathbf{\Sigma}
         \widetilde{\mathbf{c}}
      \nonumber\\&=
         \rho \cdot 
         \widetilde{\mathbf{u}}^{\transpose}
         \mathbf{U}\mathbf{\Sigma}
         \overline{\mathbf{c}}_{\star}
         +
         \rho \cdot 
         \widetilde{\mathbf{u}}^{\transpose}
         \mathbf{U}\mathbf{\Sigma}
         \left(\widetilde{\mathbf{c}} - \overline{\mathbf{c}}_{\star}\right)
      \nonumber\\&=
         \widetilde{\mathbf{u}}^{\transpose}
         \mathbf{U}\mathbf{\Sigma}
         \mathbf{c}_{\star}
         +
         \rho \cdot 
         \widetilde{\mathbf{u}}^{\transpose}
         \mathbf{U}\mathbf{\Sigma}
         \left(\widetilde{\mathbf{c}} - \overline{\mathbf{c}}_{\star}\right)
      \nonumber\\&=
         \widetilde{\mathbf{u}}^{\transpose}
         \mathbf{U}\mathbf{\Sigma}
         \mathbf{V}^{\transpose}
         \mathbf{v}_{\star}
         +
         \rho \cdot 
         \widetilde{\mathbf{u}}^{\transpose}
         \mathbf{U}\mathbf{\Sigma}
         \left(\widetilde{\mathbf{c}} - \overline{\mathbf{c}}_{\star}\right)
      \nonumber\\&\le
         \widetilde{\mathbf{u}}^{\transpose}
         \mathbf{U}\mathbf{\Sigma}
         \mathbf{V}^{\transpose}
         \widetilde{\mathbf{v}}
         +
         \rho \cdot 
         \widetilde{\mathbf{u}}^{\transpose}
         \mathbf{U}\mathbf{\Sigma}
         \left(\widetilde{\mathbf{c}} - \overline{\mathbf{c}}_{\star}\right)
      \label{optimal_y_sharp}
      \\&\le
         \widetilde{\mathbf{u}}^{\transpose}
         \mathbf{\Sigma_{\mathsmaller{\mathrm{X}\mathrm{Y}}}}
         \widetilde{\mathbf{v}}
         +
         \tfrac{\epsilon}{2} \cdot \sigma_{1}(\mathbf{\Sigma_{\mathsmaller{\mathrm{X}\mathrm{Y}}}}).
         \label{bound_on_sharp_x}
   \end{align}
   Inequality~\eqref{optimal_y_sharp} follows by the fact that $\widetilde{\mathbf{v}}$ by definition~\eqref{def-of-nearXandY} maximizes the bilinear term $\mathbf{u}^{\transpose}\mathbf{\Sigma_{\mathsmaller{\mathrm{X}\mathrm{Y}}}}\mathbf{v}$ over all ${\mathbf{v} \in \mathcal{V}}$ when $\mathbf{u}=\widetilde{\mathbf{u}}$.
   The last inequality is once again due to Lemma~\ref{lemma:abs-trace-XAY-ub} and the fact that $\|\mathbf{u}_{\star}\|_{2}=1$ and $\rho \le 1$.
   Combining~\eqref{base-inequality} and~\eqref{bound_on_sharp_x}, we obtain
   \begin{align}
         \widetilde{\mathbf{u}}^{\transpose}
         \mathbf{\Sigma_{\mathsmaller{\mathrm{X}\mathrm{Y}}}}
         \widetilde{\mathbf{v}}
      \ge
         \mathbf{u}_{\star}^{\transpose}
         \mathbf{\Sigma_{\mathsmaller{\mathrm{X}\mathrm{Y}}}}
         \mathbf{v}_{\star}
      -
      \epsilon
      \cdot \sigma_{1}(\mathbf{\Sigma_{\mathsmaller{\mathrm{X}\mathrm{Y}}}}).
      \nonumber
   \end{align}
   Algorithm~\ref{algo:cca} computes multiple candidate solution pairs and outputs the one that maximizes the objective.
   Therefore, the output pair $(\mathbf{u}_{\sharp}, \mathbf{v}_{\sharp})$ must achieve a value as least as high as that achieved by 
   $(\widetilde{\mathbf{u}}, \widetilde{\mathbf{v}})$,
   which implies the desired guarantee. 

   The running time of Algorithm~\ref{algo:cca} follows straightforwardly by inspection.
   The algorithm first computes the truncated singular value decomposition of inner dimension $r$ in time denoted by $\timet_{\texttt{SVD}}(r)$.
   Subsequently, it performs $T$ iterations.
   The cost of each iteration is determined by the cost of the matrix-vector multiplications and the running times $\timet_{\mathcal{U}}$ and $\timet_{\mathcal{V}}$ of the operators $\mathsf{P}_{\mathcal{U}}(\cdot)$ and $\mathsf{P}_{\mathcal{V}}(\cdot)$. 
   Note that matrix multiplications can exploit the available singular value decomposition of $\mathbf{\Sigma_{\mathsmaller{\mathrm{X}\mathrm{Y}}}}$ and are performed in time ${r \cdot \max\lbrace m, n\rbrace}$.
   Substituting the value of $\mathrm{T}$, completes the proof.
\end{proof}
% -------------------------------------------------
% -------------------------------------------------
\begin{reptheorem}{thm:main-algo-guarantees}
For any real ${m \times n}$ matrix~$\mathbf{\Sigma_{\mathsmaller{\mathrm{X}\mathrm{Y}}}}$,
$\epsilon \in (0, 1)$, and $r \le \max\lbrace m, n\rbrace$,
Algorithm~\ref{algo:cca} with input $\mathbf{\Sigma_{\mathsmaller{\mathrm{X}\mathrm{Y}}}}$, $r$,
and $T=\widetilde{O}\bigl(2^{r \cdot \log_{2}(\sfrac{2}{\epsilon})} \bigr)$
outputs $\mathbf{u}_{\sharp} \in \mathcal{U}$
and $\mathbf{v}_{\sharp} \in \mathcal{V}$ such that
   \begin{align}
     \mathbf{u}_{\sharp}^{\transpose}
     \mathbf{\Sigma_{\mathsmaller{\mathrm{X}\mathrm{Y}}}}
     \mathbf{v}_{\sharp}
     \ge
     \mathbf{u}_{\star}^{\transpose} \mathbf{\Sigma_{\mathsmaller{\mathrm{X}\mathrm{Y}}}} \mathbf{v}_{\star}
     - 
     \epsilon \cdot \sigma_{1}(\mathbf{\Sigma_{\mathsmaller{\mathrm{X}\mathrm{Y}}}})
     -
     2 \cdot \sigma_{r+1}(\mathbf{\Sigma_{\mathsmaller{\mathrm{X}\mathrm{Y}}}}),
     \nonumber
   \end{align}
%   where
%   \begin{align}
%   \left(
%      \mathbf{u}_{\star}, \mathbf{v}_{\star}
%   \right)
%   \eqdef
%   \argmax_{\mathbf{u} \in \mathcal{U}, \mathbf{v} \in \mathcal{V}} 
%      \mathbf{u}^{\transpose}\mathbf{\Sigma_{\mathsmaller{\mathrm{X}\mathrm{Y}}}}\mathbf{v},
%   \nonumber
%   \end{align}
   in time 
   $
   \timet_{\mathsmaller{\mathsf{SVD}}}(r) + O\mathopen{}\bigl(T \cdot \bigl(\timet_{\mathcal{U}}+\timet_{\mathcal{V}}+r \cdot \max\lbrace m, n\rbrace \bigr)\bigr)
   $.
\end{reptheorem}

\begin{proof}
Recall that Algorithm~\ref{algo:cca} with input an $m \times n$ matrix $\mathbf{\Sigma_{\mathsmaller{\mathrm{X}\mathrm{Y}}}}$ and accuracy parameter $r$,
first computes a rank-$r$ truncated singular value decomposition $\mathbf{U}$, $\mathbf{\Sigma}$, $\mathbf{V}$ and operates on that principal subspace of $\mathbf{\Sigma_{\mathsmaller{\mathrm{X}\mathrm{Y}}}}$.
Let $\mathbf{B}$ be the $m \times n$ best rank-$r$ approximation of $\mathbf{\Sigma_{\mathsmaller{\mathrm{X}\mathrm{Y}}}}$ under the spectral norm.
Then
$\mathbf{B}=\mathbf{U}\mathbf{\Sigma}\mathbf{V}^{\transpose}$.
One can easily verify that running Algorithm~\ref{algo:cca} with input $\mathbf{\Sigma_{\mathsmaller{\mathrm{X}\mathrm{Y}}}}$ and accuracy parameter $r$, is equivalent to applying the algorithm on $\mathbf{B}$ with the same parameters.

By Lemma~\ref{core-rank-r-guarantees},
Algorithm~\ref{algo:cca} outputs
$\mathbf{u}_{\sharp}, \mathbf{v}_{\sharp}$
such that
\begin{align}
     \mathbf{u}_{\sharp}^{\transpose}
     \mathbf{B}
     \mathbf{v}_{\sharp}
     \ge
     \widehat{\mathbf{u}}_{\star}^{\transpose}
     \mathbf{B}
     \widehat{\mathbf{v}}_{\star}
     - 
     \epsilon \cdot \sigma_{1}(\mathbf{B}),
     \label{eq:guar-lowrank}
\end{align}
where 
\begin{align}
   \left(
      \widehat{\mathbf{u}}_{\star}, 
      \widehat{\mathbf{v}}_{\star}
   \right)
   \eqdef
   \argmax_{\mathbf{u} \in \mathcal{U}, \mathbf{v} \in \mathcal{V}} 
      \mathbf{u}^{\transpose}\mathbf{B}\mathbf{v}
   \nonumber
\end{align}
is a pair that optimally solves the maximization on the rank-$r$ matrix $\mathbf{B}$.
By the optimality of the pair
$\widehat{\mathbf{u}}_{\star}, 
\widehat{\mathbf{v}}_{\star}$ for the rank-$r$ problem,
it follows that
\begin{align}
   \widehat{\mathbf{u}}_{\star}^{\transpose}
   \mathbf{B}
   \widehat{\mathbf{v}}_{\star}
   \ge
   \mathbf{u}_{\star}^{\transpose}
   \mathbf{B}
   \mathbf{v}_{\star}.
   \label{eq:optimal_low_better}
\end{align}
Recall that $\mathbf{u}_{\star}$, $\mathbf{v}_{\star}$ is the pair that optimally solves the maximization on the original input matrix~$\mathbf{\Sigma_{\mathsmaller{\mathrm{X}\mathrm{Y}}}}$.
Further,
\begin{align}
   \mathbf{u}_{\star}^{\transpose}
   \mathbf{B}
   \mathbf{v}_{\star}
   &=
   \mathbf{u}_{\star}^{\transpose}
   \mathbf{\Sigma_{\mathsmaller{\mathrm{X}\mathrm{Y}}}}
   \mathbf{v}_{\star}
   -
   \mathbf{u}_{\star}^{\transpose}
   (\mathbf{\Sigma_{\mathsmaller{\mathrm{X}\mathrm{Y}}}}-\mathbf{B})
   \mathbf{v}_{\star}
   \nonumber\\& \ge
   \mathbf{u}_{\star}^{\transpose}
   \mathbf{\Sigma_{\mathsmaller{\mathrm{X}\mathrm{Y}}}}
   \mathbf{v}_{\star}
   -
   \bigl\lvert
   \mathbf{u}_{\star}^{\transpose}
   (\mathbf{\Sigma_{\mathsmaller{\mathrm{X}\mathrm{Y}}}}-\mathbf{B})
   \mathbf{v}_{\star}
   \bigr\rvert
   \nonumber\\& \ge
   \mathbf{u}_{\star}^{\transpose}
   \mathbf{\Sigma_{\mathsmaller{\mathrm{X}\mathrm{Y}}}}
   \mathbf{v}_{\star}
   -
   \sigma_{r+1}(\mathbf{\Sigma_{\mathsmaller{\mathrm{X}\mathrm{Y}}}}).
   \label{eq:bp03}
\end{align}
Combining~\eqref{eq:bp03} with
\eqref{eq:guar-lowrank} and 
\eqref{eq:optimal_low_better},
\begin{align}
   \mathbf{u}_{\sharp}^{\transpose}
   \mathbf{B}
   \mathbf{v}_{\sharp}
   \ge
   \mathbf{u}_{\star}^{\transpose}
   \mathbf{\Sigma_{\mathsmaller{\mathrm{X}\mathrm{Y}}}}
   \mathbf{v}_{\star}
   -
   \sigma_{r+1}(\mathbf{\Sigma_{\mathsmaller{\mathrm{X}\mathrm{Y}}}})
   - 
   \epsilon \cdot \sigma_{1}(\mathbf{B}).
   \nonumber
\end{align}
Finally, 
\begin{align}
   \mathbf{u}_{\sharp}^{\transpose}
   \mathbf{B}
   \mathbf{v}_{\sharp}
&=
   \mathbf{u}_{\sharp}^{\transpose}
   \mathbf{\Sigma_{\mathsmaller{\mathrm{X}\mathrm{Y}}}}
   \mathbf{v}_{\sharp}
   -
   \mathbf{u}_{\sharp}^{\transpose}
   (\mathbf{\Sigma_{\mathsmaller{\mathrm{X}\mathrm{Y}}}}-\mathbf{B})
   \mathbf{v}_{\sharp}
   \nonumber\\ 
&\le
   \mathbf{u}_{\sharp}^{\transpose}
   \mathbf{\Sigma_{\mathsmaller{\mathrm{X}\mathrm{Y}}}}
   \mathbf{v}_{\sharp}
   +
   \bigl\lvert
      \mathbf{u}_{\sharp}^{\transpose}
      (\mathbf{\Sigma_{\mathsmaller{\mathrm{X}\mathrm{Y}}}}-\mathbf{B})
      \mathbf{v}_{\sharp}
   \bigr\rvert.
   \nonumber\\ 
&\le
   \mathbf{u}_{\sharp}^{\transpose}
   \mathbf{\Sigma_{\mathsmaller{\mathrm{X}\mathrm{Y}}}}
   \mathbf{v}_{\sharp}
   +
   \sigma_{r+1}(\mathbf{\Sigma_{\mathsmaller{\mathrm{X}\mathrm{Y}}}}).
   \label{eq:bp02}
\end{align}
Combining with the previous inequality,
we obtain
\begin{align}
   \mathbf{u}_{\sharp}^{\transpose}
   \mathbf{\Sigma_{\mathsmaller{\mathrm{X}\mathrm{Y}}}}
   \mathbf{v}_{\sharp}
   \ge
   \mathbf{u}_{\star}^{\transpose}
   \mathbf{\Sigma_{\mathsmaller{\mathrm{X}\mathrm{Y}}}}
   \mathbf{v}_{\star}
   -
   2 \cdot \sigma_{r+1}(\mathbf{\Sigma_{\mathsmaller{\mathrm{X}\mathrm{Y}}}})
   - 
   \epsilon \cdot \sigma_{1}(\mathbf{B}).
   \nonumber
\end{align}
Noting that ${\sigma_{1}(\mathbf{B}) = \sigma_{1}(\mathbf{\Sigma_{\mathsmaller{\mathrm{X}\mathrm{Y}}}})}$ completes the proof of the approximation guarantee. 
The running time of the algorithm is established in Lemma~\ref{core-rank-r-guarantees}.
\end{proof}
% --------------------------------------------------
% --------------------------------------------------
% \horsep
% \horsep

\begin{lemma}
\label{same-lemma-for-the-special-case}
For any real ${m \times n}$ matrix~$\mathbf{\Sigma_{\mathsmaller{\mathrm{X}\mathrm{Y}}}}$ with $\text{rank}(\mathbf{\Sigma_{\mathsmaller{\mathrm{X}\mathrm{Y}}}})= r \le \max\lbrace m, n\rbrace$
and 
$\epsilon \in (0, 1)$,
if $\mathcal{V} = \left\{\mathbf{v}~:~\|\mathbf{v}\|_2 = 1\right\}$,
then
Algorithm~\ref{algo:cca} with input $\mathbf{\Sigma_{\mathsmaller{\mathrm{X}\mathrm{Y}}}}$, $r$,
and $\mathrm{T}=\widetilde{O}\bigl(2^{r \cdot \log_{2}(\sfrac{2}{\epsilon})} \bigr)$
outputs $\mathbf{u}_{\sharp} \in \mathcal{U}$
and $\mathbf{v}_{\sharp} \in \mathcal{V}$ such that
   \begin{align}
     \mathbf{u}_{\sharp}^{\transpose}
     \mathbf{\Sigma_{\mathsmaller{\mathrm{X}\mathrm{Y}}}}
     \mathbf{v}_{\sharp}
     \ge
     (1-\epsilon) \cdot \mathbf{u}_{\star}^{\transpose} \mathbf{\Sigma_{\mathsmaller{\mathrm{X}\mathrm{Y}}}} \mathbf{v}_{\star}
     \nonumber
   \end{align}
   in time 
   $
   \timet_{\mathsmaller{\mathsf{SVD}}}(r) + O\mathopen{}\bigl(\mathrm{T} \cdot \bigl(\timet_{\mathcal{U}}+\timet_{\mathcal{V}}+r \cdot \max\lbrace m, n\rbrace \bigr)\bigr)
   $.
\end{lemma}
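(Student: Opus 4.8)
The plan is to mirror the proof of Lemma~\ref{core-rank-r-guarantees}, but to use the freedom in $\mathcal{V}$ to convert the additive $\tfrac{\epsilon}{2}\sigma_{1}$ error into a purely multiplicative loss. Since $\mathrm{rank}(\mathbf{\Sigma_{\mathsmaller{\mathrm{X}\mathrm{Y}}}})=r$, the truncated SVD is exact, $\mathbf{\Sigma_{\mathsmaller{\mathrm{X}\mathrm{Y}}}}=\mathbf{U}\mathbf{\Sigma}\mathbf{V}^{\transpose}$, and for every $\mathbf{u}\in\mathcal{U}$ the best response over the unit sphere is $\mathbf{v}=\mathbf{\Sigma_{\mathsmaller{\mathrm{X}\mathrm{Y}}}}^{\transpose}\mathbf{u}/\|\mathbf{\Sigma_{\mathsmaller{\mathrm{X}\mathrm{Y}}}}^{\transpose}\mathbf{u}\|_{2}$, achieving objective $\|\mathbf{\Sigma}\mathbf{U}^{\transpose}\mathbf{u}\|_{2}$. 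First I would record that the joint optimizer $\mathbf{v}_{\star}$, being a best response to $\mathbf{u}_{\star}$, lies in the column space of $\mathbf{V}$; writing $\mathbf{c}_{\star}\eqdef\mathbf{V}^{\transpose}\mathbf{v}_{\star}$, orthonormality of the columns of $\mathbf{V}$ yields $\|\mathbf{c}_{\star}\|_{2}=1$ (so here $\rho=1$, in contrast to the general case) and $\mathbf{v}_{\star}=\mathbf{V}\mathbf{c}_{\star}$, while $\opt\eqdef\mathbf{u}_{\star}^{\transpose}\mathbf{\Sigma_{\mathsmaller{\mathrm{X}\mathrm{Y}}}}\mathbf{v}_{\star}>0$ since $\mathbf{\Sigma_{\mathsmaller{\mathrm{X}\mathrm{Y}}}}\neq\mathbf{0}$.

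The crucial step is the alignment identity $\mathbf{\Sigma}\mathbf{U}^{\transpose}\mathbf{u}_{\star}=\opt\cdot\mathbf{c}_{\star}$. It holds because $\mathbf{c}_{\star}=\mathbf{V}^{\transpose}\mathbf{v}_{\star}=\mathbf{V}^{\transpose}\mathbf{\Sigma_{\mathsmaller{\mathrm{X}\mathrm{Y}}}}^{\transpose}\mathbf{u}_{\star}/\opt=\mathbf{\Sigma}\mathbf{U}^{\transpose}\mathbf{u}_{\star}/\opt$, the last equality using $\mathbf{V}^{\transpose}\mathbf{V}=\mathbf{I}$; that is, $\mathbf{\Sigma}\mathbf{U}^{\transpose}\mathbf{u}_{\star}$ points exactly along $\mathbf{c}_{\star}$ and has norm $\opt$. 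I would then invoke Lemma~\ref{lem:num-of-random-points}: for the stated $\mathrm{T}$ the sampled points form an $\tfrac{\epsilon}{2}$-net of $\mathbb{S}_{2}^{r-1}$ with high probability, so some drawn $\mathbf{c}_{i_{\star}}$ satisfies $\|\mathbf{c}_{i_{\star}}-\mathbf{c}_{\star}\|_{2}\le\epsilon/2$, as in~\eqref{eq:net_property}.

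Next I would decompose $\mathbf{c}_{i_{\star}}=\alpha\,\mathbf{c}_{\star}+\mathbf{w}$ with $\mathbf{w}\perp\mathbf{c}_{\star}$; from $\|\mathbf{c}_{i_{\star}}\|_{2}=\|\mathbf{c}_{\star}\|_{2}=1$ and the net bound, $(\alpha-1)^{2}\le\|\mathbf{c}_{i_{\star}}-\mathbf{c}_{\star}\|_{2}^{2}\le\epsilon^{2}/4$, hence $\alpha\ge1-\epsilon/2$. The value produced in round $i_{\star}$ equals $\|\mathbf{\Sigma}\mathbf{U}^{\transpose}\mathbf{u}_{i_{\star}}\|_{2}\ge\mathbf{u}_{i_{\star}}^{\transpose}\mathbf{U}\mathbf{\Sigma}\mathbf{c}_{i_{\star}}$ (a unit $\mathbf{c}_{i_{\star}}$ lower-bounds the norm), and $\mathbf{u}_{i_{\star}}$ maximizes $\mathbf{u}^{\transpose}\mathbf{U}\mathbf{\Sigma}\mathbf{c}_{i_{\star}}$ over $\mathcal{U}$. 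Taking $\mathbf{u}_{\star}$ as a feasible competitor and applying the alignment identity,
\begin{align}
  \mathbf{u}_{i_{\star}}^{\transpose}\mathbf{\Sigma_{\mathsmaller{\mathrm{X}\mathrm{Y}}}}\mathbf{v}_{i_{\star}}
  \ge
  \mathbf{u}_{\star}^{\transpose}\mathbf{U}\mathbf{\Sigma}\mathbf{c}_{i_{\star}}
  =
  \opt\cdot\mathbf{c}_{\star}^{\transpose}\mathbf{c}_{i_{\star}}
  =
  \alpha\cdot\opt
  \ge
  \bigl(1-\tfrac{\epsilon}{2}\bigr)\cdot\opt,
  \nonumber
\end{align}
where the orthogonal component $\mathbf{w}$ contributes nothing precisely because $\mathbf{\Sigma}\mathbf{U}^{\transpose}\mathbf{u}_{\star}\parallel\mathbf{c}_{\star}$. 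Since Algorithm~\ref{algo:cca} outputs the best pair over all rounds, $\mathbf{u}_{\sharp}^{\transpose}\mathbf{\Sigma_{\mathsmaller{\mathrm{X}\mathrm{Y}}}}\mathbf{v}_{\sharp}\ge(1-\epsilon/2)\opt\ge(1-\epsilon)\opt$, the claimed bound; the running time is identical to Lemma~\ref{core-rank-r-guarantees} and follows by the same inspection.

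I would flag the alignment identity as the heart of the argument, and it is exactly what the unconstrained $\mathbf{v}$ buys. In the general Lemma~\ref{core-rank-r-guarantees} the perturbation $\mathbf{c}_{i_{\star}}-\overline{\mathbf{c}}_{\star}$ is controlled only by $\sigma_{1}\|\mathbf{c}_{i_{\star}}-\overline{\mathbf{c}}_{\star}\|_{2}$, producing the additive term; here the part of the perturbation orthogonal to $\mathbf{c}_{\star}$ is annihilated by $\mathbf{\Sigma}\mathbf{U}^{\transpose}\mathbf{u}_{\star}$, so only the contraction factor $\alpha$ along $\mathbf{c}_{\star}$ survives. The one place needing care is justifying that $\mathbf{v}_{\star}\in\mathrm{col}(\mathbf{V})$ and $\opt>0$, both immediate from $\mathbf{\Sigma_{\mathsmaller{\mathrm{X}\mathrm{Y}}}}\neq\mathbf{0}$ and the unit-norm feasibility of $\mathcal{V}$, after which the Theorem~\ref{thm:main-algo-guarantees-special-case} for general rank follows by the same rank-$r$ surrogate reduction used to pass from Lemma~\ref{core-rank-r-guarantees} to Theorem~\ref{thm:main-algo-guarantees}.
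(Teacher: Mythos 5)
Your proof is correct and rests on the same essential insight as the paper's: because $\mathcal{V}$ is the full unit sphere, $\mathbf{v}_{\star}$ is the exact best response to $\mathbf{u}_{\star}$, so $\mathbf{c}_{\star}=\mathbf{V}^{\transpose}\mathbf{v}_{\star}$ is a unit vector aligned with $\mathbf{\Sigma}\mathbf{U}^{\transpose}\mathbf{u}_{\star}$, whose norm equals the optimal value $\opt$ rather than being loosely bounded by $\sigma_{1}(\mathbf{\Sigma_{\mathsmaller{\mathrm{X}\mathrm{Y}}}})$. The only mechanical difference is how the perturbation $\mathbf{c}_{i_{\star}}-\mathbf{c}_{\star}$ is handled: the paper bounds $\mathbf{u}_{\star}^{\transpose}\mathbf{U}\mathbf{\Sigma}\bigl(\mathbf{c}_{\star}-\widetilde{\mathbf{c}}\bigr)$ by Cauchy--Schwarz as $\epsilon\cdot\|\mathbf{u}_{\star}^{\transpose}\mathbf{U}\mathbf{\Sigma}\|_{2}=\epsilon\cdot\opt$ and then needs a second chain to convert $\widetilde{\mathbf{u}}^{\transpose}\mathbf{U}\mathbf{\Sigma}\widetilde{\mathbf{c}}$ into $\widetilde{\mathbf{u}}^{\transpose}\mathbf{\Sigma_{\mathsmaller{\mathrm{X}\mathrm{Y}}}}\widetilde{\mathbf{v}}$, whereas you decompose $\mathbf{c}_{i_{\star}}=\alpha\,\mathbf{c}_{\star}+\mathbf{w}$ with $\mathbf{w}\perp\mathbf{c}_{\star}$ and observe that $\mathbf{w}$ is annihilated exactly by $\mathbf{\Sigma}\mathbf{U}^{\transpose}\mathbf{u}_{\star}\parallel\mathbf{c}_{\star}$, so that $\mathbf{u}_{\star}^{\transpose}\mathbf{U}\mathbf{\Sigma}\mathbf{c}_{i_{\star}}=\alpha\cdot\opt$ with $\alpha\ge 1-\epsilon/2$. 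This buys a marginally sharper constant ($1-\epsilon/2$ in place of $1-\epsilon$) and collapses the argument into a single inequality chain, but both proofs hinge on the identical alignment identity; everything else --- the $\epsilon$-net from Lemma~\ref{lem:num-of-random-points}, comparing $\mathbf{u}_{i_{\star}}$ against the feasible competitor $\mathbf{u}_{\star}$ at the sampled direction, the positivity of $\opt$, and the running-time inspection --- matches the paper.
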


\begin{proof}
The lemma focuses on the special case where the feasible region for the variable $\mathbf{v}$ coincides with the set of vectors with unit $\ell_{2}$ norm,
\textit{i.e.}, 
\begin{align}
   \mathcal{V}
   =
   \lbrace
   \mathbf{v}: \|\mathbf{v}\|_{2}=1
   \rbrace.
   \label{Y_unit_norm}
\end{align}
The feasible region $\mathcal{U}$ for $\mathbf{u}$ is arbitrary, assuming once again that there exists an efficient operator $\mathsf{P}_{\mathcal{U}}(\cdot)$.

By the Cauchy-Schwarz inequality,
for any $\mathbf{u}_{0} \in \mathbb{R}^{m \times 1}$,
\begin{align}
   \mathbf{u}_{0}^{\transpose}\mathbf{\Sigma_{\mathsmaller{\mathrm{X}\mathrm{Y}}}}\mathbf{v}
   &\le
   \|\mathbf{u}_{0}^{\transpose}\mathbf{\Sigma_{\mathsmaller{\mathrm{X}\mathrm{Y}}}}\|_{2},
   \quad \forall \mathbf{v} \in \mathcal{V}.
\end{align}
In fact, equality is achieved when $\mathbf{v}$ is aligned with 
$\mathbf{\Sigma_{\mathsmaller{\mathrm{X}\mathrm{Y}}}}^{\transpose}\mathbf{u}_{0}$,
\textit{i.e.}, for 
$\mathbf{v} = \mathbf{\Sigma_{\mathsmaller{\mathrm{X}\mathrm{Y}}}}^{\transpose}\mathbf{u}_{0}/\|\mathbf{\Sigma_{\mathsmaller{\mathrm{X}\mathrm{Y}}}}^{\transpose}\mathbf{u}_{0}\|_{2} \in \mathcal{V}$.
In turn, for any $\mathbf{u}_{0} \in \mathbb{R}^{m \times 1}$,
\begin{align}
   \max_{\mathbf{v} \in \mathcal{V}}
   \mathbf{u}_{0}^{\transpose}\mathbf{\Sigma_{\mathsmaller{\mathrm{X}\mathrm{Y}}}}\mathbf{v}
   &=
   \mathbf{u}_{0}^{\transpose}\mathbf{\Sigma_{\mathsmaller{\mathrm{X}\mathrm{Y}}}}
   \mathbf{\Sigma_{\mathsmaller{\mathrm{X}\mathrm{Y}}}}^{\transpose}\mathbf{u}_{0}/\|\mathbf{\Sigma_{\mathsmaller{\mathrm{X}\mathrm{Y}}}}^{\transpose}\mathbf{u}_{0}\|_{2}
   \nonumber\\
   &=
   \|\mathbf{\Sigma_{\mathsmaller{\mathrm{X}\mathrm{Y}}}}^{\transpose}\mathbf{u}_{0}\|_{2}
   \nonumber\\
   &=
   \|\mathbf{V}\mathbf{\Sigma}\mathbf{U}^{\transpose}\mathbf{u}_{0}\|_{2}
   \nonumber\\
   &=
   \|\mathbf{\Sigma}\mathbf{U}^{\transpose}\mathbf{u}_{0}\|_{2},
   \label{eq:y_objective_as_x_norm}
\end{align}
where the last equality follows from the fact that the $r$ columns of $\mathbf{V}$ are orthonormal.

We now proceed in a fashion very similar to that in the proof of Lemma~\ref{core-rank-r-guarantees}.
Recall that $\mathbf{u}_{\star}, \mathbf{v}_{\star}$ is a pair that maximizes --not necessarily uniquely-- the objective $\mathbf{u}^{\transpose}\mathbf{\Sigma_{\mathsmaller{\mathrm{X}\mathrm{Y}}}}\mathbf{v}$ over all feasible solutions, 
and define $\mathbf{c}_{\star} \eqdef \mathbf{V}^{\transpose}\mathbf{v}_{\star}$.
Note that here,
\begin{align}
   \mathbf{c}_{\star}
   \eqdef
   \mathbf{V}^{\transpose}\mathbf{v}_{\star}
   &=
   \mathbf{V}^{\transpose}\mathbf{\Sigma_{\mathsmaller{\mathrm{X}\mathrm{Y}}}}^{\transpose}\mathbf{u}_{\star}/\|\mathbf{\Sigma_{\mathsmaller{\mathrm{X}\mathrm{Y}}}}^{\transpose}\mathbf{u}_{\star}\|_{2}
   \nonumber\\&=
   \mathbf{\Sigma}\mathbf{U}^{\transpose}\mathbf{u}_{\star}/\|\mathbf{V}\mathbf{\Sigma}\mathbf{U}^{\transpose}\mathbf{u}_{\star}\|_{2}
   \nonumber\\&=
   \mathbf{\Sigma}\mathbf{U}^{\transpose}\mathbf{u}_{\star}/\|\mathbf{\Sigma}\mathbf{U}^{\transpose}\mathbf{u}_{\star}\|_{2}
\end{align}
and hence, $\|\mathbf{c}_{\star}\|_{2}=1$.
Following similar reasoning as in the proof of Lemma~\ref{core-rank-r-guarantees},
Algorithm~\ref{algo:cca} considers a point $\widetilde{\mathbf{c}} \in \mathbb{R}^{r \times 1}$, such that
\begin{align}
   \|\widetilde{\mathbf{c}} - \overline{\mathbf{c}}_{\star}\|_{2} \le \epsilon.
   \nonumber
\end{align}
Let $(\widetilde{\mathbf{u}}, \widetilde{\mathbf{v}})$ be the candidate solution pair computed at $\widetilde{\mathbf{c}}$ by the two step maximization procedure.
We have,
\begin{align}
   {\mathbf{u}}_{\star}^{\transpose}
   \mathbf{\Sigma_{\mathsmaller{\mathrm{X}\mathrm{Y}}}}
   {\mathbf{v}}_{\star}
   &=
   \mathbf{u}_{\star}^{\transpose}
   \mathbf{U}\mathbf{\Sigma}
   \mathbf{c}_{\star}
   \nonumber\\&=
      \mathbf{u}_{\star}^{\transpose}
      \mathbf{U}\mathbf{\Sigma}
      \widetilde{\mathbf{c}}
   + 
      \mathbf{u}_{\star}^{\transpose}
      \mathbf{U}\mathbf{\Sigma}
      \bigl(\mathbf{c}_{\star}-\widetilde{\mathbf{c}}\bigr)
   \nonumber\\&\le
      \widetilde{\mathbf{u}}^{\transpose}
      \mathbf{U}\mathbf{\Sigma}
      \widetilde{\mathbf{c}}
   +
      \|\mathbf{u}_{\star}^{\transpose}
      \mathbf{U}\mathbf{\Sigma}\|_{2}
      \|\mathbf{c}_{\star}-\widetilde{\mathbf{c}}\|_{2}
   \nonumber\\&\le
      \widetilde{\mathbf{u}}^{\transpose}
      \mathbf{U}\mathbf{\Sigma}
      \widetilde{\mathbf{c}}
   +
      \epsilon \cdot \|\mathbf{u}_{\star}^{\transpose}
      \mathbf{U}\mathbf{\Sigma}\|_{2}
   \label{eq:special-case-upper-bound}
   .
\end{align} 
where the first inequality follows from the fact that $\widetilde{\mathbf{u}}$ by definition maximizes the first term at $\widetilde{\mathbf{c}}$ over all $\mathbf{u} \in \mathcal{U}$ and the Cauchy-Schwarz inequality.
The key difference from the proof of Lemma~\ref{core-rank-r-guarantees}, 
is that the term $\|\mathbf{u}_{\star}^{\transpose}\mathbf{U}\mathbf{\Sigma}\|_{2}$ in the right-hand side coincides with the optimal objective value $ {\mathbf{u}}_{\star}^{\transpose}\mathbf{\Sigma_{\mathsmaller{\mathrm{X}\mathrm{Y}}}}{\mathbf{v}}_{\star}$ as follows from~\eqref{eq:y_objective_as_x_norm}. 
For comparison, note that in the proof of Lemma~\ref{core-rank-r-guarantees} it was loosely upper bounded by~${\sigma_{1}(\mathbf{\Sigma_{\mathsmaller{\mathrm{X}\mathrm{Y}}}})}$.
Continuing from~\eqref{eq:special-case-upper-bound},
\begin{align}
   \left(1-\epsilon\right)
   \cdot
   {\mathbf{u}}_{\star}^{\transpose}
   \mathbf{\Sigma_{\mathsmaller{\mathrm{X}\mathrm{Y}}}}
   {\mathbf{v}}_{\star}
&\le
      \widetilde{\mathbf{u}}^{\transpose}
      \mathbf{U}\mathbf{\Sigma}
      \widetilde{\mathbf{c}}
   .
   \label{eq:almost_there}
\end{align}   
But, once again by the Cauchy-Schwarz inequality,
\begin{align}
   \widetilde{\mathbf{u}}^{\transpose}
   \mathbf{U}\mathbf{\Sigma}
   \widetilde{\mathbf{c}}
   &\le
   \|\widetilde{\mathbf{u}}^{\transpose}
   \mathbf{U}\mathbf{\Sigma}\|_{2}
   \|\widetilde{\mathbf{c}}\|_{2}
   \nonumber\\&=
   \|\widetilde{\mathbf{u}}^{\transpose}
   \mathbf{U}\mathbf{\Sigma}\|_{2}
\end{align}
and by~\eqref{eq:y_objective_as_x_norm},
\begin{align}
   \widetilde{\mathbf{u}}^{\transpose}
   \mathbf{U}\mathbf{\Sigma}
   \widetilde{\mathbf{c}}
   &=
   \max_{\mathbf{v} \in \mathcal{V}} 
      \widetilde{\mathbf{u}}^{\transpose}\mathbf{\Sigma_{\mathsmaller{\mathrm{X}\mathrm{Y}}}}\mathbf{v}
   %\nonumber\\&
   =
      \widetilde{\mathbf{u}}^{\transpose}\mathbf{\Sigma_{\mathsmaller{\mathrm{X}\mathrm{Y}}}}\widetilde{\mathbf{v}}
   .
   \label{eq:last-cookie}
\end{align}   
Combining~\eqref{eq:last-cookie} with~\eqref{eq:almost_there},
\begin{align}
   \widetilde{\mathbf{u}}^{\transpose}\mathbf{\Sigma_{\mathsmaller{\mathrm{X}\mathrm{Y}}}}\widetilde{\mathbf{v}}
   \ge
   (1-\epsilon) \cdot 
   \mathbf{u}_{\star}^{\transpose}\mathbf{\Sigma_{\mathsmaller{\mathrm{X}\mathrm{Y}}}}\mathbf{v}_{\star}
\end{align}
Recalling that Algorithm~\ref{algo:cca} outputs the candidate pair that maximizes the objective among all computed feasible points implies the desired result.
\end{proof}

\begin{reptheorem}{thm:main-algo-guarantees-special-case}
For any real ${m \times n}$ matrix~$\mathbf{\Sigma_{\mathsmaller{\mathrm{X}\mathrm{Y}}}}$ 
and 
$\epsilon \in (0, 1)$,
if $\mathcal{V} = \left\{\mathbf{v}~:~\|\mathbf{v}\|_2 = 1\right\}$,
then
Algorithm~\ref{algo:cca} with input $\mathbf{\Sigma_{\mathsmaller{\mathrm{X}\mathrm{Y}}}}$, $r$,
and $T=\widetilde{O}\bigl(2^{r \cdot \log_{2}(\sfrac{2}{\epsilon})} \bigr)$
outputs $\mathbf{u}_{\sharp} \in \mathcal{U}$
and $\mathbf{v}_{\sharp} \in \mathcal{V}$ such that
   \begin{align}
     \mathbf{u}_{\sharp}^{\transpose}
     \mathbf{\Sigma_{\mathsmaller{\mathrm{X}\mathrm{Y}}}}
     \mathbf{v}_{\sharp}
     \ge
     (1-\epsilon) \cdot \mathbf{u}_{\star}^{\transpose} \mathbf{\Sigma_{\mathsmaller{\mathrm{X}\mathrm{Y}}}} \mathbf{v}_{\star}
     -
     2 \cdot \sigma_{r+1}(\mathbf{\Sigma_{\mathsmaller{\mathrm{X}\mathrm{Y}}}})
     \nonumber
   \end{align}
   in time 
   $
   \timet_{\mathsmaller{\mathsf{SVD}}}(r) + O\mathopen{}\bigl(T \cdot \bigl(\timet_{\mathcal{U}}+\timet_{\mathcal{V}}+r \cdot \max\lbrace m, n\rbrace \bigr)\bigr)
   $.
\end{reptheorem}

\begin{proof}
  The Theorem follows from Lemma~\ref{same-lemma-for-the-special-case}.
  The proof is similar to that of Theorem~\ref{thm:main-algo-guarantees}.  
  The main difference lies in substituting \eqref{eq:guar-lowrank} with 
  \begin{align}
     \mathbf{u}_{\sharp}^{\transpose}
     \mathbf{B}
     \mathbf{v}_{\sharp}
     \ge
     (1-\epsilon) \cdot 
     \mathbf{u}_{\mathsmaller{(B)}\star}^{\transpose}
     \mathbf{B}
     \mathbf{v}_{\mathsmaller{(B)}\star}.
     \label{Y_unit:low-rank-guarantee}
\end{align}
The remainder of the proof easily follows.
\end{proof}

% --------------------------------------------------
% --------------------------------------------------

\section{Auxiliary Lemmata}
% \horsep
\begin{lemma}
   \label{holder-consequence}
   Let $a_1,\hdots , a_n$ 
   and $b_1, \hdots, b_n$ be $2n$ real numbers and let $p$ and
   $q$ be two numbers such that ${1/p} + {1/q} = 1$ and $p>1$. We have
   \begin{align}
	  \bigl\lvert 
		 \sum_{i=1}^{n} a_{i}b_{i}
	  \bigr\rvert
	  \le
	  \bigl( \sum_{i=1}^{n} \lvert  a_{i}\rvert^{p} \bigr)^{1/p}
	  \cdot
	  \bigl( \sum_{i=1}^{n} \lvert  b_{i}\rvert^{q} \bigr)^{1/q}.
	  \nonumber
   \end{align}
\end{lemma}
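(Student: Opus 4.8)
The plan is to recognize this as the classical Hölder inequality for finite sums and to prove it via Young's inequality followed by a normalization argument. First I would dispose of the outer absolute value and the signs: since $\lvert \sum_{i=1}^{n} a_{i}b_{i}\rvert \le \sum_{i=1}^{n} \lvert a_{i}\rvert\,\lvert b_{i}\rvert$ by the triangle inequality, it suffices to bound $\sum_{i=1}^{n}\lvert a_{i}\rvert\,\lvert b_{i}\rvert$ by the right-hand side, working henceforth with the nonnegative quantities $\lvert a_{i}\rvert$ and $\lvert b_{i}\rvert$.

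The central tool is Young's inequality: for any $x,y \ge 0$ and conjugate exponents satisfying $1/p+1/q=1$ with $p>1$, one has $xy \le x^{p}/p + y^{q}/q$. I would establish this from the concavity of the logarithm (equivalently, the weighted AM--GM inequality). For $x,y>0$ write $xy = \exp\bigl(\tfrac1p\log x^{p} + \tfrac1q\log y^{q}\bigr)$; since $\log$ is concave and the weights $1/p,1/q$ sum to one, Jensen's inequality gives $\tfrac1p\log x^{p} + \tfrac1q\log y^{q} \le \log\bigl(\tfrac1p x^{p} + \tfrac1q y^{q}\bigr)$, and exponentiating yields the claim. The boundary cases $x=0$ or $y=0$ are immediate.

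Next I would handle the degenerate case where $\sum_{i}\lvert a_{i}\rvert^{p}=0$ or $\sum_{i}\lvert b_{i}\rvert^{q}=0$: then every $a_{i}=0$ (resp.\ every $b_{i}=0$), so both sides of the inequality vanish and there is nothing to prove. Otherwise, set $A\eqdef\bigl(\sum_{i}\lvert a_{i}\rvert^{p}\bigr)^{1/p}>0$ and $B\eqdef\bigl(\sum_{i}\lvert b_{i}\rvert^{q}\bigr)^{1/q}>0$, and apply Young's inequality to the normalized terms $x_{i}=\lvert a_{i}\rvert/A$ and $y_{i}=\lvert b_{i}\rvert/B$. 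Summing the resulting bounds over $i$ gives
\[
   \sum_{i=1}^{n}\frac{\lvert a_{i}\rvert}{A}\,\frac{\lvert b_{i}\rvert}{B}
   \;\le\;
   \frac{1}{p}\sum_{i=1}^{n}\frac{\lvert a_{i}\rvert^{p}}{A^{p}}
   +\frac{1}{q}\sum_{i=1}^{n}\frac{\lvert b_{i}\rvert^{q}}{B^{q}}
   \;=\;\frac{1}{p}+\frac{1}{q}\;=\;1,
\]
where the two inner sums equal $1$ by the definitions of $A$ and $B$. Multiplying through by $AB$ recovers $\sum_{i}\lvert a_{i}\rvert\,\lvert b_{i}\rvert \le AB$, which is exactly the desired bound.

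I do not anticipate a genuine obstacle here, as this is a textbook result; the only point requiring any care is the justification of Young's inequality, which I would keep self-contained through the logarithm/AM--GM argument above rather than invoking heavier convex-analytic machinery.
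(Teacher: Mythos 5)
Your proof is correct. Note that the paper itself gives no proof of this lemma at all: it is stated as the classical H\"older inequality for finite sums and used as a black box in the proof of Lemma~\ref{lemma:inner-prod-ub} (with $p=q=2$, where it reduces to Cauchy--Schwarz). Your argument --- reduce to nonnegative terms via the triangle inequality, dispose of the degenerate case where one of the norms vanishes, establish Young's inequality $xy \le x^p/p + y^q/q$ from the concavity of the logarithm, and then apply it to the normalized quantities $\lvert a_i\rvert/A$ and $\lvert b_i\rvert/B$ and sum --- is the standard textbook derivation, and every step is sound. It supplies a self-contained justification for a fact the paper simply asserts, which is strictly more than the paper does.
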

% --------------------------------------------------
% --------------------------------------------------
\begin{lemma}
   \label{lemma:inner-prod-ub}
   For any $\mathbf{A}, \mathbf{B} \in \mathbb{R}^{n \times k}$,
   \begin{align}
	  \bigl\lvert 
		 \langle \mathbf{A}, \mathbf{B} \rangle
	  \bigr\rvert 
	  \eqdef 
	  \bigl\lvert
		 \trace\bigl( \mathbf{A}^{\transpose} \mathbf{B}\bigr)
	  \bigr\rvert
	  \le
	  \|\mathbf{A}\|_{\frob}
	  \|\mathbf{B}\|_{\frob}.
	  \nonumber
   \end{align}
\end{lemma}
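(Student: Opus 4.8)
The plan is to recognize Lemma~\ref{lemma:inner-prod-ub} as precisely the Cauchy--Schwarz inequality for the Euclidean inner product on the space $\mathbb{R}^{n \times k}$, and to obtain it as the $p=q=2$ instance of the H\"older-type inequality already recorded in Lemma~\ref{holder-consequence}. The first step is to unfold the trace into a sum over the matrix entries. Writing $A_{ij}$ and $B_{ij}$ for the entries of $\mathbf{A}$ and $\mathbf{B}$, the standard identity
\[
   \trace\bigl(\mathbf{A}^{\transpose}\mathbf{B}\bigr)
   = \sum_{i=1}^{n}\sum_{j=1}^{k} A_{ij} B_{ij}
\]
expresses the inner product as a single sum over the $nk$ index pairs $(i,j)$, and by the definition of the Frobenius norm one has $\|\mathbf{A}\|_{\frob}^{2} = \sum_{i,j} A_{ij}^{2}$ and $\|\mathbf{B}\|_{\frob}^{2} = \sum_{i,j} B_{ij}^{2}$.

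With this reindexing in hand, I would invoke Lemma~\ref{holder-consequence} with the exponent pair $p=q=2$, which is admissible since $\tfrac{1}{2}+\tfrac{1}{2}=1$ and $p=2>1$. Treating the $nk$ entries $\{A_{ij}\}$ as the sequence in the role of $\{a_\ell\}$ and $\{B_{ij}\}$ as $\{b_\ell\}$, Lemma~\ref{holder-consequence} yields directly
\[
   \Bigl\lvert \sum_{i,j} A_{ij}B_{ij}\Bigr\rvert
   \le \Bigl(\sum_{i,j} A_{ij}^{2}\Bigr)^{1/2}\Bigl(\sum_{i,j} B_{ij}^{2}\Bigr)^{1/2}
   = \|\mathbf{A}\|_{\frob}\,\|\mathbf{B}\|_{\frob}.
\]
Substituting the trace identity on the left-hand side gives exactly $\lvert \langle \mathbf{A}, \mathbf{B}\rangle\rvert \le \|\mathbf{A}\|_{\frob}\|\mathbf{B}\|_{\frob}$, which is the claim.

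There is essentially no substantive obstacle here; the only points requiring minor care are the bookkeeping identity $\trace(\mathbf{A}^{\transpose}\mathbf{B}) = \sum_{i,j} A_{ij}B_{ij}$ (which follows from writing out the $i$th diagonal entry of $\mathbf{A}^{\transpose}\mathbf{B}$ and summing over $i$) and the observation that the absolute value on the left is furnished automatically by Lemma~\ref{holder-consequence}, so no separate triangle-inequality step is needed. I would also note an alternative route that bypasses Lemma~\ref{holder-consequence} entirely: one may appeal to the ordinary vector Cauchy--Schwarz inequality applied to $\mathrm{vec}(\mathbf{A}),\,\mathrm{vec}(\mathbf{B}) \in \mathbb{R}^{nk}$, since the Frobenius inner product and norm coincide with the Euclidean inner product and norm of the vectorized matrices. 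The two arguments are identical in content, so either may be presented depending on which earlier result the paper prefers to cite.
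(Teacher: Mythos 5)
Your proof is correct and follows exactly the paper's own argument: the paper likewise treats $\mathbf{A}$ and $\mathbf{B}$ as vectors in $\mathbb{R}^{nk}$ and invokes Lemma~\ref{holder-consequence} with $p=q=2$. Your version simply spells out the bookkeeping identity $\trace(\mathbf{A}^{\transpose}\mathbf{B})=\sum_{i,j}A_{ij}B_{ij}$ that the paper leaves implicit.
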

\begin{proof}
 Treating $\mathbf{A}$ and $\mathbf{B}$ as vectors, 
 the lemma follows immediately from Lemma~\ref{holder-consequence} for $p=q=2$.
\end{proof}
%\horsep
% --------------------------------------------------
% --------------------------------------------------
\begin{lemma}
   \label{lemma:frob-of-matrix-prod}
   For any two real matrices $\mathbf{A}$ and $\mathbf{B}$ of appropriate dimensions,
   \begin{align}
	  \|\mathbf{A}\mathbf{B}\|_{\frob}
	  \le
	  \min\mathopen{}
	  \bigl\lbrace
		 \|\mathbf{A}\|_{2} \|\mathbf{B}\|_{\frob}, \;
		 \|\mathbf{A}\|_{\frob} \|\mathbf{B}\|_{2}
	  \bigr\rbrace.
	  \nonumber
   \end{align}
\end{lemma}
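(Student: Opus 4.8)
The plan is to establish the first bound $\|\mathbf{A}\mathbf{B}\|_{\frob} \le \|\mathbf{A}\|_{2}\|\mathbf{B}\|_{\frob}$ by a column-wise decomposition of the Frobenius norm, and then obtain the second bound $\|\mathbf{A}\mathbf{B}\|_{\frob}\le \|\mathbf{A}\|_{\frob}\|\mathbf{B}\|_{2}$ for free by a transpose-symmetry argument. Taking the minimum of the two then yields the claim.

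First I would write $\mathbf{B}$ in terms of its columns $\mathbf{b}_{1},\hdots,\mathbf{b}_{k}$, so that the columns of $\mathbf{A}\mathbf{B}$ are exactly $\mathbf{A}\mathbf{b}_{1},\hdots,\mathbf{A}\mathbf{b}_{k}$. Since the squared Frobenius norm of a matrix equals the sum of the squared Euclidean norms of its columns, this gives $\|\mathbf{A}\mathbf{B}\|_{\frob}^{2}=\sum_{j}\|\mathbf{A}\mathbf{b}_{j}\|_{2}^{2}$. The key step is the elementary operator-norm inequality $\|\mathbf{A}\mathbf{b}_{j}\|_{2}\le\|\mathbf{A}\|_{2}\|\mathbf{b}_{j}\|_{2}$, which holds by the very definition of the spectral norm $\|\mathbf{A}\|_{2}$ as the largest factor by which $\mathbf{A}$ can stretch the Euclidean length of a vector. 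Substituting and factoring out $\|\mathbf{A}\|_{2}^{2}$ yields $\|\mathbf{A}\mathbf{B}\|_{\frob}^{2}\le\|\mathbf{A}\|_{2}^{2}\sum_{j}\|\mathbf{b}_{j}\|_{2}^{2}=\|\mathbf{A}\|_{2}^{2}\|\mathbf{B}\|_{\frob}^{2}$, and taking square roots establishes the first bound.

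For the second bound I would invoke the invariance of both norms under transposition, namely $\|\mathbf{A}\mathbf{B}\|_{\frob}=\|\mathbf{B}^{\transpose}\mathbf{A}^{\transpose}\|_{\frob}$, together with $\|\mathbf{M}^{\transpose}\|_{2}=\|\mathbf{M}\|_{2}$ and $\|\mathbf{M}^{\transpose}\|_{\frob}=\|\mathbf{M}\|_{\frob}$ for any matrix $\mathbf{M}$. Applying the already-proven first bound to the product $\mathbf{B}^{\transpose}\mathbf{A}^{\transpose}$ gives $\|\mathbf{B}^{\transpose}\mathbf{A}^{\transpose}\|_{\frob}\le\|\mathbf{B}^{\transpose}\|_{2}\|\mathbf{A}^{\transpose}\|_{\frob}=\|\mathbf{B}\|_{2}\|\mathbf{A}\|_{\frob}$. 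Combining the two upper bounds and taking their minimum completes the argument. There is essentially no obstacle here: the only nonroutine ingredient is the sub-multiplicativity of the spectral norm against a vector, which is standard, and the transpose trick avoids repeating the calculation for the second factorization.
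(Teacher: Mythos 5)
Your proposal is correct and follows essentially the same route as the paper's own proof: a column-wise decomposition of $\|\mathbf{A}\mathbf{B}\|_{\frob}^{2}$ combined with the operator-norm bound $\|\mathbf{A}\mathbf{b}_{j}\|_{2}\le\|\mathbf{A}\|_{2}\|\mathbf{b}_{j}\|_{2}$ for the first factorization, then the transpose identity to obtain the second. No gaps.
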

\begin{proof}
   Let $\mathbf{b}_{i}$ denote the $i$th column of $\mathbf{B}$. 
   Then,
   \begin{align}
	  \|\mathbf{A}\mathbf{B}\|_{\frob}^{2}
	 & =
	  \sum_{i} \| \mathbf{A} \mathbf{b}_{i} \|_{2}^{2}
	  \le
	  \sum_{i} \| \mathbf{A}\|_{2}^{2}  \|\mathbf{b}_{i} \|_{2}^{2}
	  \nonumber\\&=
	  \| \mathbf{A}\|_{2}^{2} \sum_{i}   \|\mathbf{b}_{i} \|_{2}^{2}
	  =
	  \| \mathbf{A}\|_{2}^{2} \|\mathbf{B} \|_{\frob}^{2}.
	  \nonumber %\label{first-ub-on-product-frob} 
   \end{align}
   Similarly, using the previous inequality,
   \begin{align}
	  \|\mathbf{A}\mathbf{B}\|_{\frob}^{2}
	  =
	  \|\mathbf{B}^{\transpose}\mathbf{A}^{\transpose}\|_{\frob}^{2}
	  \le
	  \|\mathbf{B}^{\transpose}\|_{2}^{2}\|\mathbf{A}^{\transpose}\|_{\frob}^{2}
	  =
	  \|\mathbf{B}\|_{2}^{2}\|\mathbf{A}\|_{\frob}^{2}.
	  \nonumber
   \end{align}
   The desired result follows combining the two upper bounds.
\end{proof}
% --------------------------------------------------
% --------------------------------------------------
\begin{lemma}
   \label{lemma:abs-trace-XAY-ub}
   For any real 
   $m \times k$ matrix $\mathbf{X}$, $m \times n$ matrix $\mathbf{A}$, and  $n \times k$ matrix $\mathbf{Y}$,
   \begin{align}
	  \bigl\lvert
	  \trace\bigl( \mathbf{X}^{\transpose} \mathbf{A} \mathbf{Y}\bigr)
	  \bigr\rvert
	  \le
	  \|\mathbf{X}\|_{\frob} \cdot \|\mathbf{A} \|_{2} \cdot \|\mathbf{Y}\|_{\frob}.
	  \nonumber
   \end{align}
\end{lemma}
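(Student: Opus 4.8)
The plan is to recognize the trace as a Frobenius inner product and then chain together the two auxiliary bounds already in hand, namely Lemma~\ref{lemma:inner-prod-ub} (Cauchy--Schwarz for the Frobenius inner product) and Lemma~\ref{lemma:frob-of-matrix-prod} (submultiplicativity of the Frobenius norm against the spectral norm). The crucial preliminary observation is to group the triple product as $\mathbf{X}^{\transpose}(\mathbf{A}\mathbf{Y})$, so that
\begin{align}
\trace\bigl(\mathbf{X}^{\transpose}\mathbf{A}\mathbf{Y}\bigr)
= \trace\bigl(\mathbf{X}^{\transpose}(\mathbf{A}\mathbf{Y})\bigr)
= \langle \mathbf{X}, \mathbf{A}\mathbf{Y}\rangle,
\nonumber
\end{align}
where $\mathbf{X}$ and $\mathbf{A}\mathbf{Y}$ are both $m \times k$ matrices, so the inner product is well defined.

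First I would apply Lemma~\ref{lemma:inner-prod-ub} to the pair $(\mathbf{X}, \mathbf{A}\mathbf{Y})$, obtaining
\begin{align}
\bigl\lvert \trace\bigl(\mathbf{X}^{\transpose}\mathbf{A}\mathbf{Y}\bigr)\bigr\rvert
\le \|\mathbf{X}\|_{\frob}\,\|\mathbf{A}\mathbf{Y}\|_{\frob}.
\nonumber
\end{align}
Next I would bound the remaining factor $\|\mathbf{A}\mathbf{Y}\|_{\frob}$ using Lemma~\ref{lemma:frob-of-matrix-prod}, selecting the first of the two terms inside the minimum so that the spectral norm attaches to $\mathbf{A}$:
\begin{align}
\|\mathbf{A}\mathbf{Y}\|_{\frob} \le \|\mathbf{A}\|_{2}\,\|\mathbf{Y}\|_{\frob}.
\nonumber
\end{align}
Substituting this estimate into the previous display immediately yields the desired inequality $\lvert \trace(\mathbf{X}^{\transpose}\mathbf{A}\mathbf{Y})\rvert \le \|\mathbf{X}\|_{\frob}\,\|\mathbf{A}\|_{2}\,\|\mathbf{Y}\|_{\frob}$.

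There is essentially no obstacle here: the statement is a routine composition of the preceding lemmata. The only point requiring a moment of care is the choice of association in the product --- grouping as $\mathbf{X}^{\transpose}(\mathbf{A}\mathbf{Y})$ rather than $(\mathbf{X}^{\transpose}\mathbf{A})\mathbf{Y}$ --- so that Lemma~\ref{lemma:frob-of-matrix-prod} deposits the spectral-norm factor on $\mathbf{A}$ while the two Frobenius-norm factors land on $\mathbf{X}$ and $\mathbf{Y}$, exactly as the target bound demands.
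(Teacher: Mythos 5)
Your proof is correct and is essentially identical to the paper's own argument: the paper likewise writes the trace as $\langle \mathbf{X}, \mathbf{A}\mathbf{Y}\rangle$, applies Lemma~\ref{lemma:inner-prod-ub} to get $\|\mathbf{X}\|_{\frob}\|\mathbf{A}\mathbf{Y}\|_{\frob}$, and then invokes Lemma~\ref{lemma:frob-of-matrix-prod} to bound $\|\mathbf{A}\mathbf{Y}\|_{\frob}$ by $\|\mathbf{A}\|_{2}\|\mathbf{Y}\|_{\frob}$. Nothing is missing.
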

\begin{proof}
   We have
   \begin{align}
	  \bigl\lvert
	  \trace\bigl( \mathbf{X}^{\transpose} \mathbf{A} \mathbf{Y}\bigr)
	  \bigr\rvert
	  \le
	  \|\mathbf{X}\|_{\frob} \cdot \|\mathbf{A}\mathbf{Y}\|_{\frob}
	  \le
	  \|\mathbf{X}\|_{\frob} \cdot \|\mathbf{A} \|_{2} \cdot \|\mathbf{Y}\|_{\frob},
	  \nonumber
   \end{align}
   with the first inequality following from Lemma~\ref{lemma:inner-prod-ub}
   on $\lvert \langle \mathbf{X},\, \mathbf{A}\mathbf{Y}\rangle\rvert$ and the second from Lemma~\ref{lemma:frob-of-matrix-prod}.
\end{proof}
% --------------------------------------------------
% --------------------------------------------------
\begin{lemma}
   \label{lemma:abs-trace-XAY-ub-orthogonal}
   For any real
   $m \times n$ matrix $\mathbf{A}$,
   and pair of 
   $m \times k$ matrix $\mathbf{X}$ and  $n \times k$ matrix $\mathbf{Y}$
   such that $\mathbf{X}^{\transpose}\mathbf{X}=\mathbf{I}_{k}$ and $\mathbf{Y}^{\transpose}\mathbf{Y}=\mathbf{I}_{k}$
   with $k \le \min\lbrace {m},\; {n}\rbrace$, the following holds:
   \begin{align}
	  \bigl\lvert
	  \trace\bigl( \mathbf{X}^{\transpose} \mathbf{A} \mathbf{Y}\bigr)
	  \bigr\rvert
	  \le
	  \sqrt{k}\cdot 
	  \bigl( \sum_{i=1}^{k} \sigma_{i}^{2}\bigl(\mathbf{A}\bigr) \bigr)^{1/2}.
	  \nonumber
   \end{align}
\end{lemma}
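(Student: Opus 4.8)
The plan is to set $\mathbf{M} \eqdef \mathbf{X}^{\transpose}\mathbf{A}\mathbf{Y}$, observe that it is a $k \times k$ matrix, and split the bound into two pieces: first control its trace by its Frobenius norm, then control that Frobenius norm by the top-$k$ singular values of $\mathbf{A}$. For the first piece I would write the trace as the inner product $\trace(\mathbf{M}) = \trace(\mathbf{I}_{k}^{\transpose}\mathbf{M}) = \langle \mathbf{I}_{k}, \mathbf{M}\rangle$ and apply Lemma~\ref{lemma:inner-prod-ub}. Since $\|\mathbf{I}_{k}\|_{\frob} = \sqrt{k}$, this immediately yields $\bigl\lvert\trace(\mathbf{M})\bigr\rvert \le \sqrt{k}\,\|\mathbf{M}\|_{\frob}$, which already accounts for the leading $\sqrt{k}$ factor in the claim.

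For the second piece I would use the fact that, being $k \times k$, $\mathbf{M}$ has at most $k$ nonzero singular values, so $\|\mathbf{M}\|_{\frob}^{2} = \sum_{j=1}^{k}\sigma_{j}^{2}(\mathbf{M})$. The crux is then to show that two-sided compression by matrices with orthonormal columns cannot increase any singular value, i.e.
\begin{align}
  \sigma_{j}\bigl(\mathbf{X}^{\transpose}\mathbf{A}\mathbf{Y}\bigr) \le \sigma_{j}(\mathbf{A}),
  \qquad j = 1,\dots,k.
  \nonumber
\end{align}
This follows from the standard submultiplicative inequalities $\sigma_{j}(\mathbf{B}\mathbf{C}) \le \|\mathbf{B}\|_{2}\,\sigma_{j}(\mathbf{C})$ and $\sigma_{j}(\mathbf{C}\mathbf{B}) \le \sigma_{j}(\mathbf{C})\,\|\mathbf{B}\|_{2}$, applied with $\mathbf{B} = \mathbf{X}^{\transpose}$ on the left and $\mathbf{B} = \mathbf{Y}$ on the right, together with the observation that $\|\mathbf{X}^{\transpose}\|_{2} = \|\mathbf{Y}\|_{2} = 1$ because $\mathbf{X}^{\transpose}\mathbf{X} = \mathbf{I}_{k}$ and $\mathbf{Y}^{\transpose}\mathbf{Y} = \mathbf{I}_{k}$. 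Summing the squared inequality over $j = 1,\dots,k$ gives $\|\mathbf{M}\|_{\frob}^{2} \le \sum_{j=1}^{k}\sigma_{j}^{2}(\mathbf{A})$, and combining with the first piece produces exactly $\bigl\lvert\trace(\mathbf{X}^{\transpose}\mathbf{A}\mathbf{Y})\bigr\rvert \le \sqrt{k}\,\bigl(\sum_{j=1}^{k}\sigma_{j}^{2}(\mathbf{A})\bigr)^{1/2}$.

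The main obstacle is that the singular-value monotonicity inequality $\sigma_{j}(\mathbf{X}^{\transpose}\mathbf{A}\mathbf{Y}) \le \sigma_{j}(\mathbf{A})$ is not among the lemmas already established in the excerpt; the Frobenius-norm bounds of Lemma~\ref{lemma:frob-of-matrix-prod} are too lossy here, since bounding $\|\mathbf{X}^{\transpose}\mathbf{A}\|_{\frob}$ by $\|\mathbf{A}\|_{\frob}$ would bring in \emph{all} singular values of $\mathbf{A}$ rather than only the top $k$. I would therefore either cite this submultiplicativity as a standard fact about contractions, or prove it in a line from the Courant--Fischer min--max characterization $\sigma_{j}(\mathbf{C}) = \min_{\dim\mathcal{S} = n-j+1}\max_{\mathbf{x}\in\mathcal{S},\,\|\mathbf{x}\|_{2}=1}\|\mathbf{C}\mathbf{x}\|_{2}$, using $\|\mathbf{X}^{\transpose}\mathbf{C}\mathbf{x}\|_{2} \le \|\mathbf{X}^{\transpose}\|_{2}\|\mathbf{C}\mathbf{x}\|_{2}$ inside the min--max. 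A quick sanity check confirms tightness: taking $\mathbf{X}$ and $\mathbf{Y}$ to be the leading $k$ left and right singular vectors of $\mathbf{A}$ makes $\mathbf{M}$ diagonal with entries $\sigma_{1},\dots,\sigma_{k}$, so the bound reduces to the equality case of Cauchy--Schwarz, $\sum_{j=1}^{k}\sigma_{j} \le \sqrt{k}\,(\sum_{j=1}^{k}\sigma_{j}^{2})^{1/2}$.
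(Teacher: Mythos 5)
Your proposal is correct, but it routes the argument differently from the paper. Both proofs extract the $\sqrt{k}$ factor from the Cauchy--Schwarz bound of Lemma~\ref{lemma:inner-prod-ub}; the difference is in how the inner product is paired. The paper writes $\trace(\mathbf{X}^{\transpose}\mathbf{A}\mathbf{Y}) = \langle \mathbf{X}, \mathbf{A}\mathbf{Y}\rangle$, so the $\sqrt{k}$ comes from $\|\mathbf{X}\|_{\frob}$ and only the \emph{one-sided} compression $\|\mathbf{A}\mathbf{Y}\|_{\frob}^{2} \le \sum_{i=1}^{k}\sigma_{i}^{2}(\mathbf{A})$ remains to be shown; the paper establishes this with a self-contained elementary computation (its final auxiliary lemma, which expands $\|\mathbf{\Sigma}\mathbf{V}^{\transpose}\mathbf{Y}\|_{\frob}^{2}$ as a weighted sum $\sum_{j}\sigma_{j}^{2}z_{j}$ with $0 \le z_{j} \le 1$ and $\sum_{j}z_{j}=k$). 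You instead pair $\mathbf{I}_{k}$ with $\mathbf{M}=\mathbf{X}^{\transpose}\mathbf{A}\mathbf{Y}$, which pushes the burden onto the \emph{two-sided} compression inequality $\sigma_{j}(\mathbf{X}^{\transpose}\mathbf{A}\mathbf{Y}) \le \sigma_{j}(\mathbf{A})$ --- a standard fact, but, as you correctly note, not among the paper's lemmas, so you must either import it or prove it via the min--max characterization. Your route buys a symmetric treatment of $\mathbf{X}$ and $\mathbf{Y}$ and the intermediate statement $\|\mathbf{M}\|_{\frob}^{2} \le \sum_{j=1}^{k}\sigma_{j}^{2}(\mathbf{A})$, which is of independent interest; the paper's route stays entirely within its own toolbox at the cost of an asymmetric argument that uses the orthonormality of $\mathbf{Y}$ only through the final auxiliary lemma. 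One small wording caveat: in your sanity check, $\sum_{j=1}^{k}\sigma_{j} \le \sqrt{k}\,(\sum_{j=1}^{k}\sigma_{j}^{2})^{1/2}$ is an instance of Cauchy--Schwarz that is tight only when all $\sigma_{j}$ are equal, so the lemma's bound is not attained in general by that choice of $\mathbf{X},\mathbf{Y}$; this does not affect the validity of the proof.
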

\begin{proof}
   By Lemma~\ref{lemma:inner-prod-ub},
   \begin{align}
	  \lvert \langle \mathbf{X},\, \mathbf{A}\mathbf{Y}\rangle\rvert
	  &=
	  \bigl\lvert
	  \trace\bigl( \mathbf{X}^{\transpose} \mathbf{A} \mathbf{Y}\bigr)
	  \bigr\rvert
	  \nonumber\\&
	  \le
	  \|\mathbf{X}\|_{\frob} \cdot \|\mathbf{A}\mathbf{Y}\|_{\frob}
	  =
	  \sqrt{k} \cdot \|\mathbf{A}\mathbf{Y}\|_{\frob}.
	  \nonumber
   \end{align}
   where the last inequality follows from the fact that $\|\mathbf{X}\|_{\frob}^{2} = \trace\bigl(\mathbf{X}^{\transpose}\mathbf{X}\bigr) = \trace\bigl(\mathbf{I}_{k}\bigr) = k$.
   Further, for any $\mathbf{Y}$ such that $\mathbf{Y}^{T}\mathbf{Y}= \mathbf{I}_{k}$,
   \begin{align}
   		\|\mathbf{A} \mathbf{Y}\|_{\frob}^{2}
		\le
   		\max_{
			\substack{
				\widehat{\mathbf{Y}} \in \mathbb{R}^{n \times k}\\
				\widehat{\mathbf{Y}}^{\transpose}\widehat{\mathbf{Y}} = \mathbf{I}_{k} 
			}
		}
   		\|\mathbf{A} \widehat{\mathbf{Y}} \|_{\frob}^{2}
		=
		\sum_{i=1}^{k} \sigma_{i}^{2}(\mathbf{A}).
   \end{align}
   Combining the two inequalities, the result follows. 
\end{proof}
% --------------------------------------------------
% --------------------------------------------------
%\horsep
\newpage
\begin{lemma}
   For any real
   $m \times n$ matrix $\mathbf{A}$,
   and any $k \le \min\lbrace {m},\; {n}\rbrace$, 
      \begin{align}
   		\max_{
			\substack{
				{\mathbf{Y}} \in \mathbb{R}^{n \times k}\\
				{\mathbf{Y}}^{\transpose}{\mathbf{Y}} = \mathbf{I}_{k} 
			}
		}
   		\|\mathbf{A} {\mathbf{Y}} \|_{\frob}
		=
		\left(\sum_{i=1}^{k} \sigma_{i}^{2}(\mathbf{A}) \right)^{1/2}.
		\nonumber
   \end{align}
   The above equality is realized when the $k$ columns of~$\mathbf{Y}$ coincide with the $k$ leading right singular vectors of~$\mathbf{A}$.
\end{lemma}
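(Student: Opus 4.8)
The plan is to diagonalize the problem through the singular value decomposition of $\mathbf{A}$ and reduce the maximization to a transparent linear program over the squared row-norms of a matrix with orthonormal columns. First I would write $\mathbf{A} = \mathbf{U}\mathbf{\Sigma}\mathbf{V}^{\transpose}$ for the full SVD, with $\mathbf{U} \in \mathbb{R}^{m \times m}$ and $\mathbf{V} \in \mathbb{R}^{n \times n}$ orthogonal and $\mathbf{\Sigma}$ diagonal carrying the singular values $\sigma_{1}(\mathbf{A}) \ge \sigma_{2}(\mathbf{A}) \ge \dots \ge 0$. Since the Frobenius norm is invariant under left multiplication by the orthogonal $\mathbf{U}^{\transpose}$, we have $\|\mathbf{A}\mathbf{Y}\|_{\frob} = \|\mathbf{\Sigma}\mathbf{V}^{\transpose}\mathbf{Y}\|_{\frob}$. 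Setting $\mathbf{Z} \eqdef \mathbf{V}^{\transpose}\mathbf{Y}$, orthogonality of $\mathbf{V}$ gives $\mathbf{Z}^{\transpose}\mathbf{Z} = \mathbf{Y}^{\transpose}\mathbf{Y} = \mathbf{I}_{k}$, so $\mathbf{Z}$ ranges over exactly the same feasible set (all $n \times k$ matrices with orthonormal columns) as $\mathbf{Y}$ does. The problem thus becomes $\max_{\mathbf{Z}^{\transpose}\mathbf{Z}=\mathbf{I}_{k}} \|\mathbf{\Sigma}\mathbf{Z}\|_{\frob}^{2}$.

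Second, because $\mathbf{\Sigma}$ is diagonal, $\|\mathbf{\Sigma}\mathbf{Z}\|_{\frob}^{2} = \sum_{i} \sigma_{i}^{2}(\mathbf{A})\, p_{i}$, where $p_{i} \eqdef \sum_{j} Z_{ij}^{2}$ is the squared Euclidean norm of the $i$th row of $\mathbf{Z}$. I would then record the two constraints the $p_{i}$ inherit from $\mathbf{Z}^{\transpose}\mathbf{Z} = \mathbf{I}_{k}$. Summing gives $\sum_{i} p_{i} = \|\mathbf{Z}\|_{\frob}^{2} = \trace(\mathbf{Z}^{\transpose}\mathbf{Z}) = k$. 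Moreover each $p_{i} \in [0,1]$, because $p_{i} = (\mathbf{Z}\mathbf{Z}^{\transpose})_{ii}$ is a diagonal entry of $\mathbf{P} \eqdef \mathbf{Z}\mathbf{Z}^{\transpose}$, which is an orthogonal projector: $\mathbf{P}^{2} = \mathbf{Z}(\mathbf{Z}^{\transpose}\mathbf{Z})\mathbf{Z}^{\transpose} = \mathbf{P} = \mathbf{P}^{\transpose}$, whence $\mathbf{P}_{ii} = (\mathbf{P}^{2})_{ii} = \sum_{j}\mathbf{P}_{ij}^{2} \ge \mathbf{P}_{ii}^{2}$ and therefore $0 \le \mathbf{P}_{ii} \le 1$.

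Third, the objective is now a linear functional $\sum_{i} \sigma_{i}^{2}(\mathbf{A})\, p_{i}$ maximized over the polytope $\{\sum_{i} p_{i} = k,\ 0 \le p_{i} \le 1\}$. Since the coefficients $\sigma_{i}^{2}(\mathbf{A})$ are already sorted in nonincreasing order, a routine exchange argument shows the optimum loads the first $k$ coordinates: $p_{i} = 1$ for $i \le k$ and $p_{i} = 0$ otherwise, yielding the value $\sum_{i=1}^{k}\sigma_{i}^{2}(\mathbf{A})$. Taking the square root gives the claimed upper bound. For achievability I would exhibit the maximizer explicitly: choosing $\mathbf{Z} = [\mathbf{e}_{1}, \dots, \mathbf{e}_{k}]$ (the first $k$ standard basis vectors) realizes $p_{i}=1$ for $i\le k$ and $p_{i}=0$ otherwise, and unwinding $\mathbf{Y} = \mathbf{V}\mathbf{Z}$ shows the optimal $\mathbf{Y}$ consists precisely of the $k$ leading right singular vectors of $\mathbf{A}$, as asserted.

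The SVD reduction and the linear-programming maximization are entirely routine; the one genuinely substantive step is the bound $p_{i} \le 1$, and this is where the orthonormality of the columns is indispensable. Dropping this upper bound would replace the correct value $\sum_{i=1}^{k}\sigma_{i}^{2}(\mathbf{A})$ by the strictly larger $k\cdot\sigma_{1}^{2}(\mathbf{A})$ obtained by concentrating all the mass on the top coordinate. Identifying $\mathbf{Z}\mathbf{Z}^{\transpose}$ as an orthogonal projector and invoking idempotence is the cleanest route to $p_{i}\in[0,1]$; one could instead cite Ky Fan's trace maximum principle, but the self-contained projector argument keeps the lemma independent of external results.
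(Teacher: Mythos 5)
Your proposal is correct and follows essentially the same route as the paper: reduce via the SVD and unitary invariance of the Frobenius norm, express the objective as $\sum_i \sigma_i^2(\mathbf{A})\,p_i$ where the $p_i$ are the squared row norms of $\mathbf{V}^{\transpose}\mathbf{Y}$, bound each $p_i$ by $1$ with $\sum_i p_i = k$, and solve the resulting linear program. The only cosmetic difference is that you justify $p_i \le 1$ via idempotence of the projector $\mathbf{Z}\mathbf{Z}^{\transpose}$, whereas the paper invokes Bessel's inequality for the orthonormal columns of $\mathbf{Y}$ --- the same fact in different clothing.
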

\begin{proof}
	Let $\mathbf{U}\mathbf{\Sigma}\mathbf{V}^{\transpose}$ be the singular value decomposition of $\mathbf{A}$;
	$\mathbf{U}$ and $\mathbf{V}$ are $m \times m$ and $n \times n$ unitary matrices respectively, while $\Sigma$ is a diagonal matrix with $\Sigma_{jj} = \sigma_{j}$, the $j$th largest singular value of $\mathbf{A}$, $j=1, \hdots, d$, where $d \eqdef \min\lbrace {m}, {n} \rbrace$.
	Due to the invariance of the Frobenius norm under unitary multiplication,
   \begin{align}
   		\|\mathbf{A} \mathbf{Y}\|_{\frob}^{2}
		=
		\|\mathbf{U}\mathbf{\Sigma}\mathbf{V}^{\transpose} \mathbf{Y}\|_{\frob}^{2}
		=
		\|\mathbf{\Sigma}\mathbf{V}^{\transpose} \mathbf{Y}\|_{\frob}^{2}.
		\label{frob-norm-unitary}
   \end{align}
   %Note that the columns of $\mathbf{V}$ form an orthonormal basis of $\mathbb{R}^{n}$.
   Continuing from~\eqref{frob-norm-unitary},
   \begin{align}
		\|\mathbf{\Sigma}\mathbf{V}^{\transpose} \mathbf{Y}\|_{\frob}^{2}
		&=
		\trace\bigl(\mathbf{Y}^{\transpose}\mathbf{V}\mathbf{\Sigma}^{2}\mathbf{V}^{\transpose} \mathbf{Y}\bigr)
		\nonumber\\
		&=
		\sum_{i=1}^{k} \mathbf{y}_{i}^{\transpose} 
			\left( 
				\sum_{j=1}^{d} \sigma_{j}^{2} \cdot \mathbf{v}_{j} \mathbf{v}_{j}^{\transpose}
			\right)
			\mathbf{y}_{i}
		\nonumber\\
		&=
		\sum_{j=1}^{d} 
			\sigma_{j}^{2} \cdot \sum_{i=1}^{k}
			\left( \mathbf{v}_{j}^{\transpose} \mathbf{y}_{i}\right)^{2}.
		\nonumber
   \end{align}
   Let $z_{j} \eqdef \sum_{i=1}^{k} \bigl( \mathbf{v}_{j}^{\transpose} \mathbf{y}_{i}\bigr)^{2}$, $j=1, \hdots, d$.
   Note that each individual $z_{j}$ satisfies
   \begin{align}
   		 0 \le z_{j} \eqdef \sum_{i=1}^{k} \bigl( \mathbf{v}_{j}^{\transpose} \mathbf{y}_{i}\bigr)^{2}
		\le
		\|\mathbf{v}_{j}\|^{2} = 1,
		\nonumber
   \end{align}
   where the last inequality follows from the fact that the columns of $\mathbf{Y}$ are orthonormal.
   Further,
   \begin{align}
   		\sum_{j=1}^{d} z_{j} 
		&=  
		\sum_{j=1}^{d}\sum_{i=1}^{k} \bigl( \mathbf{v}_{j}^{\transpose} \mathbf{y}_{i}\bigr)^{2}
		=
		\sum_{i=1}^{k}\sum_{j=1}^{d} \bigl( \mathbf{v}_{j}^{\transpose} \mathbf{y}_{i}\bigr)^{2}
		\nonumber\\&=
		\sum_{i=1}^{k}\|\mathbf{y}_{i}\|^{2} = k.
		\nonumber
   \end{align}
   Combining the above, we conclude that
   \begin{align}
		\|\mathbf{A}\mathbf{Y}\|_{\frob}^{2}
		=
		\sum_{j=1}^{d} 
			\sigma_{j}^{2} \cdot z_{j}
		\le \sigma_{1}^{2} + \hdots + \sigma_{k}^{2}.
		\label{frob-prod-ub}
   \end{align}
   Finally, it is straightforward to verify that if $\mathbf{y}_{i} = \mathbf{v}_{i}$, $i=1, \hdots, k$, then~\eqref{frob-prod-ub} holds with equality.
\end{proof}

\begin{small}
\bibliographyappendix{cca}
\bibliographystyleappendix{icml2016}
\end{small}

\end{document}